\documentclass[11pt]{article}
\usepackage[margin=1in]{geometry}
\RequirePackage{amsthm,amsmath,amsfonts,amssymb}
\usepackage{enumitem}
\usepackage{float}
\RequirePackage[authoryear]{natbib}
\RequirePackage[colorlinks,linkcolor=blue, citecolor=blue,urlcolor=blue]{hyperref}
\RequirePackage{graphicx}
\usepackage{cleveref}
\crefname{assumption}{Assumption}{Assumptions}
\crefname{algorithm}{Algorithm}{Algorithms}
\crefname{example}{Example}{Examples}
\crefname{table}{Table}{Tables}
\crefname{subsection}{Section}{Subsections}
\crefname{section}{Section}{Sections}
\crefname{proposition}{Proposition}{Propositions}
\crefname{lemma}{Lemma}{Lemmas}
\crefname{theorem}{Theorem}{Theorems}
\usepackage[format=plain,justification=raggedright,singlelinecheck=false]{caption}
\usepackage{subcaption}
\usepackage{tabularx}
\usepackage{bm}
\usepackage{algorithm}
\usepackage{algpseudocode}

\def\cZ{{\cal Z}}

\def\cC{{\cal C}}

\def\cS{{\cal S}}

\def\reals{{\mathbb R}}

\def\eps{{\varepsilon}}

\def\E{{\mathbb E}}

\def\L0{{L_i}}

\def\<{\langle}
\def\>{\rangle}

\def\limsup{\rm{lim \, sup}}

\def\F{{\sf F}}

\def \Tr{{\rm Trace}}

\def\F{{\sf F}}

\def\sT{{\sf T}}

\def\v*{v_i}
\def\T*{T_i}

\def\u*{u_i}
\def\F*{F_i}

\newcommand{\mtx}[1]{\bm{#1}}

\def\bR{{\mtx{R}}}

\def\th{\theta}

\def\cX{\mathcal{X}}

\def\l1u{W}

\newcommand{\ajcomment}[1]{}

\makeatletter
\newcommand{\labitem}[2]{%
\def\@itemlabel{\text{#1}}
\item
\def\@currentlabel{#1}\label{#2}}
\makeatother

\DeclareMathAlphabet{\mathpzc}{OT1}{pzc}{m}{it}

\def\cP{\mathcal{P}}

\theoremstyle{plain}

\newtheorem{theorem}{Theorem}[section]
\newtheorem{lemma}[theorem]{Lemma}
\newtheorem{corollary}[theorem]{Corollary}
\newtheorem{assumption}{Assumption}

\theoremstyle{definition}

\newcommand{\st}{\;\mathsf{s.t.}\;}
\newcommand{\hems}{\mathsf{HEMS}}
\newcommand{\affil}[1]{{\normalfont\small\sffamily #1}}
\newcommand{\email}[1]{{\normalfont\small\texttt{#1}}}

\begin{document}

\title{Learning to Price Electricity for\\ Optimal Demand Response}


\author{ \begin{tabular}{c@{\hspace{2em}}c} Jing Shang & Mohammad Mehrabi \\ \affil{Stanford University} & \affil{Stanford University} \\ \email{jshang21@stanford.edu} & \email{mehrabi@stanford.edu}\\ \\ Xinyang Zhou & Mahmoud Saleh \\ \affil{National Laboratory of the Rockies} & \affil{Prime Coalition} \\ \email{xinyang.zhou@nlr.gov} & \email{moody.saleh@primecoalition.org} \\ \\ Andrey Bernstein & Stefan Wager \\ \affil{National Laboratory of the Rockies} & \affil{Stanford University} \\ \email{aberenstein@gmail.com} & \email{swager@stanford.edu} \end{tabular} }


\date{Draft version \today}
\maketitle

\begin{abstract}
There is considerable interest in using time-varying electricity prices to shape consumer demand response, and better align energy demand with renewable production. However, optimal prices generally vary over time in response to complex signals such as weather forecasts, sunrise/sunset times, and day-of-week patterns; and existing methods are not able to make efficient use of such rich contextual information. Here, we propose a neural-network-based algorithm for contextual energy pricing, modeling pricing as a Stackelberg game and leveraging a mean-field solution representation from Mehrabi et al.~(2024). The approach learns constrained mappings from contextual features to feasible price signals. We validate our approach by simulating the energy grid in several US cities, and show that incorporating contextual information can considerably increase the value of the demand response programs.
\end{abstract}

\noindent\textbf{Keywords:} 
Demand response; dynamic pricing; constraint machine learning; power systems.

\section{Introduction}

The share of renewables in electricity generation is growing, and rapidly ushering us towards a new operational regime in power systems. Wind and solar are now central sources of electricity in the United States, with their combined share of generation rising from less than $1\%$ in 2005 to $17\%$ in 2025; this share rises to $19\%$ when small-scale solar panels are also included \citep{EIA_WindSolar_2026}. This growth is driven by a number of factors. Renewables (e.g., utility-scale solar PV) are now among the lowest-cost new generation resources, and they can often be deployed faster than traditional fuel-based power plants \citep{Lazard_LCOE_2025}. In addition, the direct generation emissions of renewables are substantially lower than those of fossil-based resources, making renewables economically and operationally more attractive sources of new generation.
 
Despite the growing interest in renewable resources, their continued large-scale deployment leads to significant operational challenges. The key challenge is that the timing of renewable production is idiosyncratic and generally not controllable, and thus generally misaligned with demand \citep{EIA_DuckCurve_2023}. Because electricity is expensive to store, this supply-demand misalignment already prevents us from efficiently using all produced energy. For example, in 2024, CAISO indeed has curtailed approximately 3,000 GWh of solar supply in California \citep{EIA_CAISO_Curtailment_2025}; and we expect such challenges to only get more acute over time \citep{CAISO_FlexCapacity_2026}.
At the same time, utilities face growing pressure on their systems and the costs of operation from rapid load growth, including large new loads such as data centers. DOE reported in 2024 that U.S. data-center electricity use is expected to double or triple by 2028, while broader electrification of buildings and transportation is expected to place additional demands on distribution infrastructure \citep{DOE_DataCenterDemand_2024}.

Given this context, there is considerable interest in leveraging flexibility in residential consumer demand to better align consumption with renewable supply. This approach is mainly motivated by two primary reasons: 1) the residential sector approximately covers one-third of U.S. retail electricity consumption (so managing this demand might have a significant effect), and 2) many residential loads, including HVAC systems, water heating, appliance use, and electric-vehicle (EV) charging, have flexibility in timing \citep{EIA_ElectricityUse_2026}. For example, a household can pre-cool its home during a sunny afternoon and therefore lower air conditioning usage later in the evening (the peak-hour time), while keeping the indoor temperature within their desired comfort range. Similarly, an electric vehicle can be charged anytime before a certain deadline. In both cases, electricity consumption can be shifted across hours without requiring the consumer to sacrifice their desired energy goal.

Existing literature has considered the design of various mechanism for demand response, including via the use of time-varying prices \citep{adelman2019dynamic,crew1995theory,mehrabi2024optimalmechanismsdemandresponse,zhou2017incentive,zhou2019online}.
The existing work, however, has largely focused on a time-homogeneous setting,\footnote{Or quasi time-homogeneous, where different price schedules are used in, e.g., winter vs.~summer, but the price schedule is fixed within each season.} where the goal is to learn a fixed price schedule that can then be deployed every day. This is acceptable when the power system conditions are expected to be stable, or when the optimal price schedule does not vary significantly with exogenous factors. In grids with high penetration of renewables, however, this assumption can often be violated and result in poor demand management by fixed price schedules.

To illustrate this phenomenon, consider a simple example in which some days are sunny and others are cloudy. On sunny days, we have excess capacity provided by solar resources in the middle of the day; whereas on cloudy days we do not.
In \autoref{fig:heterogeneous_optimal_price}, we verify that optimal time-varying price schedules for sunny vs.~cloudy days are different; and that deploying prices optimized for a cloudy day on a sunny day will lead to poor performance.\footnote{Specifically, we observe the emergence of a ``duck curve'' in the net demand profile (i.e., total demand minus renewable supply). The duck curve is a recurring pattern observed in solar-rich electricity markets: Net demand experiences a deep dip around midday, when solar output is high, followed by a steep ramp-up in the early evening as solar supply declines while demand peaks---so that the overall profile resembles a duck.}
Clearly, in such a setting, it is desirable to have access to context-responsive prices that respond to day-by-day varying system conditions such as the weather.

\begin{figure}[t]
    \includegraphics[width=1\linewidth]{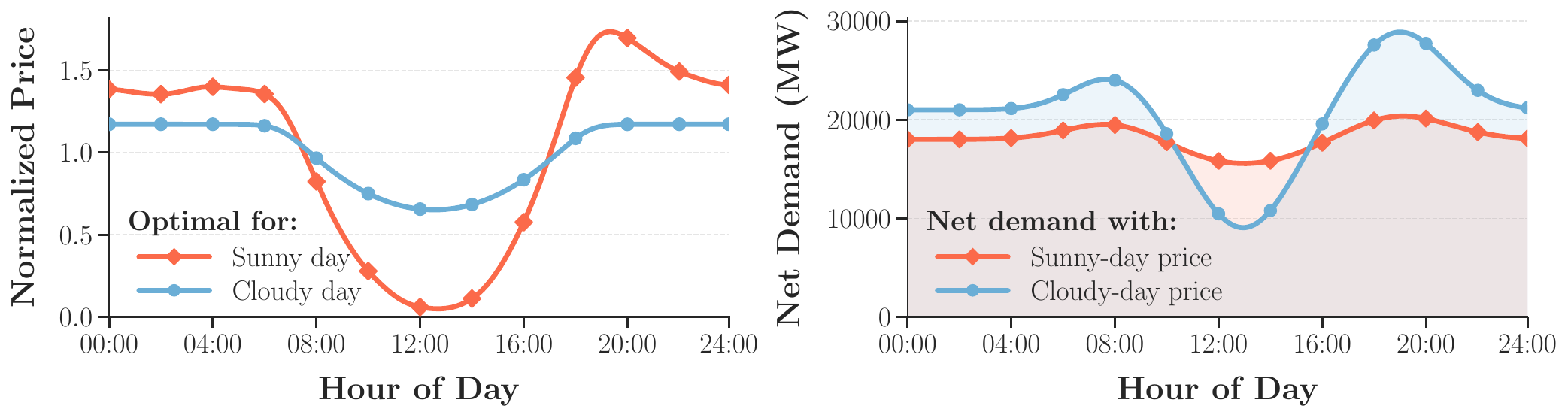}
    \caption{Illustration of the value of context-aware pricing. Left: Optimal price profiles for sunny and cloudy weather conditions obtained by minimizing a grid-risk objective based on net demand. The objective combines the $l_2$ norm and the sum of squared differences between consecutive periods, capturing overall demand magnitude and temporal fluctuations. Right: Net demand on a single (sunny) day when both price profiles from the left panel are applied. The sunny-day optimal price produces a more stable demand profile, while the cloudy-day price, applied to the same sunny day, yields poor performance with a pronounced duck curve.}
    \label{fig:heterogeneous_optimal_price}
\end{figure}

The main contribution of this paper is a neural-network based online learning algorithm
for contextual energy pricing. We consider a sequential model where, each day, the planner
observes a set of day-specific features (e.g., weather forecasts or demand signals); deploys
time-varying prices for the upcoming day; and then observes resulting consumer behavior.
We model consumer response via the indifference-set approach of
\citet{mehrabi2024optimalmechanismsdemandresponse}, whereby consumers choose their
consumption levels such as to minimize cost while respecting specific pre-determined
constraints (e.g., they run their HVAC such as to minimize operational costs while
keeping the temperature within pre-specified bands).
In earlier work, \citet{mehrabi2024optimalmechanismsdemandresponse} found that, under
their model, time-varying prices are achieve optimal demand response;\footnote{Specifically,
they show that---in a mean-field limit---prices can achieve demand-response benefits comparable
to what one could get under direct control, where a central planner selects each consumer's
consumption from within their indifference set.} and that optimal prices---in a time-homogeneous
setting---can be efficiently learned (i.e., with rates typical of first-order algorithms)
using only access to consumers' aggregated consumption
profiles at deployed prices. Here, we propose a neural network based algorithm that enables
us to use rich contextual information to learn day-specific prices, such as to accommodate
variable system conditions that are ubiquitous in practice. Specifically, we show the following: 

\begin{enumerate}
\item A first challenge is that the indifference-set approach asks us to learn price vectors constrained within a convex set.
We start by reviewing candidate algorithms for constrained-output learning, including projection-based methods, penalty-based relaxations, and differentiable optimization layers \citep{Bertsekas1997,Amos2017}. Essentially, these approaches either repeatedly project predicted prices back onto the feasible set, which can be unstable (with no guarantee that such methods converge); penalize constraint violations, which does not guarantee output feasibility; or embed an optimization problem inside the learning loop to always ensure that the output belongs to the feasible set, which can be computationally daunting. As such, we find existing methods for constrained-output learning not to be directly deployable in our setting.

  \item We introduce a cluster-based neural network architecture that learns context-dependent weights over a collection of feasible prototype prices. This design offers two major advantages: 1) convex combinations of feasible prototypes guarantee feasible prices, and 2) learning the optimal price requires only simple first-order updates, using standard back-propagation for the network weights and projected gradient steps for the prototype prices.

    \item  We evaluate our proposed contextual pricing method for demand management in realistic grid simulation environments powered by OpenDSS.\footnote{OpenDSS is an open-source distribution system simulator used to model power flows, loads, distributed generation, and operational constraints in electric networks.} Our experiments span multiple cities with heterogeneous grid and consumer conditions, including solar availability, demand levels, load composition, and consumer access to smart energy management devices. The results show that our contextual pricing framework delivers substantial demand-management gains over flat-rate pricing and vanilla context-agnostic dynamic pricing benchmarks. In particular, we show that the benefits of contextual pricing are more pronounced in solar-rich regions with high variability, where our proposed demand response solution can more effectively reveal the economic and environmental value of renewable integration.

\item We provide theoretical guarantees for the proposed architecture of contextual price learning. First, we show the \emph{universal approximation} property of our architecture. More precisely, we show that any contextual mapping with compact output constraints can be approximated arbitrarily well by the class of neural networks using sufficiently many clusters and hidden neurons. In addition, we analyze the \emph{learning dynamics} of our proposed architecture and show that the first-order updates (standard back-propagation for the network and projected gradient descent for the prototypes) still converge to a first-order stationary point of an underlying clustering problem, in which prices are clustered into groups without contexts (the vanilla context-free clustering problem).
\end{enumerate}

The rest of the paper is organized as follows. \cref{sec:conceptual} formulates contextual optimal pricing as a constrained learning problem and derives the optimal pricing representation. It also presents an end-to-end learning framework from observing exogenous conditions to updating context-dependent prices. \cref{sec:simulator} describes the household energy management system, device-level demand models, and grid simulator used to evaluate the proposed framework. \cref{sec:algeval} presents two sets of controlled simulation experiments.

The first set of experiments evaluates the benefits of contextual pricing deployed in settings with alternating weather conditions. We then compare our method with a pricing benchmark that is context-free and purely optimizes based on average system conditions. We then, in the second set of experiments, compare four contextual learning algorithms under a similar grid risk objective to assess their relative performance. In addition, \cref{sec:case} presents a case study on peak demand shaving across San Francisco, Atlanta, and Phoenix, illustrating how contextual pricing adapts to heterogeneous weather conditions and improves demand-response performance. We later, in \cref{sec:approx}, establish theoretical properties of the proposed architecture. More specifically, we first show the universal approximation guarantee for its expressive power and then present a convergence analysis of its learning dynamics to stationary points of an underlying clustering problem with no contexts.

\subsection{Related work} \label{sec:relatedwork}

Our work fits broadly withing the literature on demand response with time-varying prices, where we operate under the indifference-set approach of \cite{mehrabi2024optimalmechanismsdemandresponse}. This can be located more generally within the literature of works studying demand response as a mechanism for coordinating electricity consumption through price incentives \citep{peura2015dynamic,crew1995theory}. These mechanisms are often formulated as Stackelberg games, in which the grid operator acts as a leader setting prices while consumers respond optimally (minimizing their costs) by adjusting their consumption \citep{adelman2019dynamic,zhou2017incentive,zhou2019online}.

Regarding contextual pricing, there are two main research streams for demand management with side information. The first class of works incorporates exogenous system information (such as weather conditions, renewable generation forecasts, and demand forecasts) into electricity pricing models. Much of this literature focuses on price forecasting in wholesale electricity markets \citep{weron2014electricity,Morales2014}. In these models, the final goal is purely predictive rather than incentive design, which we focus on here for demand management.
The second stream of works incorporates exogenous system information directly into mechanism design, where the grid operator actively sets prices conditional on observed system conditions \citep{li2025day,feng2020stackelberg}. This can be inefficient, as only a small number of major states are identified, and the approach suffers from the lack of a smooth mapping from contexts to optimal prices. This can be seen as a coarse partitioning of the input space (fitting a piecewise constant function) to approximate the optimal mapping.
In our work, we aim to develop a learning-based pricing mechanism that learns a direct mapping from context to optimal incentives, adapting smoothly to exogenous conditions for optimal demand management.

In practice, many learning problems must output values that satisfy certain hard constraints \citep{Amos2017,agrawal2019differentiable}. As an example, we can consider the multi-quantile prediction problem where, conditional on covariates, outcome quantiles should be non-decreasing in the quantile level. If this prediction problem is solved without regard to these outcome conditions, then the estimated quantile values can cross and therefore violate the defining structure of a quantile function \citep{KoenkerBassett1978,ChernozhukovFernandezValGalichon2010}, making the result unacceptable for any meaningful use.
Existing solutions for machine learning with hard constraints on the outputs often fall into two main categories. One approach imposes constraints or corrections directly on the learned outputs, for example through post-hoc projection onto the set, or penalties to avoid falling outside feasible sets (e.g., discouraging quantile crossing \citep{ChernozhukovFernandezValGalichon2010,MoonJeonLeeKim2021,ShenJiaoLinHorowitzHuang2024}). The other approach designs the learning architecture so that feasibility is guaranteed by construction (e.g., parameterizing the output through monotone transformations or nonnegative increments \citep{Cannon2018}).
In our setting of contextual pricing, the hard constraint is not monotonicity as in multi-quantile prediction, but the feasibility of the price profile. Specifically, for every context $x$, the learned price must lie in the price set $\cP\subset \reals^d$, where this set is induced by the choice of grid risk function. In this problem, the first category of solutions for hard constraints, as discussed above, is not ideal, as it can suffer from instability in convergence, and smooth penalization can still output prices outside the feasible price set, violating the grid risk formulation and therefore leading to suboptimal demand management. We therefore focus on the second category and try to ensure that by construction the output is always in the feasible price set, while it enjoys desirable convergence guarantees.

Finally, our paper fits within a literature on deep learning algorithms---including
deep reinforcement learning---for learning optimal energy prices. Most existing contributions,
however, require access to much richer data or system information than we do.
Some existing methods assume that we are able to simulate realized consumption at
different counterfactual prices \citep{lu2018dynamic}, that we are able to learn
detailed consumer behavior models, e.g., by accessing building characteristics to predict
dynamics of HVAC operation \citep{yoon2020retail}, or that we have access to sufficient
price exploration to learn aggregated consumer demand functions in a
data-driven way \citep{xu2019deep}. \citet{fraija2024deep} consider a
coordinated model where a planner and households engage in iterated communication to
converge on prices. Finally, \citet{arlt2026deep} takes a more black-box approach, but
asks us to compare observed demand under time-varying prices to imputed demand under
fixed prices. Our approach does not require internal knowledge of household energy use;
does not rely on households engaging in repeated communication (which could potentially
be gameable); and does not require us to impute counterfactual demand. Rather, we are
able to learn simply from repeated observation of realized demand at deployed prices.

\section{Pricing for optimal demand response }\label{sec:conceptual}

We begin with the indifference-set view of demand response, which treats flexible electricity consumption as a choice among service-equivalent load profiles rather than as a reduction in the consumer's underlying energy needs. 
In this model, consumers are assumed to have high-level energy goals, such as keeping indoor temperature within a desired range or charging an EV within certain hours. Over a time horizon of $d \in \mathbb{N}$ periods (in our setting, for example, $d=24$), an \textit{indifference set} $\mathbf{R}\subset \reals^d$ represents the set of all consumption profiles that satisfy these goals. Thus, if $q\in \mathbf{R}$, then following the energy schedule $q$, that is, consuming $q_j$ units of energy in hour $j$, satisfies the consumer's requirements. 

One can view $q$ as the household's aggregate electricity consumption across all controllable devices. Since there may be many consumption profiles that satisfy the same high-level goals, the consumer is indifferent among them. For example, a household may consume more electricity at 2pm to pre-cool the home instead of turning on the cooling system after arriving home at 6pm. Indifference sets provide a formal way to capture this temporal flexibility: consumers can shift demand across time while preserving the same level of comfort or service.

In the indifference-set approach, pricing is modeled as a Stackelberg game between the grid operator, who sets the price, and households, who respond to the announced price.\footnote{The underlying assumption is that consumers are price-takers and have negligible individual incentives to behave strategically.} More specifically, the grid operator first announces a price schedule over the next $d$ periods. Households then react to this price by choosing consumption schedules that satisfy their energy goals while minimizing their total energy cost.

Formally, let $p\in\mathbb{R}^d$ denote the hourly electricity price, where $p_j$ is the price per unit of consumption in hour $j\in[d]$. Let $q_i\in\mathbb{R}^d$ denote the hourly consumption profile of household $i$, where $q_{ij}$ is the consumption of household $i$ in hour $j$. In our setting, we take $d=24$, corresponding to hourly price and consumption schedules. The two sides of the game solve the following problems.

\begin{enumerate}[label=\roman*)]

\item \textbf{Consumers.} In response to a price $p\in \reals^d$, each household, equipped with a home energy management system (HEMS), minimizes its total energy cost while satisfying its energy goals, encoded by its indifference set:
\begin{equation}
\label{eq:HEMS}
    q_i(p) \in \arg\min_{q_i\in \reals^d} p\cdot q_i\,, 
    \qquad \st \qquad q_i\in \mathbf{R}_i .
\end{equation}
Here, $\bR_i\subset \reals^d$ is the household's indifference set. It implies that the household's is indifferent within this set as long as their energy goals are satisfied. It is easy to observe that the selected consumption profiles are scale-invariant with respect to the price level: For any $\alpha>0$, the solution set of \eqref{eq:HEMS} is unchanged when the price vector $p$ is replaced by $\alpha p$. This follows because the indifference-set formulation targets demand \textit{substitution} over time rather than demand \textit{suppression}; hence, prices affect the timing of consumption, but not the underlying energy goals encoded by the indifference set.

\item \textbf{Grid operator.} On the other side of the game, the grid operator aims to manage risks induced by the aggregate net-demand curve. Given available renewable generation $Q^0\in \reals^d$ and aggregate demand $Q\in \reals^d$, the \textit{net demand} is $Q-Q^0$. Certain features of this curve are operationally important for the grid, such as its peak value $\|Q-Q^0\|_{\infty}$ or its total net demand $\|Q-Q^0\|_1$. More generally, for a grid risk function $\rho:\reals^d\to \reals_{\ge 0}$, and $n$ consumers consumption profile $q_i(p)$ collected under price $p$ the grid operator focuses on controlling
\begin{equation}
\label{eq:grid}
  p^*\in \arg \min_{p\in \reals^d} \;
    \rho\Big(\sum_{i} q_i(p)- Q^0\Big).
\end{equation}
\end{enumerate}
In this setup, we focus on risk functions $\rho(\cdot)$ that are convex and positively homogeneous of degree one; that is, $\rho(\alpha q)=\alpha \rho(q)$ for all $\alpha\ge 0$ and $q\in\reals^d$. A natural class of such risks is given by norms applied to linear transformations of the net-demand vector, or convex combinations of such terms. For example, for two linear maps $A$ and $B$ and any $\lambda\in[0,1]$, the function $\rho(q)=\lambda \|Aq\|+(1-\lambda)\|Bq\|$
is a valid risk function.

It is worth to mention that although this class of grid risk functions may appear simple, it is expressive enough to capture many operationally relevant demand management objectives, e.g., total net demand by $\ell_1$ norm and peak demand by the $\ell_\infty$ norm.

The pricing formulation in \eqref{eq:grid} is bilevel optimization problem. Mopre specifically, the grid operator chooses prices, but prices affect the grid objective only through the induced household responses $q_i(p)$. As a result, the objective need not be convex in $p$, and directly optimizing over price schedules can be a challenging optimization problem. The key insight of \citet{mehrabi2024optimalmechanismsdemandresponse} is that this interaction admits a simpler characterization in the mean-field limit, where the number of consumers grows to infinity. The full statement of this result is deferred to Appendix~\ref{sec:grad}, and here we use the main implication of their result.

Specifically, let $h_i(p):=p \cdot q_i(p)$ denote household $i$'s minimized energy cost under price $p$, with $q_i(p)$ denoting a corresponding optimal consumption profile as in \eqref{eq:HEMS}. We define the limiting average consumer cost as $H(p):=\E[h_i(p)]$, where the expectation is taken with respect to the randomness in the indifference set $\bR_i$ in \eqref{eq:HEMS}. In addition, let $q_0=\lim_{n\to\infty}Q^0/n$ denote the per-consumer renewable supply. Finally, let $\mathcal P$ be the dual set induced by the grid risk function $\rho$. More specifically, since $\rho(\cdot)$ is convex and positively homogeneous of degree one, it can be represented as the support function of a convex set $\mathcal P$; equivalently, the relation between grid risk function $\rho$ and the set of feasible prices $\cP$ is given as $\rho(q)=\sup_{p\in\mathcal P}p\cdot q$ \footnote{More precisely, for a convex, positively homogeneous risk function $\rho(\cdot)$, the associated dual set is
$\mathcal{P}=\{z\in\mathbb{R}^d: z\cdot q\le \rho(q), \ \forall q\in\mathbb{R}^d\}$, so that $\rho$ is the support function of $\mathcal{P}$ \cite[Claim A.2.24]{artstein2015asymptotic}. When $\rho$ is a norm, with dual norm $\rho^*(\cdot)$, this set admits the dual-norm representation
$\mathcal{P}=\{z\in\mathbb{R}^d:\rho^*(z)\le 1\}$.}.
Then the target price can be characterized as \citep{mehrabi2024optimalmechanismsdemandresponse}:
\begin{equation}
\label{eq:mean-field-price-characterization}
p^* \in \arg\max_{p\in\mathcal P}
\big\{
H(p)-q_0^\sT p
\big\}.
\end{equation}
This reformulation turns the original bilevel problem into a maximization problem over price profiles induced by the grid-risk criterion. Moreover, $H(\cdot)$ is concave in the price vector $p$, since each $h_i(\cdot)$ is the pointwise minimum of concave functions of $p$, and averages preserve concavity. Thus, the operator can characterize the optimal price through a concave programming problem.

\subsection{Contextual pricing}

For optimal demand response with pricing under heterogeneous system conditions, the optimal price characterization in \eqref{eq:mean-field-price-characterization} reads as follows. More precisely, for a context $X\in\mathcal X$ representing exogenous system conditions, the optimal price is given by
\begin{equation}\label{eq:p_star}
p^\star(X)\in \arg\max_{p\in \cP}
\E\big[ h_i(p)-q_0\cdot p \mid X\big]\,.
\end{equation}

Here, the conditional price characterization captures how the context $X$ affects the randomness in two components: i) the per-consumer scaled renewable supply $q_0$, since, for example, cloudy and sunny days induce different renewable-generation distributions; and ii) the consumer cost function $h_i(p)$ as household $i$'s minimized energy cost under price $p$,
through the context-dependent distribution of the indifference set $\mathbf{R}_i$. For instance, on a hot day, the set of consumption profiles that maintains the same indoor-temperature comfort can differ substantially from that on a cooler day. Thus, the distribution of feasible household schedules, and hence the induced cost function $h_i(.)$, changes with the daily context.

We focus on learning the mapping $p^*: \mathcal{X}\to \cP$, from each day's system conditions to the feasible price set. To this end, we focus on a class of parametrized learning models that can 1) be expressive enough to capture the complex relationship between contexts and prices, and 2) always generate a feasible price. The second component is especially important in our setup, as the feasible price set $\cP$ is induced by the grid risk function $\rho(.)$ and imposes a hard constraint on the relative magnitude of price levels needed for effective demand management.

To accommodate these two requirements, we use a prototype-based class of pricing rules. More precisely, let $Z=[z_1\ \cdots\ z_K]\in\reals^{d\times K}$ denote a matrix whose columns are $K$ price prototypes, with $z_k\in\cP$ for each $k\in[K]$, and let $\pi_\theta:\mathcal X\to\Delta_{K-1}$ be a contextual weighting rule, where $\Delta_{K-1}:=\{u\in\reals_{\ge 0}^K:\sum_{k=1}^K u_k=1\}$. The induced price rule is $p_{\theta,Z}(X)=Z\pi_\theta(X)$.

Further details on this learning architecture, together with a discussion of alternative hard-constrained output methods, are provided in Appendix~\ref{appendix:algreview}.

The heterogeneous price-characterization problem in \eqref{eq:p_star} can therefore be lifted to an optimization problem over the network parameters and price prototypes, given as:

\begin{equation}\label{eq:learning}
\max_{\theta\in\Theta,\; Z\in\mathcal Z} \;
\E_X\Big[
    \E\big[
        h_i(Z\pi_\theta(X))-q_0 \cdot Z\pi_\theta(X)
        \mid X
    \big]
\Big],
\end{equation}
where $\mathcal Z:=\{[z_1\ \cdots\ z_K]: z_k\in\cP\}$ and $\Theta$ denotes the admissible parameter set.

Having characterized the objective for the contextual price rule $Z\pi_\theta(\cdot)$, we now turn to how this rule can be learned from operational data. The relevant feedback structure is naturally sequential. On each day, the operator observes the context, deploys a day-ahead price schedule, and then observes the realized consumption profiles at the end of the day. Since the household cost is given by $h_i(p)=p\cdot q_i(p)$, the realized objective value can be precisely calculated from household-level consumption observations via smart meters. This makes it possible to learn the parameters $(\theta,Z)$ by sequential sample-average optimization.  

A naive approach would use zeroth-order methods, since the objective value can be evaluated after each deployed price. However, zeroth-order optimization can converge slowly, especially in high-dimensional parameter spaces and in the presence of noisy feedback \citep{DuchiJordanWainwrightWibisono2015}. The indifference-set formulation provides a computational advantage: it gives first-order information from the same observed consumption profiles. The next theorem shows that the gradient of the objective function in \eqref{eq:learning} can be written directly in terms of the realized aggregate demand response. This is useful both computationally and operationally. It enables first-order learning methods, while preserving privacy in the sense that the operator does not need to observe each household's preferences or indifference set $\bR_i$.

\begin{theorem}\label{thm:grad}
For a context $x\in \cX$, let $\mathsf J_\theta(x)\in\mathbb{R}^{m\times K}$ denote the transposed Jacobian of $\pi_\theta(x)$ with respect to $\theta$; that is, if $\theta\in\mathbb{R}^m$ and $\pi_\theta(x)\in\mathbb{R}^K$, then
\begin{equation}\label{eq:J}
[\mathsf J_\theta(x)]_{\ell,k}
:=
\frac{\partial [\pi_\theta(x)]_k}{\partial \theta_\ell},
\qquad \ell\in[m],\ k\in[K].
\end{equation}
Then, for the sample average objective in \eqref{eq:learning}, the gradient with respect to parameter $\theta$ is given by
\begin{equation}
\label{eq:chain-rule}
\nabla_{\theta}
\left[
    \frac{1}{n}\sum_{i=1}^n h_i(Z\pi_\theta(x))
    - q_{0}\cdot Z\pi_\theta(x)
\right]
=
\mathsf{J}_{\theta}(x)Z^\sT
\left(
    \frac{1}{n}\sum_{i=1}^n q_i(Z\pi_\theta(x))
    - q_{0}
\right).
\end{equation}
Here, $q_i(p)$ is the selected consumption by $\hems$ under price $p$ as given in \eqref{eq:HEMS}. 
\end{theorem}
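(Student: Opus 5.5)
The plan is to use the chain rule, with the only non-trivial input being an envelope (Danskin-type) identity for each household's minimized cost. Write $p(\theta) := Z\pi_\theta(x)\in\reals^d$. By the definition of $\mathsf J_\theta(x)$ in \eqref{eq:J}, the transposed Jacobian of the map $\theta\mapsto p(\theta)$ is $\mathsf J_\theta(x)Z^\sT\in\reals^{m\times d}$. Indeed, $\partial p_j/\partial\theta_\ell=\sum_k Z_{jk}\,[\mathsf J_\theta(x)]_{\ell,k}$. So for any differentiable $F:\reals^d\to\reals$ we have
\begin{equation*}
\nabla_\theta F(p(\theta))=\mathsf J_\theta(x)Z^\sT\nabla_p F(p(\theta)).
\end{equation*}
It therefore suffices to show that, for $F(p)=\frac1n\sum_i h_i(p)-q_0\cdot p$, we have $\nabla_p F(p)=\frac1n\sum_i q_i(p)-q_0$. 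The linear term is immediate, since $\nabla_p(q_0\cdot p)=q_0$. Linearity of the gradient then reduces everything to the claim $\nabla h_i(p)=q_i(p)$ for each $i$.

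The key step is the envelope identity. We have $h_i(p)=\min_{q\in\bR_i}p\cdot q$, a pointwise minimum of linear functions of $p$. Hence $h_i$ is concave, and it is the negative of the support function of $\bR_i$ evaluated at $-p$. For any minimizer $q_i(p)$ and any $p'$,
\begin{equation*}
h_i(p')\le p'\cdot q_i(p)=h_i(p)+(p'-p)\cdot q_i(p).
\end{equation*}
So $q_i(p)$ is a supergradient of $h_i$ at $p$. By Danskin's theorem, or standard support-function calculus, the superdifferential $\partial h_i(p)$ equals the set of minimizers $\arg\min_{q\in\bR_i}p\cdot q$, assuming $\bR_i$ is compact (and, if needed, replaced by its closed convex hull, which changes neither $h_i$ nor the minimizing value). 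Whenever this minimizer is unique, $h_i$ is differentiable at $p$ with $\nabla h_i(p)=q_i(p)$. Plugging this into the chain rule gives \eqref{eq:chain-rule}.

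The main obstacle is the differentiability caveat. When the HEMS problem \eqref{eq:HEMS} has multiple minimizers, $h_i$ has a kink at $p$, and the identity holds only in the supergradient sense. I would handle this in one of three ways. First, I could state the assumption that the selected response is the unique minimizer at $p(\theta)$, which holds for almost every $p$ because concave functions are differentiable almost everywhere. Second, I could interpret \eqref{eq:chain-rule} as producing an element of the Clarke/supergradient set of the composite, using that a supergradient of the concave outer function composed with a smooth inner map yields a valid generalized gradient. Third, I could appeal to the mean-field result deferred to Appendix~\ref{sec:grad}, where averaging over random $\bR_i$ makes $H$ differentiable. For the sample-average statement as written, I would adopt the first route and remark on the others.
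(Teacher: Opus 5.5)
Your proposal is correct and follows essentially the same route as the paper: the chain rule through the transposed Jacobian $\mathsf J_\theta(x)Z^\sT$, combined with the fact that the selected consumption $q_i(p)$ is a supergradient of $h_i$ at $p$, with the same caveat that the result is a genuine gradient only when the HEMS response is unique. Your one-line verification $h_i(p')\le p'\cdot q_i(p)$ is more self-contained than the paper's appeal to the mean-field theorem of Mehrabi et al.\ for a finite-$n$ claim; however, your third remark overstates that theorem, which gives only a supergradient inclusion for $H$ and not differentiability.
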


The proof of Theorem~\ref{thm:grad} is deferred to Appendix~\ref{sec:grad}.

This theorem immediately motivates a sequential learning scheme for the contextual price rule. On each day, the operator observes a forecasted context $\widehat X_t$, forms a feasible price vector from the current prototypes as $p_t=Z_t\pi_{\theta_t}(\widehat X_t)$, and announces this price schedule day-ahead. Households then respond through their $\hems$ by selecting consumption profiles that minimize their energy cost within their indifference sets. At the end of each day, the gird operator observes the realized average consumption $\bar{q}_t$ and the realized per-consumer renewable supply $q_{0,t}$, and the difference $\bar{q}_t-q_{0,t}$ is the observed net demand value. By Theorem~\ref{thm:grad}, this value can be used  as a first-order information for updating the contextual weights $\theta$ and the price prototypes $Z$. Therefore, our algorithm  learns from operational feedback without observing household preferences or indifference sets directly.  The sequential structure of this feedback loop is illustrated in Figure~\ref{fig:sequential-learning}, and the detailed implementation steps are given in Algorithm~\ref{alg:day-ahead-contextual-price-learning}.

\begin{figure}[t]
\centering
\includegraphics[width=0.95\textwidth]{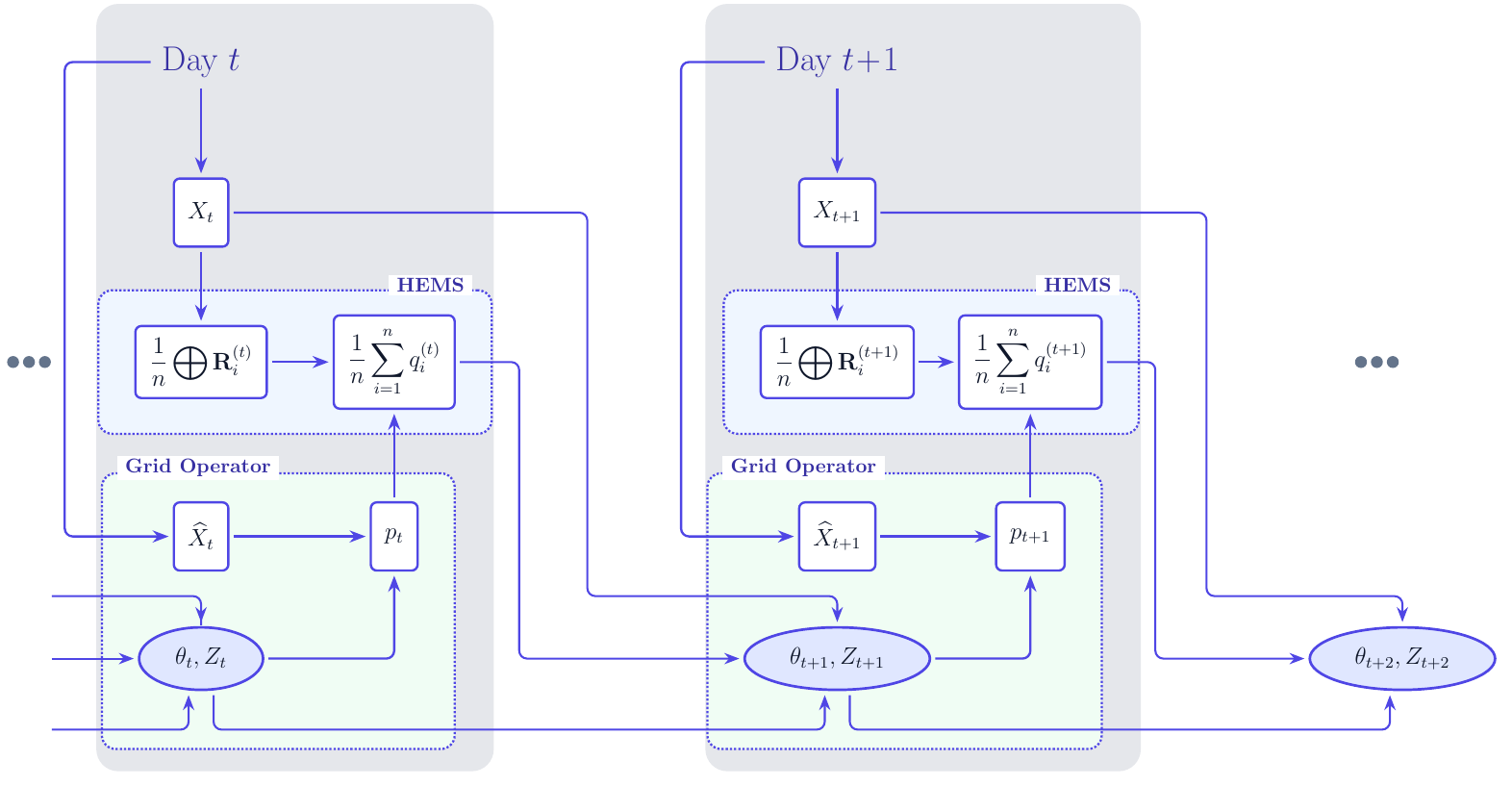}
\caption{Sequential contextual price learning. The operator announces the day-ahead price using the forecasted context and the current pricing rule. After operation, HEMS responses, realized consumption, and renewable supply reveal the actual net-demand signal, which is used to update the network weights and price prototypes. Better context forecasts lead to prices that are better aligned with realized system conditions and improve demand management.}
\label{fig:sequential-learning}
\end{figure}

\begin{algorithm}[t]
\caption{Day-Ahead Contextual Prototype-Based Price Learning}
\label{alg:day-ahead-contextual-price-learning}
\begin{algorithmic}[1]
\Require Number of periods $d\in \mathbb{N}$, number of households $n\in \mathbb{N}$, feasible price set $\cP\subset \reals^d$, number of price prototypes $K\in \mathbb{N}$, context space $\mathcal X$, step sizes $\eta_\theta,\eta_z>0$, initial network weights $\theta_0$, and initial price prototypes $\{z_{0,k}\}_{k=1}^K\subset \cP$.
\Statex \textbf{Output:} Price prototypes $Z_T=[z_{T,1}\ \cdots\ z_{T,K}]\in \reals^{d\times K}$ and contextual pricing rule $\pi_{\theta_T}:\mathcal X\to\Delta_{K-1}$, where $\Delta_{K-1}:=\{u\in\reals_{\ge 0}^K:\sum_{k=1}^K u_k=1\}$. 
\newline
For context $x$, price rule is $Z_T\pi_{\theta_T}(x)$. 
\Statex
\Statex \textbf{For} $t=0,1,\ldots,T-1$ \textbf{do}
    \Statex \hspace{\algorithmicindent}\textbf{Day-ahead pricing stage}
    \State \hspace{\algorithmicindent}Observe the forecasted context $\widehat X_t\in\mathcal X$.
    \State \hspace{\algorithmicindent}Stack the current prototypes columnwise as $Z_t=[z_{t,1}\ \cdots\ z_{t,K}]\in\reals^{d\times K}$.
    \State \hspace{\algorithmicindent}Compute the prototype weights $\pi_{\theta_t}$ and construct the price vector $p_t=Z_t\pi_{\theta_t}(\widehat X_t)$.
    \State \hspace{\algorithmicindent}Announce the price schedule $p_t$ for day $t$.
    \Statex
    \Statex \hspace{\algorithmicindent}\textbf{Operation stage}
    \State \hspace{\algorithmicindent} Each $\hems$ for $i\in [n]$ responds to $p_t$ via \eqref{eq:HEMS} and selects consumption profile $q_{i,t}(p_t)\in\reals^d$.
    \State \hspace{\algorithmicindent} The operator observes $\bar q_t:=\frac{1}{n}\sum\limits_{i=1}^n q_{i,t}(p_t)$ and the realized per-consumer renewable $q_{0,t}\in \reals^d$.
    \Statex
    \Statex \hspace{\algorithmicindent}\textbf{Learning stage}
\State \hspace{\algorithmicindent}Let $\mathsf{J}_{\theta_t}(\widehat X_t)$ denote the transposed Jacobian of the weight map $\pi_{\theta}(\widehat X_t)$ with respect to $\theta$ at $\theta=\theta_t$, as in \eqref{eq:J}. Update the network weights by
\[
    \theta_{t+1}
    =
    \theta_t
    +
    \eta_\theta\,
    \mathsf{J}_{\theta_t}(\widehat X_t)
    Z_t^\sT
    (\bar q_t-q_{0,t}).
\]
    \State \hspace{\algorithmicindent}Update each price prototype by
    \[
        z_{t+1,k}
        =
        \Pi_{\cP}
        \left(
            z_{t,k}
            +
            \eta_z\,[\pi_{\theta_t}(\widehat X_t)]_k(\bar q_t-q_{0,t})
        \right),
        \qquad k=1,\ldots,K.
    \]
\Statex \textbf{End for}
\Statex
\State \Return $Z_T$ and $\pi_{\theta_T}$, with final pricing rule $x\mapsto Z_T\pi_{\theta_T}(x)$.
\end{algorithmic}
\end{algorithm}

Algorithm~\ref{alg:day-ahead-contextual-price-learning} consists of several components that together define the end-to-end learning mechanism, which we describe in detail below. These components specify how system conditions are represented, how feasible price schedules are parameterized, how household responses are observed, and how the pricing rule is updated through operational feedback. Throughout the implementation, we take $d=24$, corresponding to hourly prices over a day.

\noindent\textbf{Covariates.}
The covariates $X_t$ used for pricing are available ahead of time, and cover features relevant to understanding both renewable production (e.g., solar radiation, cloud cover, length of sunlight, etc.), and consumer behavior (e.g., day of week effect, holidays). In our demonstration, we use the following covariates:
\begin{itemize}[itemsep=0pt, topsep=2pt]
\item Hourly temperature forecast ($^\circ$F).
\item Direct solar radiation forecast (W/m$^2$).
\end{itemize}

\noindent\textbf{Pricing.}
For network weights $\theta_t$ and prototype prices $Z_t=[z_{t,1}\ \cdots\ z_{t,K}]\in\mathbb{R}^{24\times K}$, with $z_{t,k}\in\cP$, the contextual weight map $\pi_{\theta_t}$ assigns weights in the simplex $\Delta_{K-1}$. Given the forecasted context $\widehat X_t$, the normalized demand-response price signal is $\widetilde p_t = Z_t\pi_{\theta_t}(\widehat X_t).$ When a retail-rate normalization or revenue target is required, the announced price can be written as
\[
p_t:=\lambda_t \widetilde p_t
=
\lambda_t Z_t\pi_{\theta_t}(\widehat X_t),
\qquad \lambda_t>0.
\]
Under the consumer response model in \eqref{eq:HEMS}, the multiplier $\lambda_t$ does not affect substitution behavior across time periods, since multiplying all hourly prices by the same positive scalar leaves the selected consumption profile unchanged. Thus, the learning problem focuses on the normalized price profile $\widetilde p_t\in\cP$, while $\lambda_t$ can be chosen separately to meet external regulatory or revenue objectives.

Without loss of generality, we assume that the price multiplier $\lambda_t = 1$ for all $t$, since a constant scaling factor does not affect the relative substitution behavior of consumption across time periods.\\

\noindent\textbf{Consumer Response.} We suppose  that each consumer $i$ operates a set of devices via a HEMS that monitors their individual needs. Specifically, each household may own devices associated with different energy needs and flexibility characteristics. Examples include:
\begin{itemize}
\item Inside lights: they are operated according to a pre-set schedule with no
flexibility, and contribute to the base load demand.
\item Heating, ventilation, and air conditioning (HVAC): they are used to maintain houses within target temperature bounds $T^{\min}_i, \,T^{\max}_i \in \mathbb{R}^{24}$.
\item Battery energy storage systems (BESS): they have an allowed state of charge (SOC) range of $[\underline{S}_i, \overline{S}_i]$ and a power range of $[\underline{pwr}_i, \overline{pwr}_i]$.
\end{itemize}

At a high level, the HEMS determines device-level operations in response to the posted prices $p_t$. We refer the reader to \cref{sec:HEMS} for a detailed mathematical formulation of individual devices; here, in \cref{HEMSop}, we outline the overall HEMS framework. Given the weather forecast, the HEMS first selects a target consumption level $q_{it}^{\mathsf{target}}\in \mathbb{R}^{24}$ that minimizes costs while satisfying pre-specified requirements. Then, it translates this target into a detailed operational plan, such as controlling the HVAC system to maintain indoor temperature trajectory at $T^{\mathsf{in}}_{it}\in\mathbb{R}^{24}$; or scheduling EV charging to reach a specified state-of-charge $x_{its}\in\mathbb{R}$ by time $s$, etc. Finally, the HEMS executes this plan, resulting in the realized hourly consumption $q_{it}\in \mathbb{R}^{24}$.

\begin{algorithm}[t]
\caption{HEMS Operation Framework}\label{HEMSop}
{\renewcommand{\baselinestretch}{1.4}\selectfont
\begin{algorithmic}
\State
\State[1] Pick a target consumption profile $q_{it}^{\mathsf{target}} = \arg\min_q\left\{q \cdot p_t : q \in \mathbf{R}_{it}\right\}\in \mathbb{R}^{24}$ that minimizes costs and meets pre-specified needs 

\State[2] Translate the chosen $q_{it}^{\mathsf{target}}$ into an operation plan during the day
\State[3] Follow the plan of operation, resulting in realized hourly consumption $q_{it}\in \mathbb{R}^{24}$
\State
\end{algorithmic}}
\end{algorithm}

We emphasize that following a structured operational plan, rather than mechanically adhering to a pre-specified consumption schedule, is critical for sustaining consumer welfare under our indifference set framework. Note that if the forecast inputs to the HEMS were perfectly accurate, then the planned consumption would match actual usage, i.e., $q_{it}^{\mathsf{target}} = q_{it}$. It is worth highlighting that any difference between forecast and realized conditions can result in a difference between planned and actual energy consumption. As an example, if the true outdoor temperature is higher than the forecast value, the computed operating schedule may no longer be optimal, resulting in higher cooling electricity consumption than what was originally expected. Such forecast errors introduce additional variability into aggregate load and may affect the performance of the learned pricing policy. \autoref{fig:forecast} illustrates the role of weather forecasts in the contextual learning framework and demonstrates how errors in the forecasting system may propagate through our pricing algorithm and demand management.

\begin{figure}[t]
    \centering
    \includegraphics[width=0.9\linewidth]{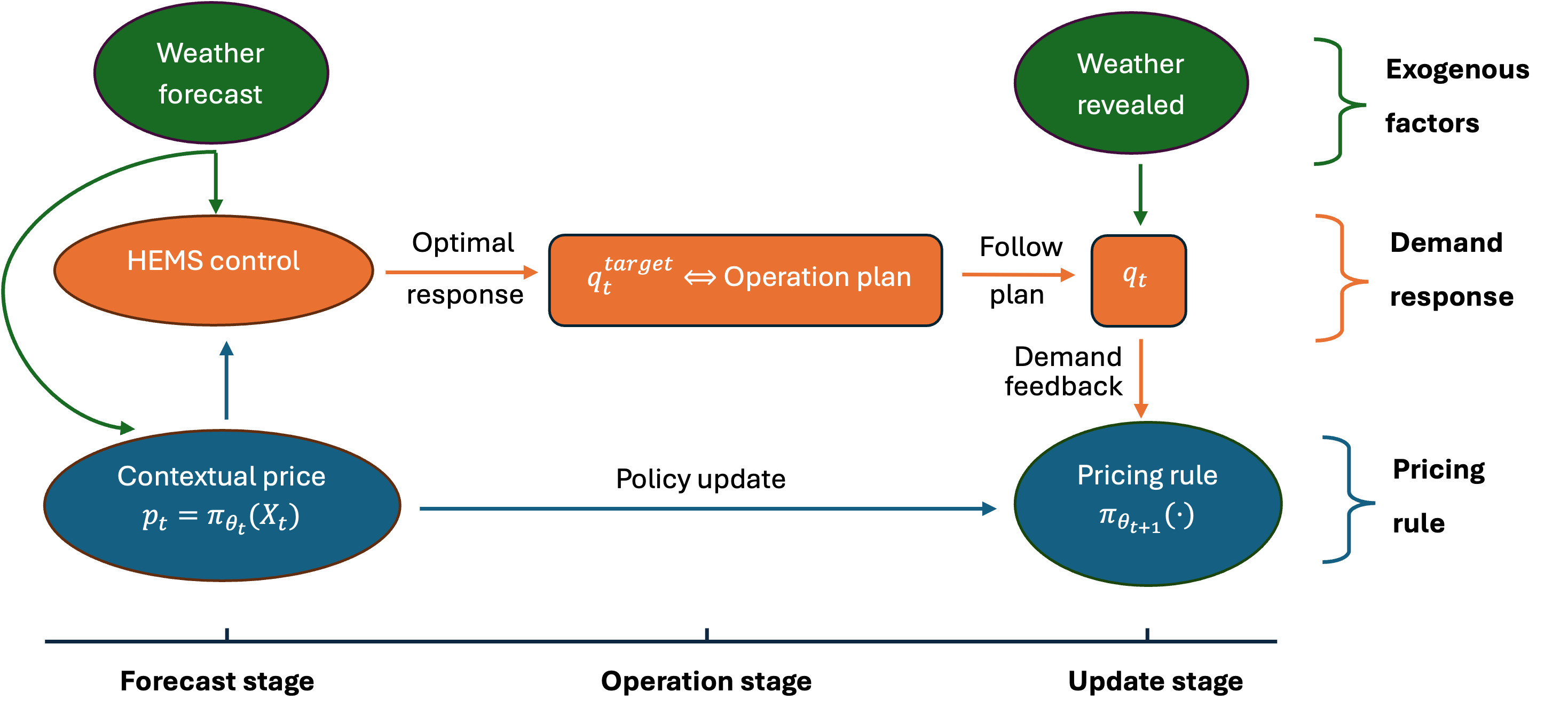}
    \caption{Role of weather forecasts in contextual price learning. Forecasted weather variables are used to generate context-dependent price signals prior to real-time operation. Households respond by optimizing device schedules (e.g., HVAC, BESS) based on the announced prices and anticipated conditions, yielding an operation plan. Forecast errors may lead to discrepancies between planned and realized consumption, as devices operate under realized conditions rather than the planned consumption trajectory. This introduces forecast-induced variability into demand feedback and subsequent price updates.}
    \label{fig:forecast}
\end{figure}

\noindent\textbf{Generation Cost.}\label{sec:cost}
The grid then observes the aggregate user-level demand $\sum_{i=1}^nq_{it}$, and renewable production $Q^0_t$. It must supply non-renewable electricity $\Delta_t = \sum_{i=1}^nq_{it} - Q^0_t \in \mathbb{R}^{24}$ to meet demand, incurring a generation cost $\rho(\Delta_t)$. In our demonstration, the required non-renewable electricity generation $\Delta_t$ will be obtained from grid simulation using OpenDSS \citep{opendss}. We refer the reader to \cref{sec:grid} for detailed power system settings.

\noindent\textbf{Learning.}
Lastly, we update the prototype-based pricing rule $p_t=Z_t\pi_{\theta_t}(X_t)$. Theorem~\ref{thm:grad} characterizes the corresponding first-order update for the contextual weights $\theta_t$, while the prototype updates are projected onto the feasible set $\cP$. Feasibility is enforced by construction: each prototype satisfies $z_{t,k}\in\cP$, and the network output satisfies $\pi_{\theta_t}(X_t)\in\Delta_{K-1}$, so the normalized price $\widetilde p_t=Z_t\pi_{\theta_t}(X_t)$ remains in $\cP$ whenever $\cP$ is convex.

\section{Simulator design}\label{sec:simulator}
In this section, we describe the implementation of the HEMS control and the power system simulator. We utilize a distribution network from the SMART-DS dataset \citep{krishnan2017smart}, which is representative of a typical North American suburban feeder. We model a total of 486 residential households, each equipped with a HEMS. The households are distributed to resemble a typical suburban neighborhood, enabling the simulation to capture both local constraints (e.g., within neighborhoods) and overall system behavior at the substation level. 

We begin with a detailed consumer-level device modeling and then present the grid-level generation cost formulation. For further details on our configuration of each device type, we refer readers to Section IV of \cite{zhou2026unlocking}.

\subsection{Household device modeling} \label{sec:HEMS}
On a given day $t$, the HEMS translates the user’s needs for each device into an indifference set $\mathbf{R}_{it} \subset \mathbb{R}^{24}$, which contains all consumption profiles that satisfy the demand. In particular, we model these sets as follows:

\vspace{0.2cm}
\noindent\textbf{Inside lights.} Due to inflexibility of lighting demand, the indifference set is a singleton $q_{it}^{\mathsf{LIGHTS}}\in \reals^{24}$, i.e.,
\begin{equation}\label{eq:lights}
\mathbf{R}_{it}^{\mathsf{LIGHTS}} = \big\{q_{it}^{\mathsf{LIGHTS}}\big\}.
\end{equation}

\noindent\textbf{Heating, ventilation, and air conditioning (HVAC).} The HEMS starts by querying an hourly forecast $T_{it}^{\mathsf{out}} \in \mathbf{R}^{24}$ for the outdoor temperature of day $t$. Inside temperature bounds are given as $T^{\min}_i, \,T^{\max}_i$. The HVAC can spend energy $q_{its}^{\mathsf{AC}}$ on cooling and $q_{its}^{\mathsf{H}}$ on heating at each hour $s$. Then the HEMS uses a linear heat transfer model with non-negative parameters $\alpha_i$, $\beta_i^{\mathsf{AC}}$, $\beta_i^{\mathsf{H}}$, to forecast inside temperature based on outside temperature and energy usage:
\begin{align*}\label{eq:linear-heat}
T^{\mathsf{in}}_{its} = T^{\mathsf{in}}_{it(s-1)} + \alpha_i (T^{\mathsf{out}}_{its} - T^{\mathsf{in}}_{it(s-1)}) + \beta_i^{\mathsf{H}} q_{its}^{\mathsf{H}} - \beta_i^{\mathsf{AC}} q_{its}^{\mathsf{AC}}.
\end{align*}
The resulting indifference set characterizes all consumption profiles that satisfy the temperature constraints under forecasted conditions, i.e.,
\begin{equation}\label{eq:HVAC}
\begin{aligned}
\mathbf{R}_{it}^{\mathsf{HVAC}} = \{q_{its}^{\mathsf{H}} + q_{its}^{\mathsf{AC}}:   &\forall 1\leq s\leq 24 ,\;T_i^{\mathsf{min}} \leq (1-\alpha)^s T_{it0}^{\mathsf{in}} \\ & + \sum_{k=1}^s (1-\alpha)^{s-k}(\alpha T_{itk}^{\mathsf{out}} + \beta_i^{\mathsf{H}}q_{itk}^{\mathsf{H}} - \beta_i^{\mathsf{AC}}q_{itk}^{\mathsf{AC}}) \leq T_i^{\mathsf{max}}\}.
\end{aligned}
\end{equation}

\noindent\textbf{Battery energy storage system (BESS).} The batteries consume $q_{it}^{\mathsf{BESS}} \in \mathbb{R}^{24}$ that modifies its state of charge $S_{it} \in \mathbb{R}^{24}$, which satisfies
\begin{equation}\label{eq:BESS}  
\begin{aligned}
\mathbf{R}_{it}^{\mathsf{BESS}} = \{q_{it}^{\mathsf{BESS}}:\; &S_{it} = \mathcal{C}_i \cdot q_{it}^{\mathsf{BESS}} + \bm{1}\cdot S_{it}^0,\\
\quad \; \, &\underline{S}_i \leq S_{it} \leq \overline{S}_i,\quad
\underline{pwr}_i \leq q_{it}^{\mathsf{BESS}} \leq \overline{pwr}_i\},
\end{aligned}
\end{equation}
    where $\mathcal{C}_i$ denotes the charging matrix that captures the impact of charging and discharging power on the SOC, and $S_{it}^0$ represents the initial SOC for household $i$ on day $t$.

The household-level devices deployed in the simulator consists of the above three categories. For $M$  total number of devices, the aggregate indifference set for user $i$ on day $t$ is defined as the Minkowski sum of the device-level feasible sets in \eqref{eq:lights}--\eqref{eq:BESS}, more precisely we have
\begin{equation}
\mathbf{R}_{it}:= \bigoplus\limits_{m = 1}^M \mathbf{R}_{it}^{(m)} = \Big\{\sum_{m=1}^M q_{it}^{(m)}: q_{it}^{(m)}\in \mathbf{R}_{it}^{(m)}\Big\}\,.
\end{equation}
\subsection{Grid generation cost}\label{sec:grid} We adopt a cost function that captures both the temporal variation and the aggregate level of the consumption profile. Specifically, for number of periods $d=24$ we define $D \in \mathbb{R}^{d \times d}$ as a Toeplitz matrix encoding first-order differences, such that $q^TDq = (q_{d}-q_1)^2 + \sum_{j=1}^{d-1} (q_{j+1}-q_j)^2$. Let $K := I + \lambda D^TD$. The cost function is then defined as
\begin{equation} \label{eq:gridcost}
    \rho(q) = (||q||_2^2 + \lambda q^TDq)^{1/2} :=||q||_K.
\end{equation}
Under the dual-norm formulation in \eqref{eq:mean-field-price-characterization}, the price set becomes
\begin{equation*}
\mathcal{P} =\{z \in \mathbb{R}^{d}:z^T K^{-1}z\leq1\}.
\end{equation*}
Projection onto this set can be formulated as the optimization problem
\begin{equation*}
    \arg\min_p ||z-p||_2\qquad s.t.\ p\in\mathcal{P}.
\end{equation*}
This problem admits the following characterization: if $z^TK^{-1}z \leq 1$, then $p^* = z$; otherwise, $p^* = (I+\mu K^{-1})^{-1}z$, where $\mu>0$ is chosen such that $p^{*T}K^{-1}p^{*} =1$. The form of the solution follows from the first-order optimality conditions of the associated Lagrangian, and $\mu$ can be efficiently determined via a bisection method \citep{boyd2004convex}. In our experiments, this grid-level optimization is solved at each simulation step, and the resulting contextual price signal is provided to the HEMS for household device scheduling.

\section{Algorithm evaluation}\label{sec:algeval}
In this section, we evaluate the proposed algorithm in \cref{subsec:soft} from two complementary perspectives. First, we illustrate the benefits of incorporating contextual information into the pricing signal, particularly in environments with significant day-to-day variation, by comparing it with a naive feedback-based price update rule. Second, we benchmark several contextual learning algorithms introduced in Appendix~\ref{appendix:algreview} and highlight the numerical advantages of the proposed cluster-based soft classifier approach.

For both the simulations in this section and the case study presented in \cref{sec:case}, we utilize historical weather data and historical forecast weather data obtained from Open-Meteo\footnote{Historical realized and forecast weather data are available at \url{https://open-meteo.com/}.}.

\subsection{Learning heterogeneity}
We begin by illustrating the advantages of contextual price learning in a heterogeneous environment. In particular, when system conditions vary significantly across days, a homogeneous pricing rule that is oblivious to the contextual information of each day will converge to a price signal adapted to the average system condition, which performs poorly under heterogeneous system conditions. Contextual pricing, in contrast, allows the algorithm to adapt prices to the latest system conditions. To this end, we construct a hypothetical setting in which weather conditions follow an alternating pattern:

\begin{equation*}
    X_t = \begin{cases}
        X_{t,\text{type I}}, &\text{if } t \text{ odd},\\
        X_{t,\text{type II}}, &\text{if } t \text{ even},
    \end{cases}
\end{equation*}
where type I conditions correspond to high temperature and low solar irradiance, while type II conditions correspond to low temperature and high solar irradiance. These two regimes imply different optimal pricing schedules. Under type I conditions, electricity demand (particularly electricity used for cooling) is high while renewable supply is limited. As a result, higher daytime prices are desirable to encourage demand shifting toward off-peak hours. Under type II conditions, electricity demand is relatively low while renewable generation is abundant. In this case, lower daytime prices help encourage electricity consumption so that more renewable energy could be consumed at low costs.

\autoref{fig:price_switch} shows the price signals generated by both the feedback-only algorithm and the contextual learning algorithm for each day type, after training over 90 days under the alternating weather pattern. For the cluster-based soft classifier algorithm, we set the number of clusters to $K=2$. The algorithm successfully captures the heterogeneity between the two day types and learns distinct pricing profiles for each regime. In contrast, the feedback-only algorithm assumes a homogeneous environment and therefore converges to a single price signal. As a result, the learned price profile is approximately the average of the two regime-specific price signals obtained from the contextual learning approach.

\begin{figure}[t]
    \centering
    \includegraphics[width=0.75\linewidth]{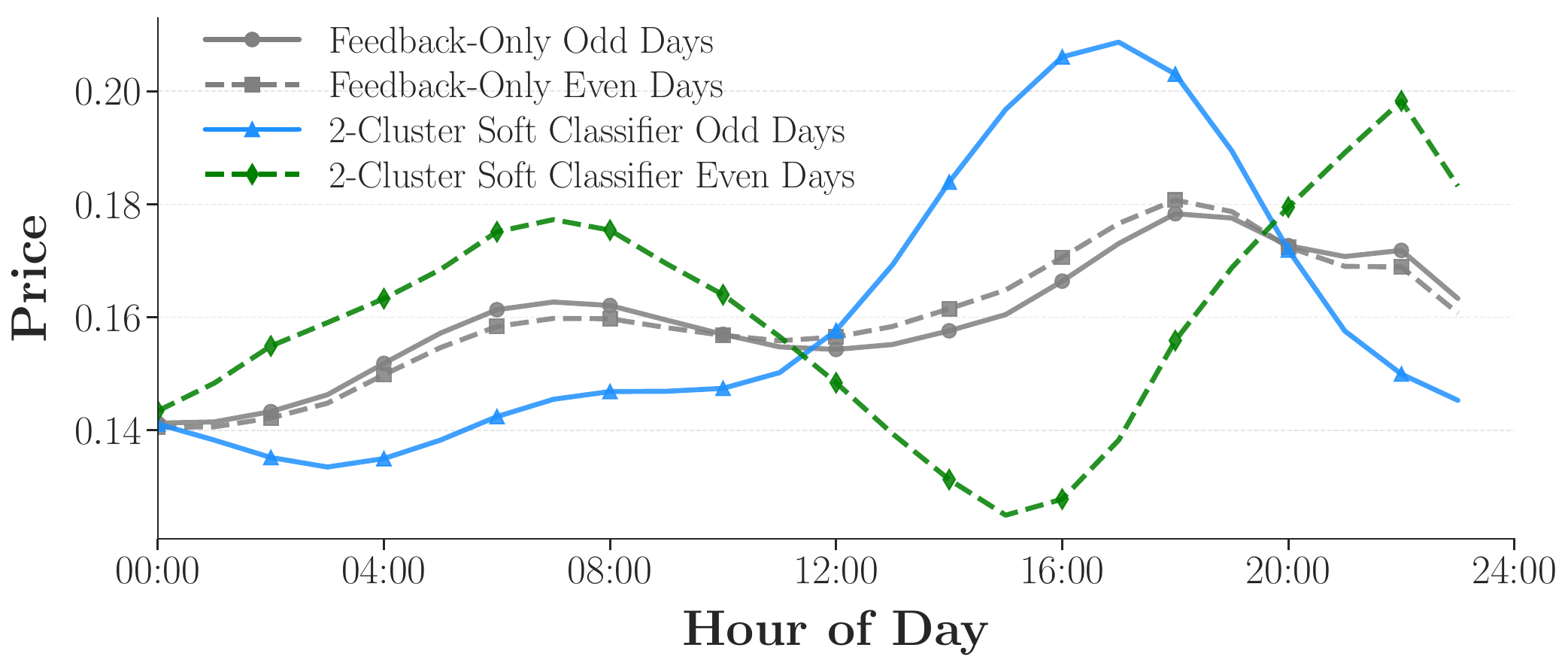}
    \caption{Comparison of price signals generated by the feedback-only and contextual learning algorithms under an alternating weather regime. The cluster-based soft classifier algorithm (with the number of clusters $K=2$) learns distinct price profiles corresponding to the two day types. In contrast, the feedback-only algorithm converges to a single price signal that is approximately the average of the two contextual prices.}
    \label{fig:price_switch}
\end{figure}

We also examine the resulting grid risk under the two pricing algorithms. As a key performance metric, we introduce peak demand shaving (PDS), defined as the percentage reduction in daily peak demand under the demand response program relative to a flat-rate policy, i.e.,
\begin{equation*}
    \text{PDS}= \frac{\max(\sum_{i=1}^{n}q_i^{\mathsf{flat}}-Q^0)-\max(\sum_{i=1}^{n}q_i^{\mathsf{DR}}-Q^0)}{\max(\sum_{i=1}^{n}q_i^{\mathsf{flat}}-Q^0)} \times 100\%.
\end{equation*}
Peak demand shaving is a widely recognized objective of demand response programs \citep{albadi2008summary}, as it mitigates grid stress and reduces system operating costs. \autoref{fig:PDS_switch} highlights the improvement in peak demand shaving achieved by incorporating contextual information into the pricing mechanism. The cluster-based approach consistently achieves higher reduction rate compared with the naive feedback-only algorithm. The intuition is straightforward: by utilizing forecast information, the contextual algorithm can quickly adapt to different day types by leveraging the learned cluster structure, effectively recalling regime-specific prices. In contrast, the feedback-only algorithm must gradually adjust its price signal through demand feedback, requiring several days to adapt. Consequently, it performs poorly in environments with substantial day-to-day variation.
\begin{figure}[t]
    \centering
    \includegraphics[width=0.75\linewidth]{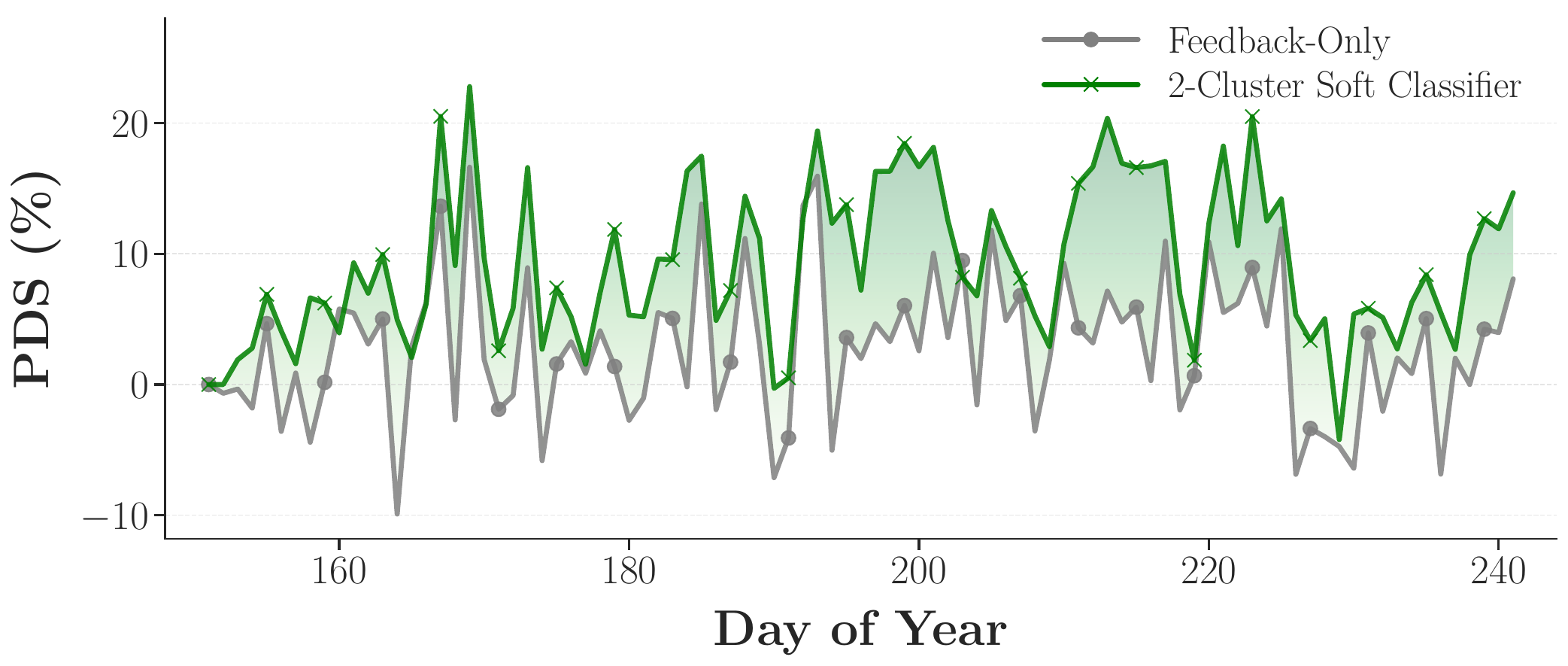}
    \caption{Comparison of peak demand shaving (PDS) achieved by the feedback-only and contextual pricing algorithms under an alternating weather regime. The shaded area represents the PDS gain achieved by contextual pricing over the feedback-only benchmark. The contextual pricing algorithm consistently attains greater peak demand reduction.}
    \label{fig:PDS_switch}
\end{figure} 

\subsection{Benchmark algorithms}
Next, we compare the performance of different contextual learning algorithms introduced in Appendix~\ref{appendix:algreview}. All four algorithms are evaluated under the same grid risk function, chosen as the $l_2$-norm, i.e. $\rho(z)=||z||_2$, since both the projection layer and interior point methods require differentiable constraints. All algorithms are trained over 90 days of weather data in Denver, CO, sampled from June to August 2022.

\autoref{fig:pds4meth} presents the peak demand shaving over the entire training period. Both the projection layer and interior point methods exhibit high volatility, occasionally producing negative peak demand shaving. This is consistent with the analyses in \cref{tab:summary}, where we emphasize that the projection layer method suffers from over-parameterization, while the interior point method exhibits instability near constraint boundaries. On the other hand, the proximal gradient method achieves the second best performance, despite relying on a weak dual characterization. Overall, the cluster-based soft classifier algorithm outperforms all other candidates, and is the only algorithm consistently achieving positive peak demand shaving.
\begin{figure}[t]
    \centering
    \includegraphics[width=0.75\linewidth]{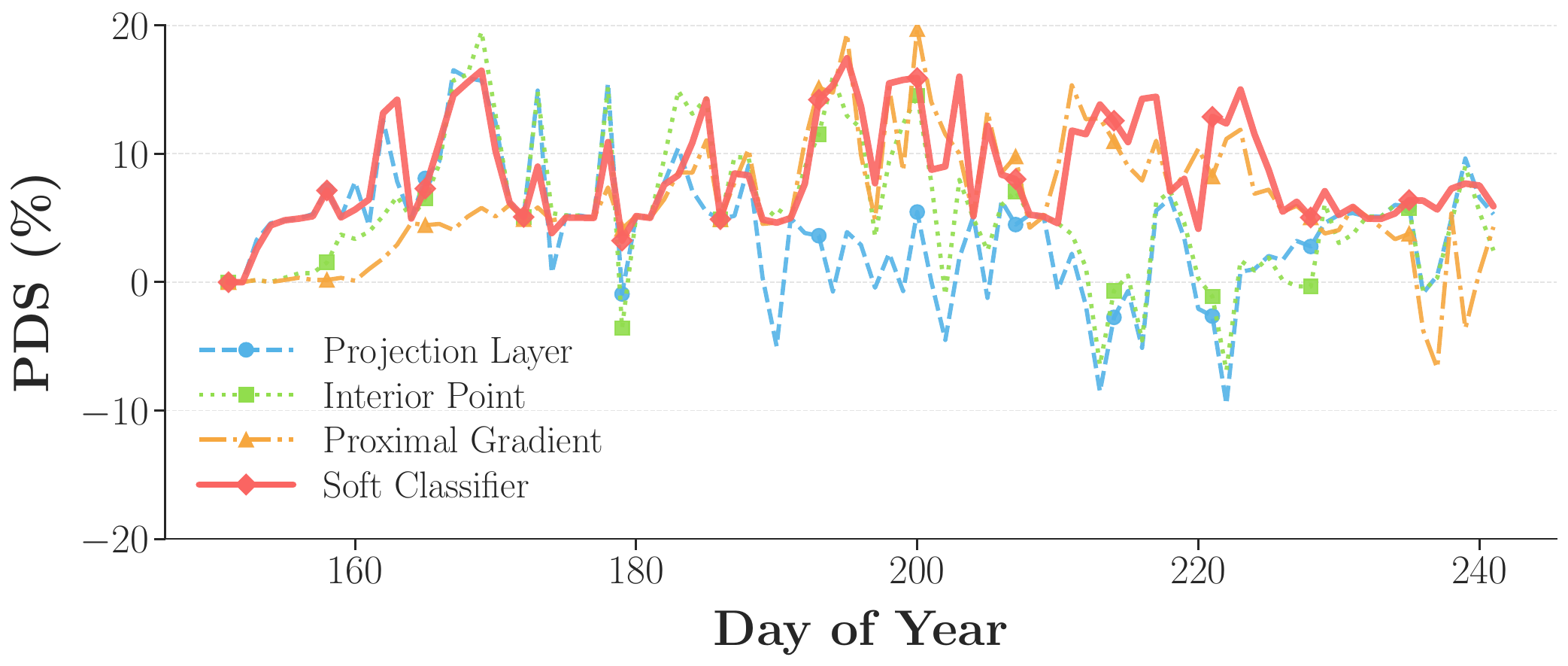}
    \caption{Comparison of peak demand shaving (PDS) across different contextual learning algorithms trained over 90 days of weather data in Denver, CO. Both the projection layer and interior point methods show high volatility, with occasional negative PDS values. The proximal gradient method performs moderately well, while the cluster-based soft classifier is the only algorithm that consistently achieves positive PDS.}
    \label{fig:pds4meth}
\end{figure}

\autoref{fig:PDScomp} provides a summarized comparison of the four algorithms in the form of a violin plot. The results indicate that the cluster-based soft classifier algorithm achieves the highest average reduction in peak demand while exhibiting the lowest variability. This suggests that the algorithm consistently delivers substantial peak reductions and avoids undesirable outcomes more effectively than the other candidates.
\begin{figure}[t]

    \centering
    \includegraphics[width=0.75\linewidth]{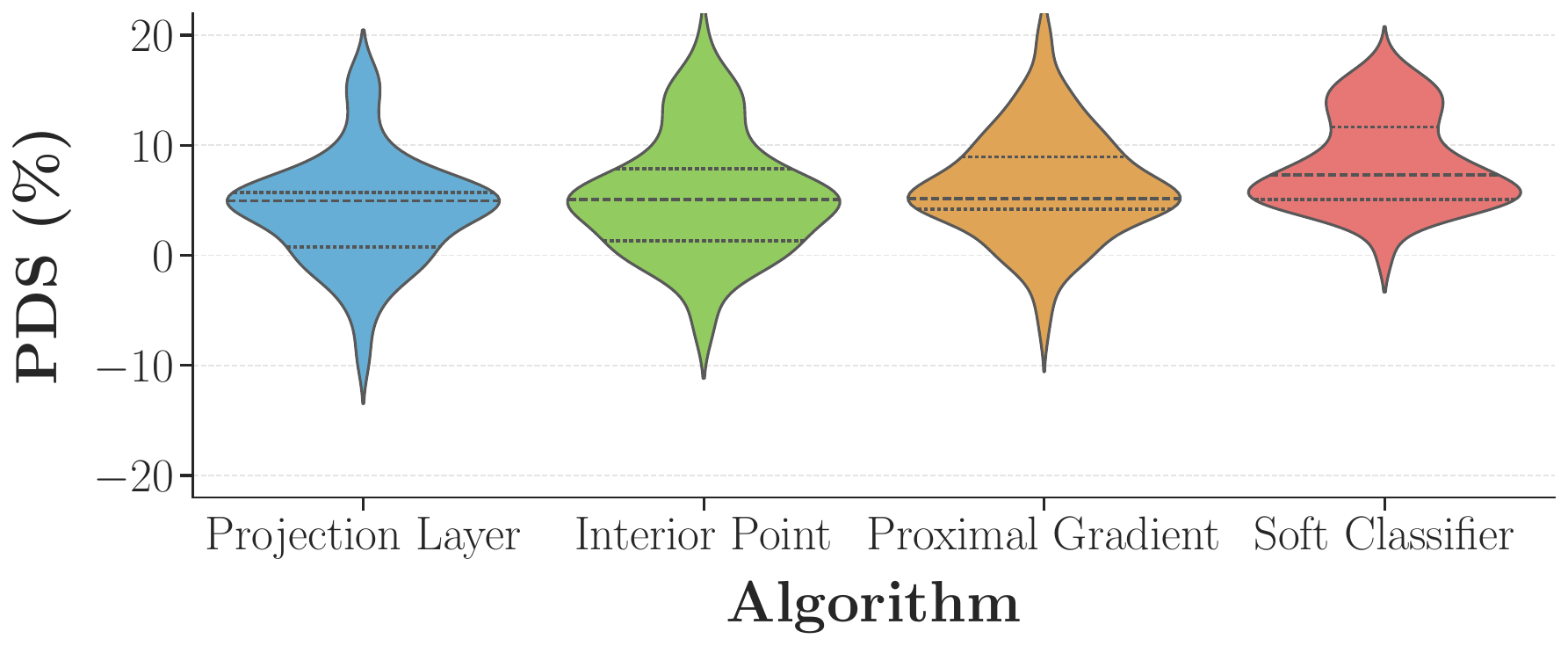}
    \label{fig:violin4meth}
    
    \caption{Violin plot comparing peak demand shaving (PDS) across the four contextual learning algorithms. The cluster-based soft classifier exhibits the highest average PDS with the smallest spread.}
    \label{fig:PDScomp}
\end{figure}

\section{Case study: value of demand response across cities}\label{sec:case}
Finally, we conduct a series of case studies across three U.S. cities with markedly different weather patterns, load characteristics, and renewable generation profiles. By conducting the analyses in realistic operating environments, we can examine the robustness of the learned pricing policies to exogenous variations, and highlight the practical benefits of context-aware pricing for large-scale electricity systems.

Specifically, we choose three cities showing significantly different climate conditions during the summer, as illustrated in \autoref{fig:weather3cities}. More precisely, 
\begin{itemize}
    \item San Francisco, CA. It is associated  with a Mediterranean climate with mild, relatively stable temperatures and moderate solar irradiance
    \item Atlanta, GA. It is associated with humid subtropical climate, with comparatively higher temperatures and lower effective solar irradiance due to cloud cover and rain
    \item Phoenix, AZ. It is associated with a hot desert climate, with persistently high temperatures and strong solar irradiance.
\end{itemize} 

These cities therefore span a diverse set of system conditions that directly affect both electricity demand (as it appears in cooling loads) and renewable generation. This diverse set of environments helps us show how heterogeneity in system conditions affects peak demand reduction. In addition, it helps us identify the system conditions under which contextual pricing yields the greatest operational benefits.

\begin{figure}[t]
    \centering
    
    \begin{subfigure}{0.48\linewidth}
        \centering
        \includegraphics[width=\linewidth]{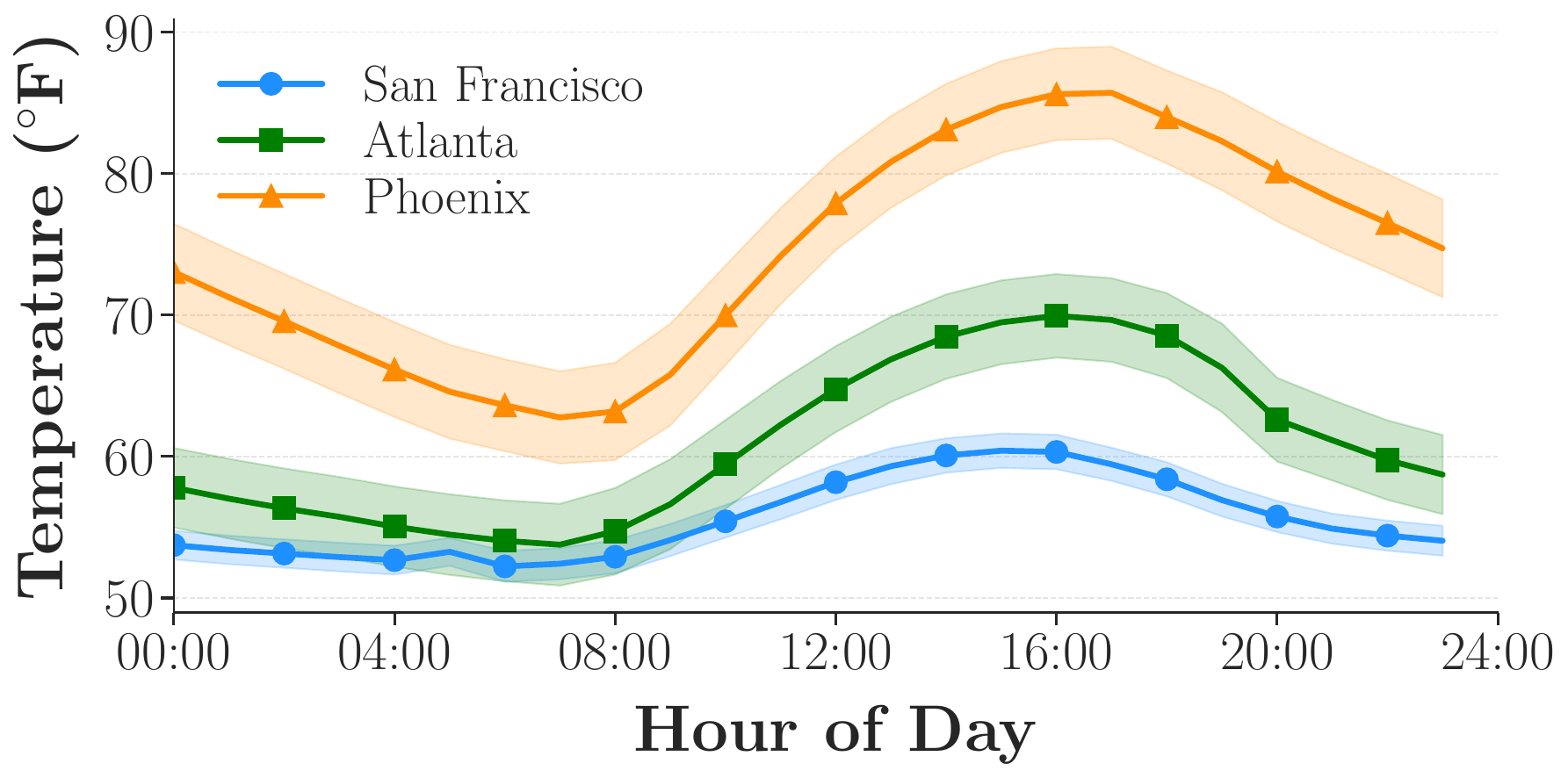}
        \caption{Temperature profiles}
        \label{fig:temp3cities}
    \end{subfigure}
    \hfill
    \begin{subfigure}{0.48\linewidth}
        \centering
        \includegraphics[width=\linewidth]{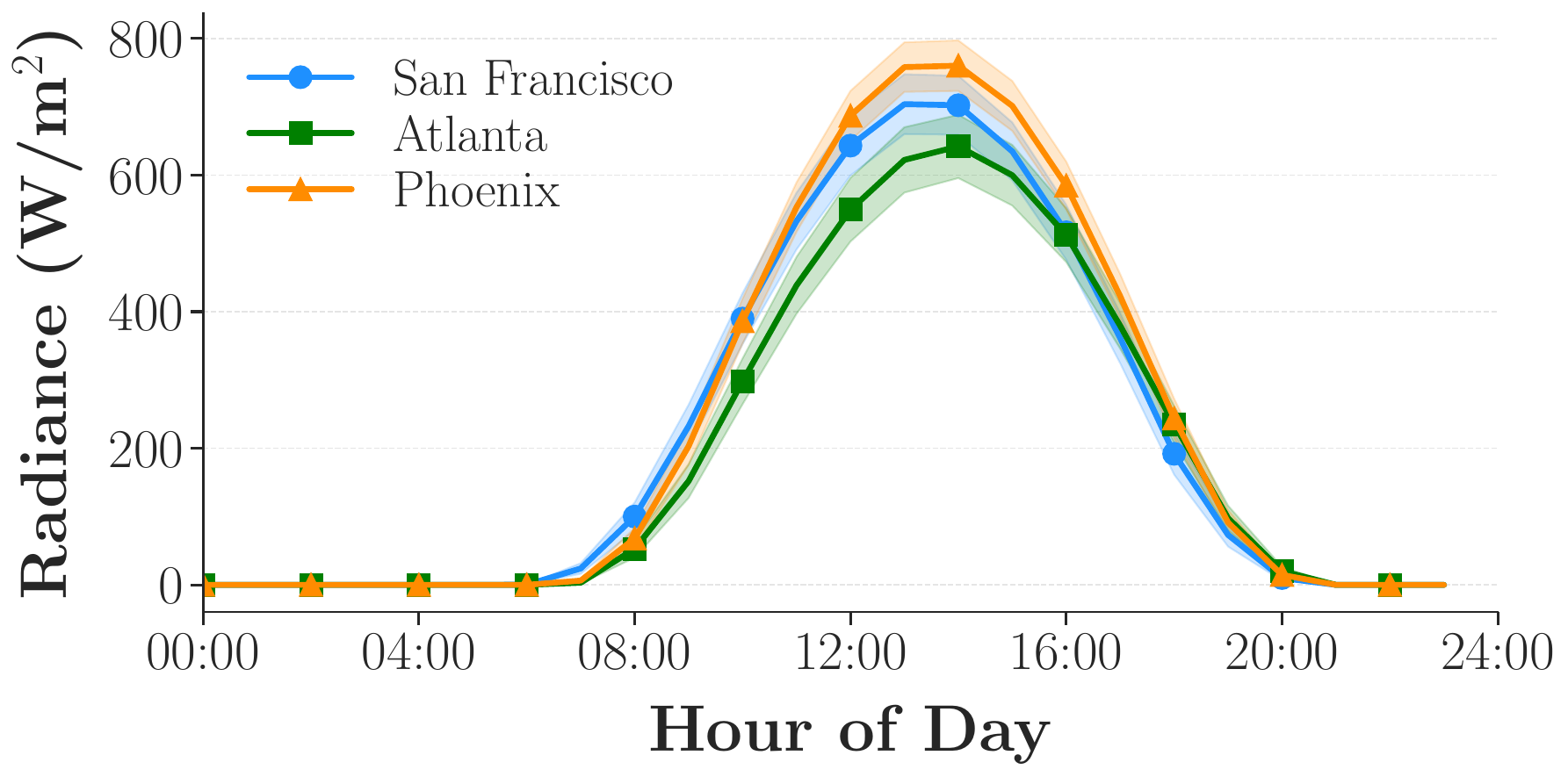}
        \caption{Solar irradiance profiles}
        \label{fig:rad3cities}
    \end{subfigure}
    
    \caption{Weather profiles of San Francisco, Atlanta, and Phoenix in July 2022. Panel (a) shows the daily temperature profiles; panel (b) shows the daily solar irradiance profiles. Solid lines represent the mean daily profiles across the month, while the shaded regions indicate the corresponding day-to-day variability ($\pm 1$ standard deviation).}
    \label{fig:weather3cities}
\end{figure}

\vspace{3mm}
\autoref{fig:PDS3cities} shows the peak demand shaving for each of the three cities. An extended case study analysis is provided in Appendix \ref{appendix:case}. We assume that all households are equipped with home energy management systems (HEMS) that coordinate indoor lighting, HVAC operation, and battery storage.

First, peak demand shaving is positive on the vast majority of days across all three cities, with only a small number of exceptions. This indicates that the proposed learning framework consistently produces price signals that outperform the flat-rate benchmark in mitigating peak demand. In other words, even without perfect forecast, the learned contextual pricing rule systematically improves load shaping relative to static pricing.

Second, we observe substantial heterogeneity in the magnitude of peak reduction across cities. The algorithm achieves the highest average peak demand shaving in Atlanta, while the lowest average reduction is observed in San Francisco. These differences can be understood through the lens of demand flexibility, which plays a central role in the indifference-set-based framework. The effectiveness of contextual pricing depends critically on the extent of flexible load that can respond to time-varying price signals. In our setting, HVAC is the primary device whose usage is strongly influenced by exogenous factors. When cooling needs are either minimal or effectively inflexible, the effect of demand substitution diminishes. 

This is reflected in the outcomes for San Francisco and Phoenix. In San Francisco, relatively mild summer temperatures lead to limited air-conditioning demand, reducing the available flexible load to shift. In Phoenix, although cooling demand is substantial, extreme temperatures constrain flexibility because reducing HVAC operation would violate comfort-level requirements. In contrast, the results for Atlanta suggest that the proposed mechanism performs particularly well in environments with moderate but variable cooling demand, leading to sufficient flexibility for demand substitution without compromising comfort constraints. This cross-city variation reveals that performance is not uniform across environments but instead depends on underlying climate-related factors that shape demand flexibility as well as renewable generation.

\begin{figure}[t]
    \centering

    \begin{subfigure}{0.32\linewidth}
        \centering
        \includegraphics[width=\linewidth]{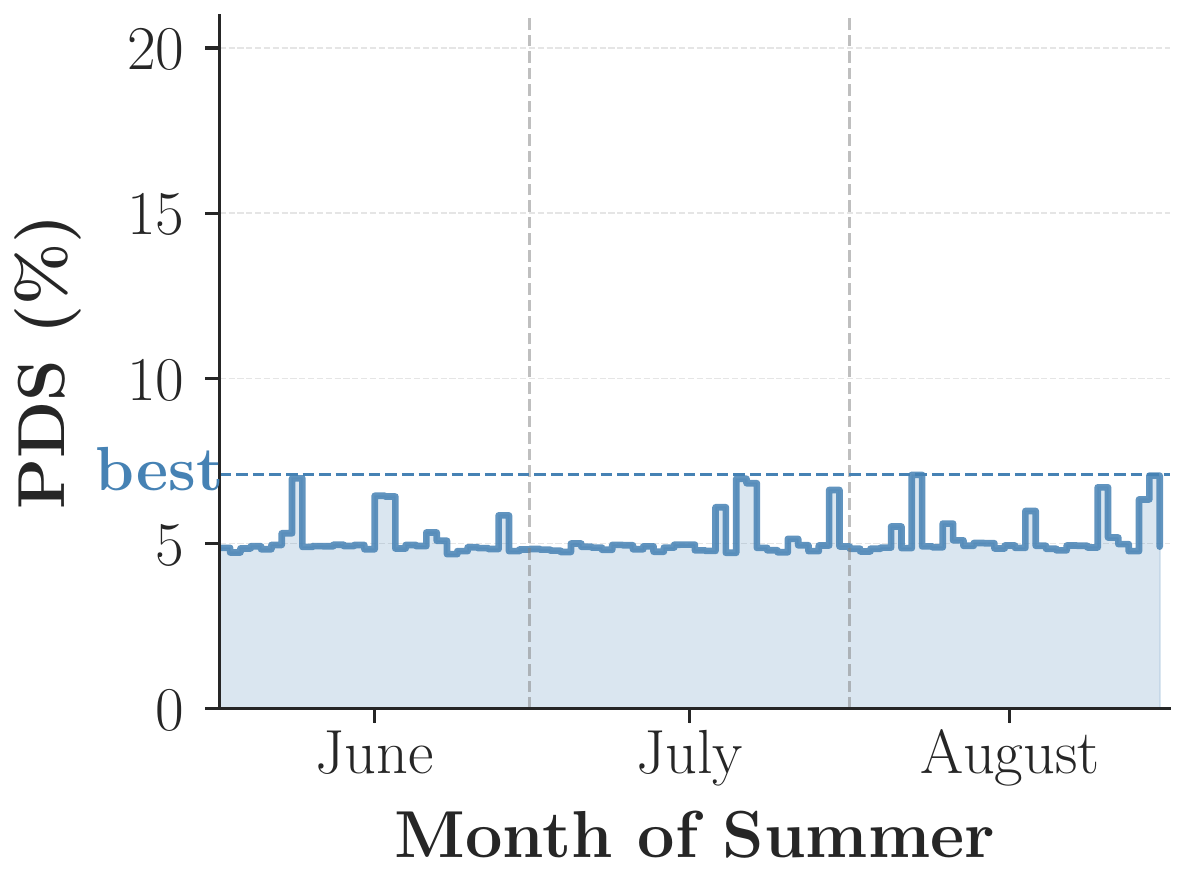}
        \caption{San Francisco}
    \end{subfigure}
    \hfill
    \begin{subfigure}{0.32\linewidth}
        \centering
        \includegraphics[width=\linewidth]{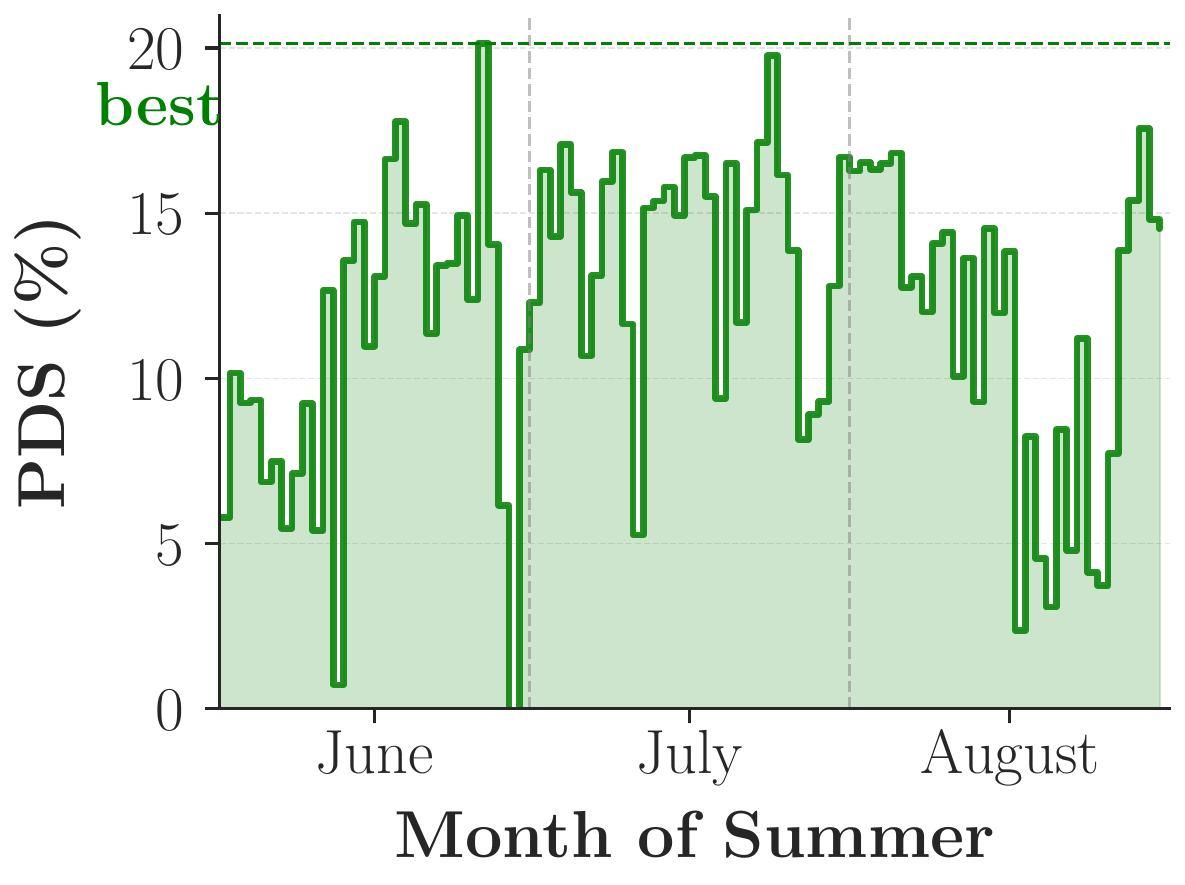}
        \caption{Atlanta}
    \end{subfigure}
    \hfill
    \begin{subfigure}{0.32\linewidth}
        \centering
        \includegraphics[width=\linewidth]{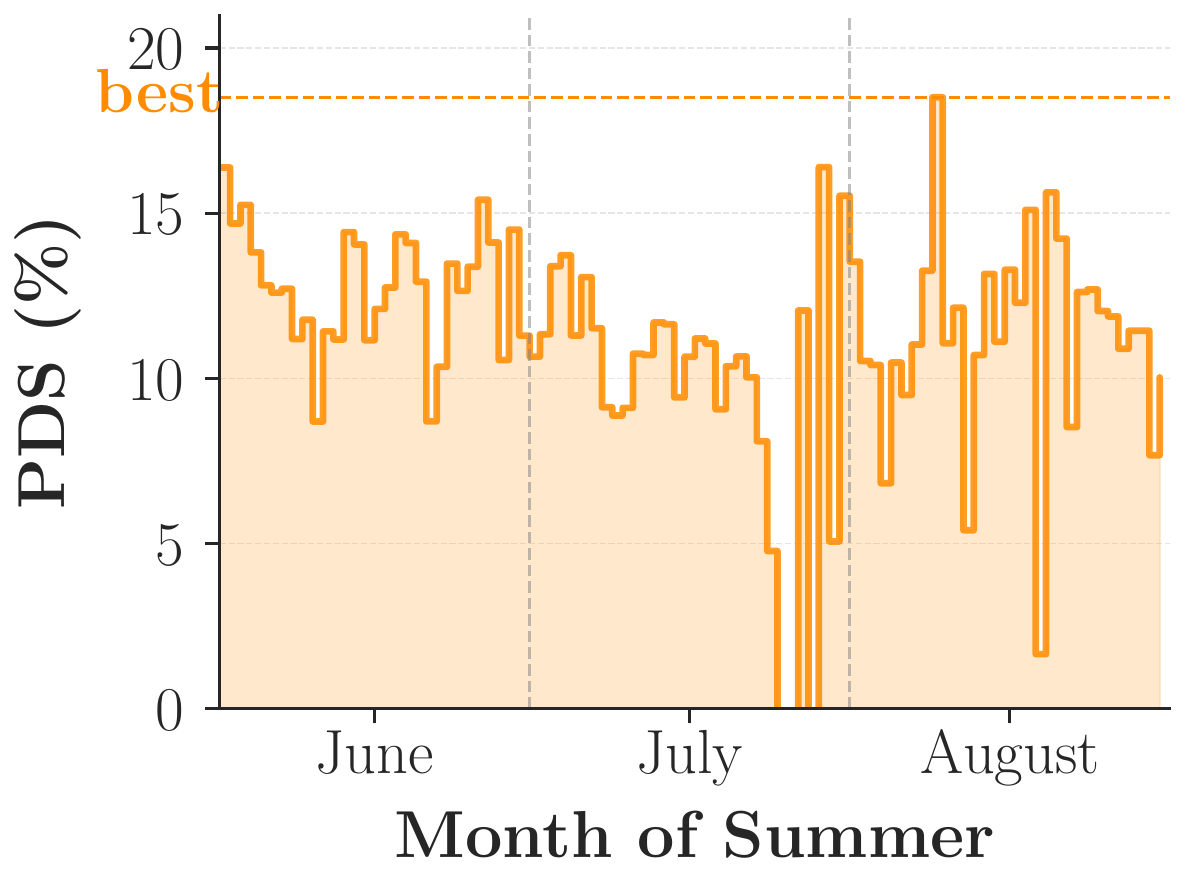}
        \caption{Phoenix}
    \end{subfigure}

    \caption{Daily peak demand shaving achieved by the cluster-based soft classifier pricing algorithm for San Francisco, Atlanta, and Phoenix in summer 2022.}
    \label{fig:PDS3cities}
\end{figure}

\vspace{3mm}
\autoref{fig:price3cities} presents the price signals generated by the cluster-based soft classifier algorithm, averaged over 30 days in July for each city. Several patterns emerge from this comparison. Most notably, both the intraday shape and the relative level of the average price profiles closely mirror the corresponding average hourly temperature patterns in each location. This alignment suggests that the temperature plays a particularly influential role in determining optimal pricing rule. In cities where temperatures rise sharply during certain hours, the algorithm assigns relatively higher prices during those periods, reflecting elevated cooling demand. Conversely, the average price profile remains comparatively flatter in milder climates. These patterns provide empirical evidence that the contextual learning mechanism successfully internalizes weather-driven variation when constructing price signals.
\begin{figure}[t]
    \centering
    \includegraphics[width=0.75\linewidth]{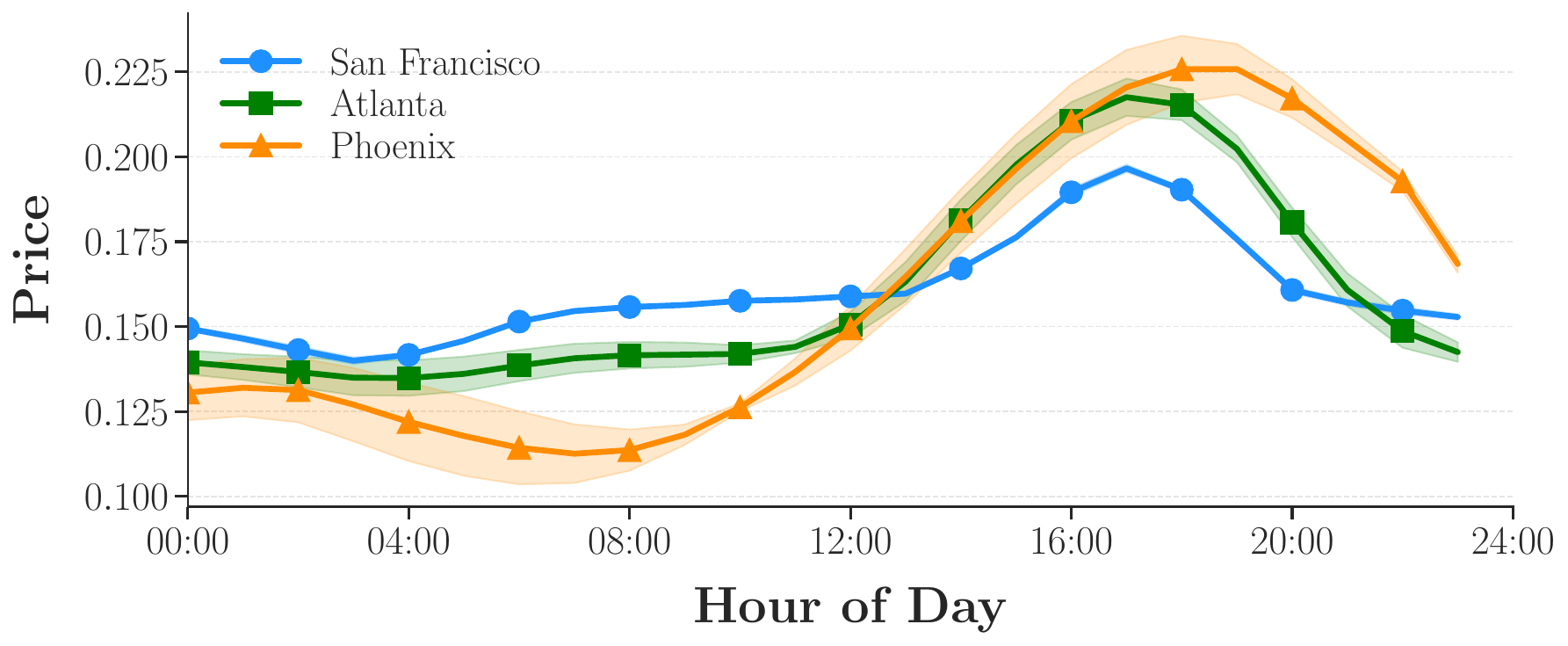}
    \caption{Average hourly price signals generated by the cluster-based soft classifier algorithm for San Francisco, Atlanta, and Phoenix in July 2022. Solid lines represent the mean hourly price profiles, and shaded regions represent $\pm 1$ standard deviation across days.}
    \label{fig:price3cities}
\end{figure}

\section{Theoretical properties} \label{sec:approx}

We establish two theoretical properties of the proposed architecture: its expressive power, through a universal approximation guarantee, and the convergence of its learning dynamics to stationary points of an underlying clustering problem.

\subsection{Universal approximation}
In this section, we show that any constrained mapping $p: \mathcal{X}\mapsto \mathcal{P}$ can be approximated arbitrarily well by the proposed architecture, provided sufficiently many clusters and hidden units are available.

We begin with invoking the theorem by \cite{asadi2020approximation} that shows the expressivity of $\text{softmax}(\cdot)$ as a lemma.

\begin{lemma} \label{lem:softmaxapprox}
    Let $I(X) = [I_1(X) \cdots I_{K}(X)]$ be a vector-valued indicator function, such that $I_i(X) = 0$ or $1$ for all $i$, $\sum_{i=1}^{K} I_i(X) = 1$ for all $X$.
    Then given any indicator function $I(X)$, assume each component function $I_i(X) \in L^1$, there exists some vector $h(X) = [h_1(X) \cdots h_{K}(X)]$ where $h_i(X) = \sum_{j=1}^n \alpha_{ij} \cdot g(\mathbf{w}_j^T X + b_j)$ for all $i$, such that their corresponding softmax outputs satisfy
    \begin{equation*}
        ||\textnormal{softmax}(h(X))_i-I_i(X)||_1 <\epsilon, \quad  \textnormal{for all } i, \;  \textnormal{for any small }\epsilon > 0, 
    \end{equation*}
     where $\textnormal{softmax}(h(X))_i = \frac{\exp\left(h_i(X)\right)}{\sum_{i=1}^{K} \exp\left(h_i(X)\right)}$, and $g$ can be taken from a variety of activation functions, such as $\text{sigmoid}(\cdot)$ or $\text{ReLU}(\cdot)$.
\end{lemma}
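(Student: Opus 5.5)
The plan is to reduce the statement to a standard universal approximation theorem for one-hidden-layer networks in $L^1$, and then use the fact that softmax is Lipschitz and saturates on large inputs. Throughout, I work on a domain $\cX$ of finite measure (e.g.\ compact) with respect to the relevant measure, so that $L^1$ approximation is meaningful. The key observation is that an indicator vector $I(X)$ is the limit of $\text{softmax}(M\,(2I(X)-\mathbf 1))$ as $M\to\infty$. Indeed, if $I_k(X)=1$ then
\[
\text{softmax}(M(2I(X)-\mathbf 1))_k=\frac{e^{M}}{e^{M}+(K-1)e^{-M}}\to 1,
\]
and the other coordinates tend to $0$. The convergence is uniform in $X$: the error is at most $(K-1)e^{-2M}$ in every coordinate.

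The steps are as follows.

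\textbf{Step 1.} Fix $\epsilon>0$ and choose $M$ so large that
\[
\big\|\text{softmax}(M(2I(X)-\mathbf 1))_i-I_i(X)\big\|_{1}<\epsilon/2 .
\]
This is possible because the pointwise error is bounded by $(K-1)e^{-2M}$ times the measure of $\cX$.

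\textbf{Step 2.} Let $f_i:=M(2I_i-1)$. Since $I_i\in L^1$ and constants are in $L^1$ on a finite-measure domain, each $f_i\in L^1$. By the classical universal approximation theorems in $L^1$ (Hornik; Leshno et al.\ for non-polynomial activations such as sigmoid and ReLU), there exist $n$, $\mathbf w_j$, $b_j$, $\alpha_{ij}$ with $\|h_i-f_i\|_1<\delta$ for all $i$. The same hidden units can be shared across $i$ by concatenating the networks, which keeps the form $h_i(X)=\sum_j\alpha_{ij}g(\mathbf w_j^TX+b_j)$.

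\textbf{Step 3.} Transfer the error through softmax. Softmax is $1$-Lipschitz from $\ell_\infty$ (indeed from $\ell_1$) to each coordinate; more precisely, $|\text{softmax}(u)_i-\text{softmax}(v)_i|\le \|u-v\|_\infty\le\sum_k|u_k-v_k|$. Integrating gives
\[
\|\text{softmax}(h)_i-\text{softmax}(f)_i\|_1\le \sum_k\|h_k-f_k\|_1<K\delta .
\]
Taking $\delta=\epsilon/(2K)$ and combining with Step 1 through the triangle inequality yields the claim.

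The main obstacle is Step 3, namely making the Lipschitz bound on softmax precise. The derivative of softmax is $\mathrm{diag}(s)-ss^T$, and its rows have $\ell_1$ norm $2s_i(1-s_i)\le 1/2$, which gives the coordinatewise bound via the mean value theorem. A secondary point is the domain assumption: on an unbounded $\cX$ with Lebesgue measure the constant $-M$ is not integrable. I would handle this by applying Step 2 to the integrable functions $2MI_i$ and absorbing the constant shift $-M$, since softmax is invariant to adding the same constant to all coordinates. So I can take $f_i=2MI_i\in L^1$ directly, which avoids the issue entirely.
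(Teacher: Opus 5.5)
Your proof is correct, and it takes a different route from the paper, which gives no argument of its own: it imports the lemma from \citet{asadi2020approximation}, where the result is stated alongside an $L^1$ universal approximation theorem for ReLU networks. You instead give a self-contained reduction with three elementary ingredients:
\begin{itemize}
\item uniform saturation of the softmax at the logits $2MI(X)$, with coordinatewise error at most $(K-1)e^{-2M}$;
\item a classical $L^1$ density theorem applied to these logits, with hidden units shared across coordinates;
\item the coordinatewise Lipschitz bound $|\mathrm{softmax}(u)_i-\mathrm{softmax}(v)_i|\le\frac12\|u-v\|_\infty$, which follows from the row $\ell_1$-norms $2s_i(1-s_i)$ of $\mathrm{diag}(s)-ss^\sT$.
\end{itemize}
Your order of quantifiers is also right: fix $\epsilon$, then $M$, then $\delta=\epsilon/(2K)$ independent of $M$, then the network.

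Your route buys transparency and generality. It works for any activation that has an $L^1$ density theorem for the measure at hand, and it shows that the indicator structure matters only in Step 1. The cost is that you must invoke the right density theorem. For example, the $L^p$ statement of Leshno et al.\ assumes a compactly supported measure, which matches the compact $\cX$ of Theorem~\ref{thm:univapprox}. For ReLU under a general finite measure, you can reduce to Hornik's theorem for bounded nonconstant activations via $\mathrm{ReLU}(v)-\mathrm{ReLU}(v-1)$, at the price of doubling the width.

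One correction to your closing remark. Switching from $M(2I_i-1)$ to $2MI_i$ fixes integrability in Step 2, but it does not ``avoid the issue entirely'': Step 1 still bounds the error by $(K-1)e^{-2M}\mu(\cX)$. No fix is needed, however, because the hypotheses already force $\mu(\cX)=\int\sum_i I_i\,d\mu=\sum_i\|I_i\|_1<\infty$. In particular, on an infinite-measure domain such as $\mathbb{R}^p$ with Lebesgue measure, the lemma is vacuous.
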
  

\cref{lem:softmaxapprox} states that a sufficiently large neural network with $\text{softmax}(\cdot)$ on the output layer can well approximate any indicator target function in $L^1$. Now it suffices to show that any function $p: \mathcal{X}\mapsto \mathcal{P}$ can be approximated by $p(X) = {Z} \cdot I(X)$, where $I(X)$ is some vector-valued indicator function, $p(X) = [p_1(X) \cdots p_d(X)]$, ${Z} \in \{{Z}| {Z}_{\cdot j} \in \mathcal{P}, \forall j\}$. We invoke the following theorem of simple function approximation \cite[Theorem 13.5]{Billingsley1995-BILMAP} as a lemma.

\begin{lemma} \label{lemma:simpleapprox}
    Let simple function $m_k = \sum_{j=1}^k \alpha_j I_{A_j}$, where $I_{A_j}$ are indicator functions. If $m$ is real and measurable $\mathcal{F}$, then there exists a sequence $\{m_k\}$ of simple functions, each measurable $\mathcal{F}$, such that
    \begin{equation}
        \begin{aligned}
        0\leq m_k(\omega) \uparrow m(\omega),\quad \text{if }\, m(\omega) \geq0;\\
        0\geq m_k(\omega) \downarrow m(\omega),\quad \text{if }\, m(\omega) \leq0.
        \end{aligned}
    \end{equation}
\end{lemma}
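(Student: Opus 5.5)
The statement immediately above, \cref{lemma:simpleapprox}, is the classical simple-function approximation theorem \cite[Theorem 13.5]{Billingsley1995-BILMAP}. I would reproduce its standard proof with the explicit dyadic construction.

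\textbf{Step 1: the nonnegative case.} Take a real $\mathcal F$-measurable $m$ and fix $k\in\mathbb N$. Define
\begin{equation*}
m_k(\omega)=\begin{cases}
(j-1)2^{-k}, & \text{if } (j-1)2^{-k}\le m(\omega)<j2^{-k},\ 1\le j\le k2^k,\\
k, & \text{if } m(\omega)\ge k,
\end{cases}
\end{equation*}
on $\{m\ge 0\}$. Symmetrically, set $m_k=-(\text{same construction applied to } -m)$ on $\{m<0\}$.

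\textbf{Step 2: measurability and simplicity.} Each level set of $m_k$ has the form $\{(j-1)2^{-k}\le m<j2^{-k}\}$ or $\{m\ge k\}$, or the analogous sets for $m<0$. These are preimages of Borel sets under $m$, so they lie in $\mathcal F$. Hence $m_k=\sum_j \alpha_j I_{A_j}$ is a finite sum, which is exactly the simple-function form in the statement.

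\textbf{Step 3: sign and monotonicity.} By construction, $0\le m_k\le m$ wherever $m\ge0$. To show $m_{k+1}\ge m_k$ there, note that passing from $k$ to $k+1$ splits each dyadic interval into two halves. The left endpoint of either half is at least the old left endpoint. The cap also rises from $k$ to $k+1$, and any point with $m\ge k$ receives value at least $k$ at level $k+1$. The case $m\le 0$ is the mirror image, giving $0\ge m_k\downarrow$ there.

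\textbf{Step 4: convergence.} Fix $\omega$ with $0\le m(\omega)<\infty$. Once $k>m(\omega)$, we have $0\le m(\omega)-m_k(\omega)<2^{-k}$, so $m_k(\omega)\uparrow m(\omega)$. The case $m(\omega)\le 0$ is symmetric.

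\textbf{Main obstacle.} The only delicate point is verifying monotonicity at the cap transition from $k$ to $k+1$, which requires a short case check. I would handle it exactly as sketched in Step 3.

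For the paper's purposes, I would also note how this lemma is used afterwards. Apply it componentwise to $p:\mathcal X\to\mathcal P$, then refine the resulting partitions to a common one. On each cell $A_j$ of that partition, replace the resulting vector value by $p(x_j)\in\mathcal P$ for some representative $x_j\in A_j$, which keeps every prototype feasible. For the $L^1$ error to shrink after this replacement, $p$ must be bounded, which holds because $\mathcal P$ is compact.
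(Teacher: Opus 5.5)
Your dyadic construction is correct and is exactly the standard proof of Billingsley's Theorem 13.5, which the paper cites rather than reproves. The monotonicity check at the cap transition and the mirrored construction on $\{m<0\}$ (which gives $m_k(\omega)=0$ when $m(\omega)=0$) are handled properly; the paper's notation $\sum_{j=1}^k$ should be read as ``a finite sum'' rather than ``exactly $k$ terms,'' and even under the literal reading, repeating entries of your sequence to reindex it preserves both monotonicity and convergence.
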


The following theorem is then an extension of the lemma to vector-valued functions.

\begin{theorem}
\label{thm:univapprox}
Let $\mathcal{X}$ be a compact measurable space, and $p:\mathcal{X}\mapsto \mathcal{P}\subseteq \mathbb{R}^d$ be a measurable function, where $\mathcal{P}$ is compact. Then there exists a sequence of functions of the form
\begin{equation*}
    m_k(X) = {Z}^{(k)} \cdot I^{(k)}(X)
    = \sum_{j=1}^{N_k} {Z}^{(k)}_{\cdot j} I_{A^{(k)}_j}(X),
\end{equation*}
where $\{A^{(k)}_1,\dots,A^{(k)}_{N_k}\}$ is a measurable partition of $\mathcal{X}$, $I^{(k)}(X)$ is a vector-valued indicator function, and each column ${Z}^{(k)}_{\cdot j} \in \mathcal{P}$, such that
\begin{equation*}
    m_k(X) \to p(X), \quad \text{for all } X \in \mathcal{X}\text{ as } k \to \infty.
\end{equation*}
Furthermore, if $p$ is continuous, the convergence can be taken to be uniform.
\end{theorem}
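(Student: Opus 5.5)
The plan is to build the approximants directly by partitioning the output space $\mathcal{P}$ rather than working coordinatewise. Coordinatewise approximation via \cref{lemma:simpleapprox} is problematic because the resulting value vectors need not lie in $\mathcal{P}$. I will therefore construct, for each $k$, a finite measurable partition $\{B^{(k)}_1,\dots,B^{(k)}_{N_k}\}$ of $\mathcal{P}$ into nonempty Borel pieces of diameter at most $1/k$. This is possible since $\mathcal{P}$ is compact, hence totally bounded. Concretely, take a finite $1/(2k)$-net $\{c_1,\dots,c_{N}\}\subset\mathcal{P}$, set $B_j := \mathcal{P}\cap\big(\mathbb{B}(c_j,1/(2k))\setminus\bigcup_{i<j}\mathbb{B}(c_i,1/(2k))\big)$, and discard the empty pieces.

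In each nonempty $B^{(k)}_j$ I pick a representative $Z^{(k)}_{\cdot j}\in B^{(k)}_j\subseteq\mathcal{P}$, which gives column feasibility for free. I then set $A^{(k)}_j:=p^{-1}(B^{(k)}_j)$. Because $p$ is measurable and the $B_j$ are Borel and partition $\mathcal{P}\supseteq p(\mathcal{X})$, the sets $A^{(k)}_j$ form a measurable partition of $\mathcal{X}$; empty preimages are harmless or can be dropped. With $m_k(X)=\sum_j Z^{(k)}_{\cdot j}I_{A^{(k)}_j}(X)$, every $X$ lies in exactly one $A^{(k)}_j$. Both $p(X)$ and $m_k(X)=Z^{(k)}_{\cdot j}$ then lie in $B^{(k)}_j$, so
\[
\sup_{X\in\mathcal{X}}\|m_k(X)-p(X)\|\le \operatorname{diam}(B^{(k)}_j)\le 1/k .
\]
This already gives uniform convergence, hence pointwise convergence, for any measurable $p$. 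The continuity hypothesis is not even needed, though I would remark that it is consistent with the statement. The construction also makes explicit that $I^{(k)}$ is a vector-valued indicator with exactly one active entry, matching the form required by \cref{lem:softmaxapprox}.

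There is an alternative route that does use \cref{lemma:simpleapprox}. One would apply it to each coordinate $p_i$, take the common refinement of the $d$ partitions, and then fix feasibility by replacing each value vector $v_j$ on a cell $A_j$ by the point $p(x_j)$ for some $x_j\in A_j$. The obstacle is that this choice is only close to $p$ on $A_j$ when the cell has small oscillation of $p$, which the refinement of level sets guarantees only up to the grid width. It also requires compactness to bound $p$ and make the grid finite. The output-space partition sidesteps both issues, so I would use it as the main argument.

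The only delicate step is ensuring that the representatives lie in $\mathcal{P}$ and that the cells are measurable, which the net-based Borel partition handles. No convexity of $\mathcal{P}$ is needed for this theorem; convexity matters only later, when the indicator is replaced by softmax weights.
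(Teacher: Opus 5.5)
Your argument is correct, and it takes a different route from the paper.

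The paper works coordinatewise. It applies the scalar simple-function lemma to each $p_\ell$ and assembles a vector simple function $s_k=\sum_j v^{(k)}_j I_{A^{(k)}_j}$ whose values need not lie in $\mathcal{P}$. It then swaps each $v^{(k)}_j$ for some $Z^{(k)}_{\cdot j}\in\mathcal{P}$ with $\|Z^{(k)}_{\cdot j}-v^{(k)}_j\|\le\varepsilon_k$. The uniform statement is handled separately, using uniform continuity of $p$ on compact $\mathcal{X}$ and representatives $p(x^{(k)}_j)$.

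You instead partition the output set $\mathcal{P}$ into finitely many Borel cells of diameter at most $1/k$ and pull them back through $p$. Feasibility of every column is then automatic, and you get $\sup_{X}\|m_k(X)-p(X)\|\le 1/k$ for every measurable $p$. You need neither continuity of $p$ nor compactness of $\mathcal{X}$, so you reach a stronger conclusion with a shorter argument.

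Your route also avoids the one soft spot in the paper's proof. Compactness of $\mathcal{P}$ does not by itself place $v^{(k)}_j$ within $\varepsilon_k$ of $\mathcal{P}$. That step has to be repaired: take $Z^{(k)}_{\cdot j}$ to be a nearest point of $\mathcal{P}$ to $v^{(k)}_j$, and use $\mathrm{dist}(v^{(k)}_j,\mathcal{P})\le\|v^{(k)}_j-p(X)\|$ for $X\in A^{(k)}_j$. This gives $\|m_k(X)-p(X)\|\le 2\|s_k(X)-p(X)\|$.

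The paper's route has the advantage of reusing the quoted scalar lemma verbatim. Yours is self-contained and quantitative, and it uses only total boundedness of $\mathcal{P}$ together with Borel measurability of $p$.
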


A complete proof of this result is provided in Appendix \ref{appendix:univapprox}. To build intuition, the indicator $I(X)$ picks one column of ${Z}$ each time, where each column lies in $\mathcal{P}$. Therefore, $m_k$ is a piecewise constant function over a partition of $\mathcal{X}$, where each region $A^{(k)}_j$ is assigned a value ${Z}^{(k)}_{\cdot j} \in \mathcal{P}$. As $k \to \infty$, the partition $\{A^{(k)}_j\}$ becomes increasingly fine, allowing $m_k(X)$ to match $p(X)$ more closely on each region. Consequently, $m_k$ can approximate $p$ arbitrarily well---pointwise for measurable $p$, and uniformly if $p$ is continuous on compact $\mathcal{X}$. Together, Lemma \ref{lem:softmaxapprox} and \cref{thm:univapprox} establish the universal approximation property of the architecture in \eqref{eq:softclass}.

\subsection{Convergence analysis}\label{subsec:convergence}
We now turn to the learning dynamics of our proposed architecture. Specifically, we consider the following hard-constraint learning problem, where we are given $n$ samples $\{(x_i,y_i)\}_{i=1}^n$ with covariates $x_i\in\reals^p$ and responses constrained to a compact convex set, $y_i\in\cP\subset\reals^d$. We seek to explain response vectors with $K$ prototypes $Z=[z_1\ \cdots\ z_K]$, $z_r\in\cP$, and mixture weights $s_i\in\Delta_{K-1}$. Without considering context information, a natural target is to minimize the following \emph{context-free} objective function
\[
\ell_n(S,Z)=\ \frac{1}{2n}\big\|SZ^\sT-Y\big\|_{\mathrm{F}}^2+\frac{\tau}{n}\,\Tr\big(S^\sT\log S\big),
\]
where $Y$ stacks the responses $y_i^\sT$ and $S$ the weights $s_i^\sT$. Here, the penalty term with $\tau>0$ regularizes the weights of $S$. Specifically, it keeps them in the interior of the simplex and, in the overparametrized regime of $K>d$, makes the optimal weights unique for any fixed set of prototypes.

We next move to learning a mapping from contexts $x$ to response vector $y$, where we generate the mixture weights through the shared context-dependent network, and train the network parameters and the prototypes $Z$ by projected first-order updates. More precisely, we focus on the class of two-layer neural networks with activation function
$\sigma:\mathbb{R}\to\mathbb{R}$. For $N$ neurons, first-layer
parameters $U\in\mathbb{R}^{p\times N}$, output-layer parameters
$\th\in\mathbb{R}^{N\times(K-1)}$, we consider the following function
\begin{equation*}
f(x;U,\th,Z)
=
Z\,\mathsf{soft}\big(\th^\sT\sigma(U^\sT x)\big)\,.
\end{equation*}
Our updates for this network read as the following iterations:
\begin{align*}
(U_{t+1},\th_{t+1})
&=(U_t,\th_t)-\eta_t\nabla_{U,\th}\ell_n\big(\mathsf{soft}(\sigma(XU_t)\th_t),Z_t\big)\,, \nonumber\\
Z_{t+1}
&=\mathsf{proj}_{\cP^K}\Big(Z_t-\eta_t \nabla_{Z}\ell_n\big(\mathsf{soft}(\sigma(XU_t)\th_t),Z_t\big)\Big) \,.
\end{align*}

In Appendix~\ref{app:convergence},
we show that this restriction to learning a mapping from contexts to response vectors will still converge to a first-order stationary point of the following context-free clustering problem
\[
\min_{S,Z} \ell_n(S,Z)\,,\quad \text{ s.t. } S\in \Delta_{K-1}^n\,, Z\in \cP^{K}\,.
\]
 More precisely, we show that under mild regularity conditions on the first-layer features, the pairs $(S_t,Z_t)$ induced by the network along the trajectory, with weights $S_t=\mathsf{soft}\big(\sigma(XU_t)\th_t\big)$ plugged in from the current parameters, converge to a first-order stationary point of this context-free problem (Theorem~\ref{thm:stationary-points}). In addition, when its stationary points are isolated, the iterates converge to a single such point (Corollary~\ref{cor:isolated}). In other words, even though the weights are not optimized freely but are constrained to the image of our shared contextual parametrization, the updates still recover the stationarity guarantees of the underlying clustering problem.

\section{Discussion}\label{sec:discussion}
Motivated by the operational challenges faced by modern electricity grids, we study how demand response pricing can be deployed effectively in realistic, large-scale systems. To assess this question, we extend the existing demand response pricing mechanisms by incorporating rich contextual information into the computation of dynamic prices. Building on the mean-field characterization of Stackelberg equilibrium, we develop a context-aware learning framework that offers a principled way for demand response pricing in large-scale electricity systems. It improves over flat rate and context-agnostic prices particularly in environments with high renewable penetration or significant day-to-day weather fluctuations, highlighting its practical value.

On the other hand, from a methodological perspective, our approach illustrates how the dual characterization of Stackelberg equilibrium enables scalable learning. Rather than repeatedly solving a bi-level optimization problem for each observed state, we learn a constrained mapping from contextual signals to feasible prices. Once trained, this mapping can generate near-optimal prices with minimal computational overhead. This distinction is important for real-time or large-scale implementations, where repeated equilibrium computation may be impractical. Besides these benefits, such formulation also introduces algorithmic challenges of constrained output learning. Our proposed cluster-based soft classifier algorithm provides one way to balance computational efficiency and adaptability to constraints.

Several limitations remain. First, our framework relies on the accuracy and representativeness of contextual features; mis-specified or poorly predicted exogenous factors may degrade performance. In addition, while mean-field theory provides asymptotic guarantees, finite-population effects may cause behavior synchronization and introduce deviations from optimality.

Future work could focus on developing robust models that explicitly account for uncertainty in contextual features and produce interval or probabilistic pricing rules. For instance, integrating probabilistic forecasts or Bayesian neural networks for exogenous factors would quantify forecast uncertainty in the inputs. This allows the framework to be more robust against mis-specification and make risk-aware decisions. Another interesting direction is to extend the framework to incorporate strategic consumer behavior beyond passive price response. For example, consumers could be modeled as forward-looking agents who optimize intertemporal utility under comfort constraints, while anticipating pricing policies. Such extensions would enable the framework to capture endogenous demand shifting due to strategic behavior, leading to more sophisticated equilibrium.



\vspace{5mm}
\noindent\textbf{Funding.}
This work was supported by Trellis Climate (Prime Coalition)
and by the Office of Naval Research under grant number N00014-24-1-2091.

\bibliographystyle{apalike}
\bibliography{bibliography}       


\newpage

\appendix

\section{Approaches for constrained-output learning} \label{appendix:algreview}

This appendix proposes a cluster-based soft classifier approach that directly incorporates output feasibility while remaining computationally scalable. It also reviews several alternative approaches for constrained-output learning in neural networks, i.e. methods that enforce prediction feasibility of the form $\pi_\theta(X)\in \mathcal{P}$.

\vspace{3mm}
To illustrate this better, we consider the following structure of the neural network $\pi_\theta$:

\begin{assumption} \label{nnstruc}
We assume $\pi_\theta$ is from the function class $\Pi_\theta$ with the structure
    \begin{equation*}
    \begin{aligned}
    \Pi_\theta = \{&\pi_\theta: \mathcal{X} \mapsto \mathbb{R}^{d}, \, \pi_\theta \text{ is a fully connected neural network,}\\
    & \text{with 1 hidden layer and activation function $f$, } \pi_\theta(X) \in \mathcal{P}\}.
    \end{aligned}
\end{equation*}
Equivalently, $\pi_\theta$ can be parametrized by $\theta = (w_1,\, b_1,\, w_2,\, b_2) \in \Theta$, $\pi_\theta = w_2\,f(w_1X+b_1)+b_2$. For example, $f$ may be chosen as the sigmoid function, defined by $\text{sigmoid}(z) = 1/(1+e^{-z})$ applied entry-wise.
\end{assumption}

Prior work on constrained prediction in neural networks, i.e. $\pi_\theta(X) \in \mathcal{P}$, primarily falls into three categories: projection-based methods, barrier/interior-point methods, and proximal primal-dual methods. A detailed discussion of these approaches, along with a clear comparison of their strengths and limitations, is provided later in this section. We summarize these existing approaches together with our proposed approach below:
\begin{enumerate}[label=(\alph*)]
    \item \textit{Projection Layer Approach} \citep{Amos2017}: train neural network with no constraints, then add an additional layer to project the original output from neural network onto the constraint set.
    \item \textit{Interior Point Approach} \citep{kervadec2022constrained}: add log barrier term to the loss function, which actively imposes the constraints and has known efficient algorithms.
    \item \textit{Proximal Gradient Approach} \citep{nandwani2019primal}: apply proximal gradient descent to a weak dual of the primal problem, which allows for a larger range of non-differentiable constraints.
    \item \textit{Soft Classifier Approach}: train neural network as a mediating soft classifier that maps covariates to a set of weights on clusters, while simultaneously train the cluster-level outputs with constraints.
\end{enumerate}

The remainder of this section first introduces the structure of our proposed framework and the corresponding algorithmic updating procedure, then followed by the discussion on the other alternative approaches.

\subsection{Soft classifier architecture}\label{subsec:soft}
We present a cluster-based soft classifier algorithm that addresses major limitations of existing constrained-output learning approaches. The core idea is to train the neural network as a soft classifier, which outputs a probability distribution over learned clusters representing archetypal system conditions. Constraints are then imposed directly on the cluster-level outputs rather than on the network outputs or internal parameters.

Such design avoids the need to convert the feasible set from the output space $\mathcal{P}$ to the network parameter space $\Theta$, and requires only projected gradient descent to ensure feasibility. By structuring the learning in this way, the algorithm maintains scalability, improves convergence, and reduces overfitting risk compared with naïve projection approaches. Formally, we define the neural network architecture as
\begin{equation}\label{eq:softclass}
     p_t = {Z} \cdot \psi(X_t),\quad \psi(X_t) = \text{softmax}\left\{\left(w_2f\left(w_1X_t+b_1\right)\right)+b_2\right\},
\end{equation}
where the neural network output is $\psi(X)\in \mathbb{R}^k$, with $k$ denoting the total number of clusters; the cluster-level output is ${Z} \in \mathbb{R}^{d\times k}$; and the final output $p \in \mathcal{P}\subset\mathbb{R}^d$. \autoref{fig:soft_class} illustrates the structure of the cluster-based soft classifier algorithm.

\begin{figure}[t]
    \centering
    \includegraphics[width=0.65\linewidth]{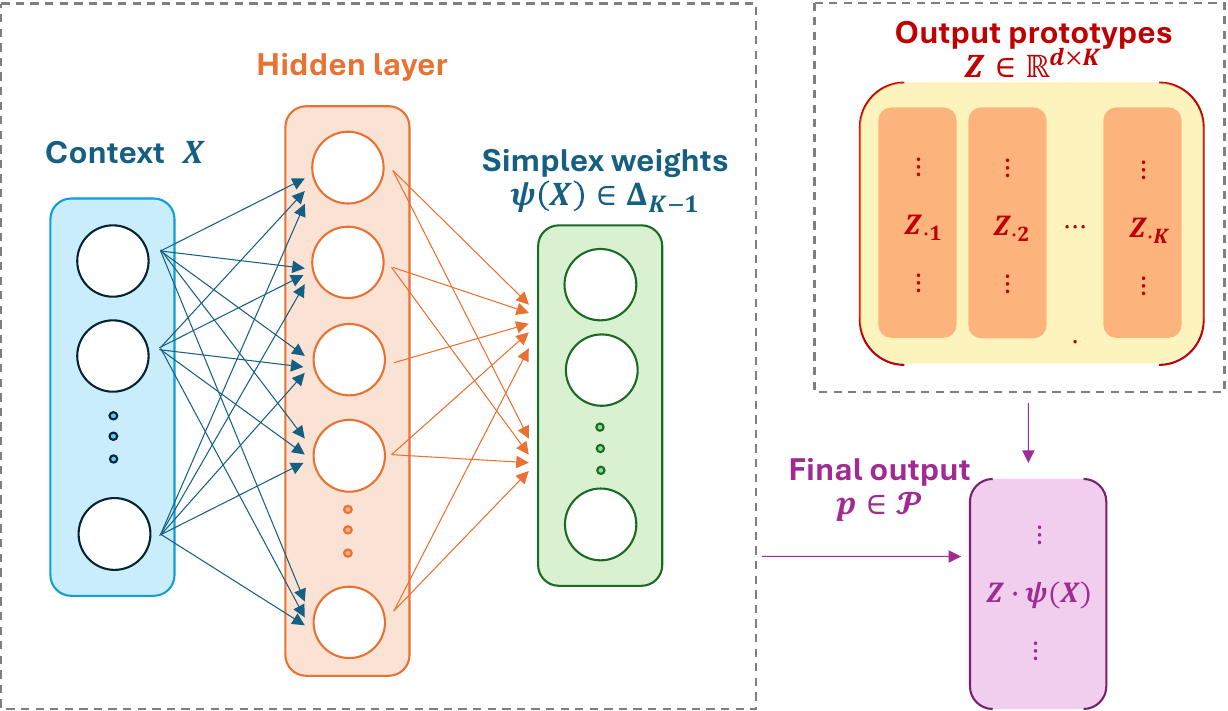}
    \caption{Constrained output learning architecture with cluster-based soft classifier. Network maps input features to a probability assignment over clusters, with each cluster representing a archetypal system condition. Constraints are applied directly to the cluster-level outputs, ensuring that the final price signal---computed as a convex combination of the cluster-level outputs---remains feasible.}
    \label{fig:soft_class}
\end{figure}

\vspace{3mm}
The following lemma establishes the equivalency of constraints on the final output and cluster-level outputs.

\begin{lemma}
Under the learning structure in \eqref{eq:softclass}, the following statements are equivalent:
    \begin{equation*}
    p_t = {Z} \cdot \psi(X_t) \in \mathcal{P}\Leftrightarrow {Z} \in \{{Z}| {Z} \cdot v \in \mathcal{P},\, \forall v \in \Delta_{K-1}\}\Leftrightarrow {Z} \in \{{Z}| {Z}_{\cdot j} \in \mathcal{P}, \forall 1\leq j \leq K\},
    \end{equation*}
where $\Delta_{K-1}\subset \mathbb{R}^{K}$ is the $(K-1)$-dimensional probability simplex. In particular, the final condition requires that each column of ${Z}$ satisfies the constraints.
\end{lemma}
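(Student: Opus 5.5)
The plan is to prove the chain of equivalences by going around the cycle (iii) $\Rightarrow$ (ii) $\Rightarrow$ (i) $\Rightarrow$ (iii). Here (iii) is the column condition ${Z}_{\cdot j}\in\mathcal P$ for all $j$, (ii) is the condition ${Z}v\in\mathcal P$ for all $v\in\Delta_{K-1}$, and (i) is feasibility $p_t={Z}\psi(X_t)\in\mathcal P$.

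First I fix how (i) is read. It is meant as a property of ${Z}$ that holds for every context and every admissible network output, i.e.\ ${Z}\psi\in\mathcal P$ for all $\psi$ attainable as softmax outputs of \eqref{eq:softclass}. A single context and a single network could not force anything on ${Z}$, so this reading is needed for the reverse implications. Throughout I use that $\mathcal P$ is closed and convex. Both properties hold here: $\mathcal P$ is the dual set $\{z: z\cdot q\le\rho(q)\ \forall q\}$, an intersection of closed half-spaces.

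\textbf{(iii) $\Rightarrow$ (ii).} For $v\in\Delta_{K-1}$, ${Z}v=\sum_j v_j{Z}_{\cdot j}$ is a convex combination of points of $\mathcal P$. It lies in $\mathcal P$ by convexity (finite induction on the two-point definition).

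\textbf{(ii) $\Rightarrow$ (i).} The softmax output $\psi(X_t)$ has strictly positive entries summing to one, so $\psi(X_t)\in\Delta_{K-1}$. Applying (ii) with $v=\psi(X_t)$ gives ${Z}\psi(X_t)\in\mathcal P$.

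\textbf{(i) $\Rightarrow$ (iii).} This is the only step with any subtlety, because softmax never outputs a vertex $e_j$ of the simplex. Instead I approximate the vertex. Take $h^{(n)}\in\mathbb R^K$ with $h^{(n)}_j=n$ and all other coordinates $0$. Such a vector is realizable as an output-layer value, e.g.\ by setting $w_2=0$ and $b_2=h^{(n)}$ in \eqref{eq:softclass}. Then
\[
\text{softmax}(h^{(n)})=\frac{e^{n}e_j+\sum_{i\neq j}e_i}{e^{n}+K-1}\longrightarrow e_j .
\]
By (i), ${Z}\,\text{softmax}(h^{(n)})\in\mathcal P$ for every $n$. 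Since $v\mapsto{Z}v$ is linear and hence continuous, these points converge to ${Z}e_j={Z}_{\cdot j}$, and closedness of $\mathcal P$ gives ${Z}_{\cdot j}\in\mathcal P$.

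The same limit argument shows directly that (i) $\Rightarrow$ (ii). The softmax image is exactly the relative interior of $\Delta_{K-1}$, which is dense in $\Delta_{K-1}$. So (ii) follows from (i) by closedness, even without the detour through (iii).

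The main point to state carefully is therefore the interpretation of (i) as holding for all admissible outputs, together with the use of closedness of $\mathcal P$ in the limiting step; everything else is immediate from convexity.
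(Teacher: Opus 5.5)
Your proof is correct, and it is more complete than what the paper provides. The paper states this lemma without proof. Its only justification is the remark in the \emph{Learning} paragraph of \cref{sec:conceptual}: the price stays in $\mathcal P$ because each prototype lies in $\mathcal P$, the weights lie in $\Delta_{K-1}$, and $\mathcal P$ is convex. That is exactly your chain (iii) $\Rightarrow$ (ii) $\Rightarrow$ (i). The paper leaves the converse implicit, and your argument supplies the two ingredients it actually needs.

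\textbf{The quantifier in (i).} For a fixed network, (i) $\Rightarrow$ (iii) is false. Take $d=1$ and $\rho(q)=|q|$, so $\mathcal P=[-1,1]$, with $K=2$, $Z=(2,\,-2)$, and a network with constant output $(1/2,1/2)$. Every price equals $0\in\mathcal P$, yet neither column is feasible. The lemma becomes true only when (i) is read over all attainable outputs. With your choice $w_2=0$ and free $b_2$, these outputs fill the whole relative interior of $\Delta_{K-1}$.

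\textbf{Closedness of $\mathcal P$.} This holds automatically, since $\mathcal P$ is an intersection of closed half-spaces. You use it exactly once, to pass from the open simplex that softmax can reach to its vertices. The step (ii) $\Rightarrow$ (iii) on its own is immediate by taking $v=e_j$.
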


\subsection{Algorithmic update}
We now describe the optimization procedure for the proposed learning framework. The model $\psi(x)$ and the cluster-level parameters ${Z}$ are updated sequentially using different rules. Specifically, the parameters $(w_1, w_2, b_1, b_2)$ of $\psi(x)$ are updated via unconstrained gradient descent, while the constrained cluster-level outputs ${Z}$ are updated via projected gradient descent (PGD). The full procedure is presented in \cref{alg1}, where the gradient of the loss function $l(w_t,b_t,{Z}_t)$ follows from Theorem~\ref{thm:grad}.

We emphasize that the use of projection in conjunction with batch gradient descent, as illustrated in the pseudo-code, represents one possible implementation of the projected gradient descent. More broadly, a wide class of optimization methods can be adapted to this framework, including stochastic gradient descent, mirror descent, and momentum-based methods such as Adam and Nesterov accelerated gradient descent.

\begin{algorithm}[t]
\caption{Cluster-based Soft Classifier Algorithm}\label{alg1}
{\renewcommand{\baselinestretch}{1.4}\selectfont
\begin{algorithmic}
\State
\State $w,\,b,\,{Z} \gets w_0,\,b_0,\,{Z}_0$ \Comment{Initialize training variables}
\For{$t$ in $1:T$}
\State $w_t \gets w_{t-1} + \eta_t \nabla_w l(w,b_{t-1},{Z}_{t-1})$
\State $b_t \gets b_{t-1} + \eta_t \nabla_b l(w_{t-1},b,{Z}_{t-1})$\Comment{Train model $\psi(x)$ without constraints}

\State
\State ${Z}_t' \gets {Z}_{t-1} + \eta_t \nabla_{Z} l(w_{t-1},b_{t-1},{Z})$ 
\State ${Z}_t \gets \text{Proj}_\mathcal{P}( {Z}_t')$
\Comment{Train clusters ${Z}$ using projected gradient descent}
\EndFor
\State
\end{algorithmic}}
\end{algorithm}

\subsection{Projection layer}
To enforce constraints on the output, one natural approach is to incorporate a projection layer directly within the network and back-propagate through its gradient. This method offers several advantages: it guarantees that all outputs satisfy the constraints during training, can be readily integrated into standard neural network architectures, and allows efficient gradient-based updates.

As illustrated in \autoref{fig:proj_layer}, the learning architecture is composed of four layers that collectively implement the constrained output learning. The key distinction between the projection layer approach and projected gradient descent lies in where the projection is applied: the former projects the network outputs directly, while the latter applies projection to the training variables iteratively as part of the optimization routine. This difference has important implications for practicality under output constraints. In particular, the projection layer method is well-suited for enforcing constraints directly on predicted outputs, enabling straightforward feasibility and gradient-based updates. In contrast, projected gradient descent operates on the training variables themselves, making it unclear how to impose the desired constraints.

However, the projection layer approach also has notable limitations. First, the projection operator is generally non-differentiable (e.g. projecting onto an $l_1$-ball), which makes gradient computation challenging. Second, training an unconstrained neural network followed by projection can increase the risk of overfitting, as the network may learn complex internal representations that satisfy the constraints only after projection, rather than naturally producing feasible outputs.

\begin{figure}[t]
    \centering
    \includegraphics[width=0.6\linewidth]{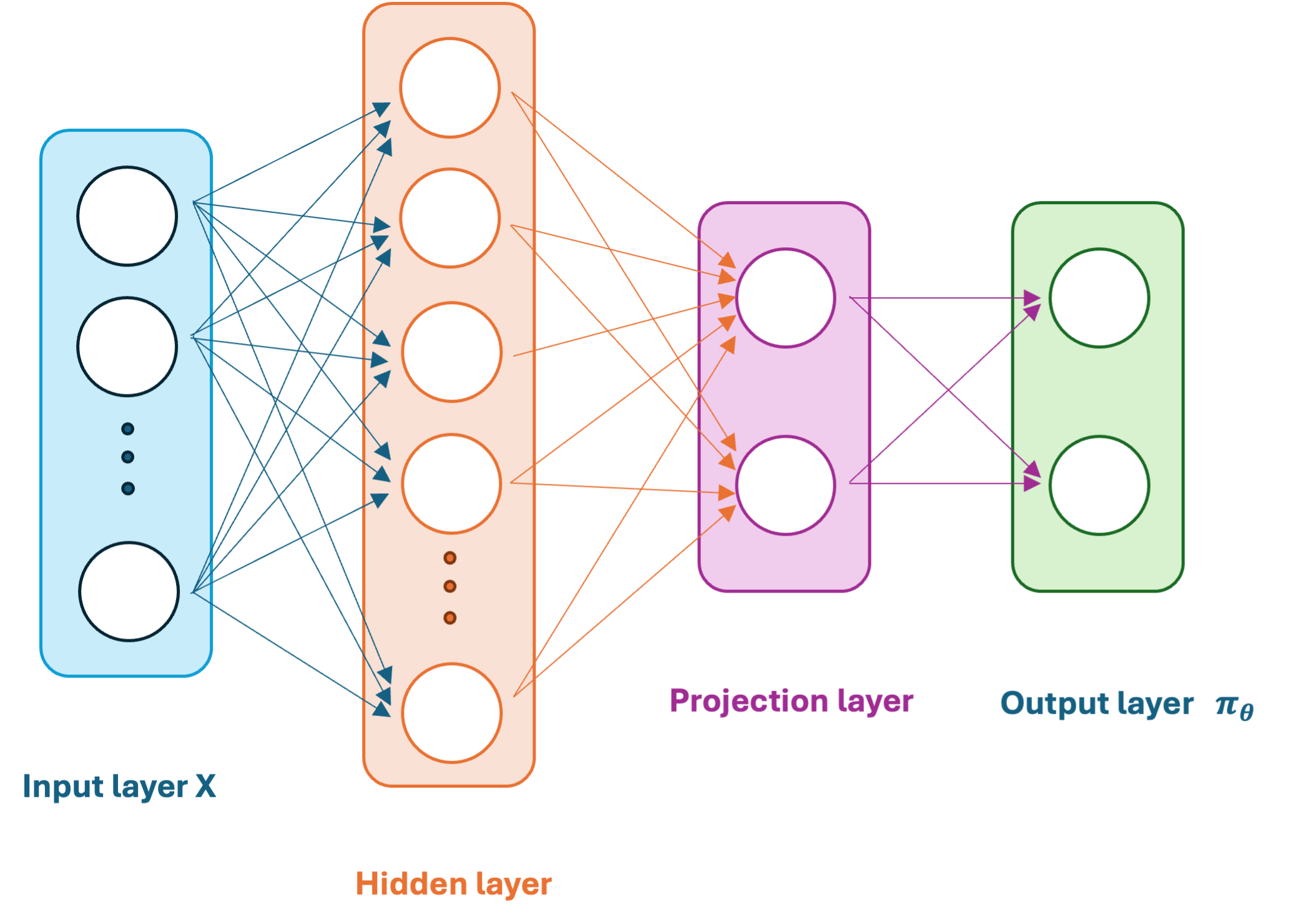}
    \caption{Constrained output learning architecture with a projection layer. Unlike projected gradient descent, which adjusts training variables iteratively, the projection layer approach enforces constraints directly on the network outputs by projecting them onto the feasible set.}
    \label{fig:proj_layer}
\end{figure}

Some remedies to overfitting of the unconstrained neural network are as follows. One option is to add a dropout layer as in \cite{srivastava2014dropout}, which randomly sets some nodes to output zero during training. Another option is to use mirror gradient descent (MGD), which could restrict the search space of parameter update and accelerate the convergence. For example, if we set mirror map to be the negative entropy $h(x)=\sum_{i=1}^nx_i \ln x_i$, then the update preserves the sign at each update step, as $(x_{t+1})_i = (x_t)_ie^{
-\eta\nabla_i f(x_t)}$.

\subsection{Interior point}
Another common approach for enforcing output constraints while mitigating over-parameterization is the interior point method. Formally, an additional penalty term $\lambda log(1-\rho^*(\pi_\theta(X_t))$, also known as \textit{log barrier function}, is added to the loss:

\begin{equation}
    \begin{aligned}
    &\max_{p \in \mathcal{P}} \mathbb{E} [\rho\left(q_t(p_t)-Q^0(X_t)\right)] +\lambda log(1-\rho^*(p))\\
     =&\max_{\theta \in \Theta} \mathbb{E} [\rho\left(q_t(\pi_\theta(X_t))-Q^0(X_t)\right)]+\lambda log(1-\rho^*(\pi_\theta(X_t))),
    \end{aligned}
\end{equation}
where $\lambda \in\mathbb{R}_{>0}$ is the penalty level, and the log barrier approaches $+\infty$ when $\rho^*(\pi_\theta(X_t)) \to 1$. In practice, one drawback of the interior point method is its numerical instability and ill-conditioned KKT systems near boundary. Tuning the hyperparameter $\lambda$ is also challenging, as it is closely related to the central path of the underlying optimization problem; see \cite{boyd2004convex} for a detailed discussion.

Another drawback is that the method still fails when the barrier function is not differentiable. Nevertheless, if we consider a neural network with an additional non-negative activation applied to the output layer (e.g. sigmoid), then the non-differentiable points associated with projection onto the $l_1$-ball lie outside the range of the output. Hence, the algorithm may still be applied. More generally, when the constraint is non-differentiable, subgradient methods can be utilized, even though they are known to have slower convergence rates.

\subsection{Proximal gradient}
Proximal gradient descent shares a similar formulation with interior point methods, but it accommodates a broader class of constraint functions, including non-differentiable ones. The main idea is to utilize weak Lagrange duality, which gives the best lower bound on the original target function. This holds even if the original problem is not convex. Formally, the Lagrange dual is defined as follows, with $\lambda\in\mathbb{R}_{\geq0}$:

\begin{equation} \label{eq:Lagdual}
    \min_{\theta} -\mathbb{E} [\rho\left(q_t(\pi_\theta(X_t))-Q^0(X_t)\right)]+\lambda (\rho^*(\pi_\theta(X_t)) -1).
\end{equation}

Since the dual problem in \eqref{eq:Lagdual} is still hard to solve, we utilize a secondary dual. The idea is similar to deriving the solution path for generalized Lasso problem in \cite{tibshirani2011solution}, where the $l_1$-penalty is composed with a linear transformation of $\beta$. We start with rewriting the problem in \eqref{eq:Lagdual} as

\begin{equation*}
\min_{z \in \mathbb{R}^d,\, \theta} -\mathbb{E}[\rho\left(q_t (\pi_\theta(X_t))- Q^0(X_t)\right)]+\lambda (\rho^*(z) -1) 
\quad s.t.\; \pi_\theta(X_t) =z,
\end{equation*}
which gives the secondary Lagrange dual
\begin{equation*}
\min_{z \in \mathbb{R}^d,\, \theta} -\mathbb{E} [\rho\left(q_t(\pi_\theta(X_t))-Q^0(X_t)\right)]+\lambda (\rho^*(z) -1) + u(\pi_\theta(X_t)-z).
\end{equation*}

The advantage of separating training variables $z$ and $\theta$ is that it simplifies the gradient for each variable. Now the optimal solution of $\theta$ can be defined as a function of $u$:
\begin{equation} \label{dualtarget}
    g^*(u) = \min_\theta \left\{-\mathbb{E} [\rho\left(q_t (\pi_\theta(X_t))-Q^0(X_t)\right)]+  u\pi_\theta\right\}.  
\end{equation}

On the other hand, the terms that do not involve $\theta$ can be solved by
\begin{equation} \label{dualconstraint}
\min_{z \in \mathbb{R}^d} \lambda (\rho^*(z) -1) -uz = \begin{cases}
    -\lambda, &\text{if $\rho(u) \leq \lambda$}; \\
    -\infty, &\text{if $\rho(u) > \lambda$}.
\end{cases}
\end{equation}

Finally, combining \eqref{dualtarget} and \eqref{dualconstraint} yields the dual formulation of \eqref{eq:Lagdual}:
\begin{equation} \label{dual}
\max_{u \in \mathbb{R}^d} g^*(u) \quad s.t.\; \rho(u) \leq \lambda.
\end{equation}

We again utilize the mean-field solution in \eqref{eq:mean-field-price-characterization} to compute the gradient with respect to $\theta$, and employ proximal gradient descent to update the dual variable $u$, projecting it onto the box $[-\lambda, \lambda]^d$. By nature of the problem, an alternative gradient-based method is an ideal candidate, such as the gradient descent ascent (GDA). Some theoretical analyses of GDA can be found in the literature: \cite{NIPS2017_8a1d6947} proposed the popular two-time scale GDA with convergence guarantees beyond the convex-concave setting; \cite{pmlr-v119-lin20a} provided nonasymptotic complexity result of two-time scale GDA and stochastic GDA, etc.

\vspace{3mm}
Lastly, \cref{tab:summary} summarizes the advantages and limitations of the three existing approaches discussed in this appendix, and compares them with the proposed framework in the main text.
\begin{table}[t]
\begin{center}
\caption{Advantages and Limitations to Different Methods}
\label{tab:summary}
\begin{tabularx}{\linewidth}{|l|X|X|X|} 
 \hline
 \textbf{Method} & \textbf{Convergence}& \textbf{Adaptiveness} & \textbf{Efficiency}\\
 \hline
  Projection layer & \text{\sffamily x} Leads to over-parametrization & \text{\sffamily x} Requires differentiable projection operator & \checkmark \hspace{0.3mm} Utilizes various first-order methods \\
 \hline
 Interior point & \checkmark \hspace{0.3mm} Imposes constraints actively & \text{\sffamily x} Requires differentiable norm as constraints  & \text{\sffamily x} Causes numerical instability near boundary\\ 
 \hline
 Proximal gradient& \text{\sffamily x} Provides only lower bound via weak dual & \checkmark \hspace{0.3mm} Allows for constraints to be non-differentiable & \checkmark \hspace{0.3mm}  Combines gradient descent ascent (GDA) with proximal operators\\ 
 \hline
 Soft classifier & \checkmark \hspace{0.3mm} Imposes constraints via projected gradient descent; Admits universal approximation& \checkmark \hspace{0.3mm} Allows for constraints to be non-differentiable & \checkmark \hspace{0.3mm} Utilizes various first-order methods \\ 
 \hline
\end{tabularx}
\end{center}
\end{table}

\section{Extended case study analysis} \label{appendix:case}
We observe from \autoref{fig:PDS3cities} that our proposed learning framework achieves positive peak demand reduction consistently across cities. From a practical perspective, utilities implementing demand response programs are particularly concerned with whether potential increases in peak demand under dynamic pricing occur on high-demand days, as such outcomes could lead to excessive grid risk. To further quantify this effect, we present additional results on the absolute magnitude of net peak demand under flat-rate and context-aware demand response pricing. These results provide a more comprehensive evaluation of the proposed algorithm’s impact on grid risk.

\autoref{fig:scatter3cities} compares net peak demand under the two pricing schemes. We first observe substantial differences in peak demand across the three cities, which align with the temperature and solar irradiance patterns shown in \autoref{fig:weather3cities}, further confirming that demand is strongly influenced by weather conditions.

Additionally, reductions are consistent across days and remain pronounced during high-demand periods, highlighting the value of the demand response program from a grid reliability perspective. Although observations are relatively sparse at high demand levels, most points lie below the 45-degree line, suggesting that the method shows no evidence of exacerbating peak demand during critical periods. This behavior is particularly important for system operators, as it ensures that the demand response program does not introduce additional risk when the grid is most vulnerable.

\begin{figure}[t]
    \centering

    \begin{subfigure}{0.32\linewidth}
        \centering
        \includegraphics[width=\linewidth]{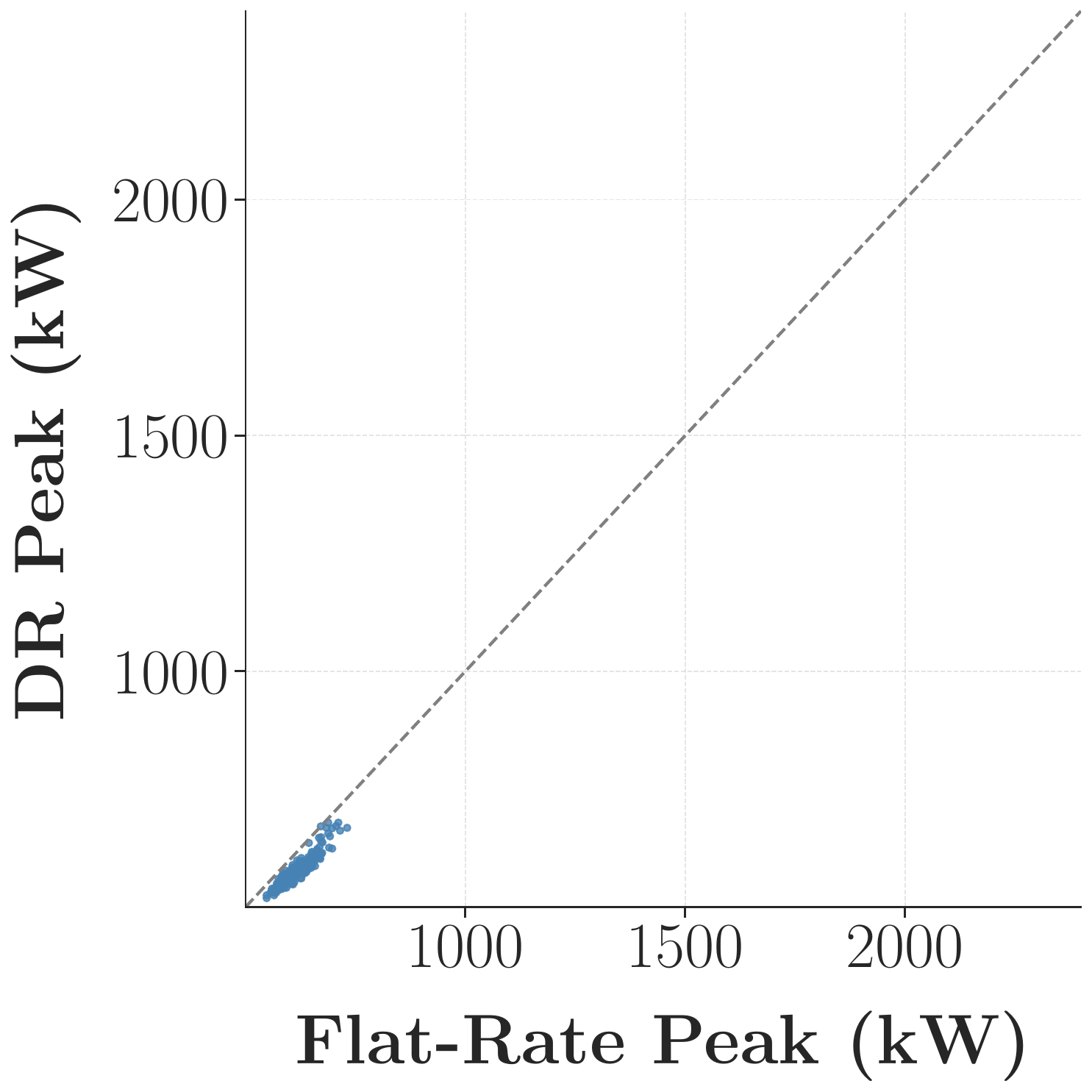}
        \caption{San Francisco}
    \end{subfigure}
    \hfill
    \begin{subfigure}{0.32\linewidth}
        \centering
        \includegraphics[width=\linewidth]{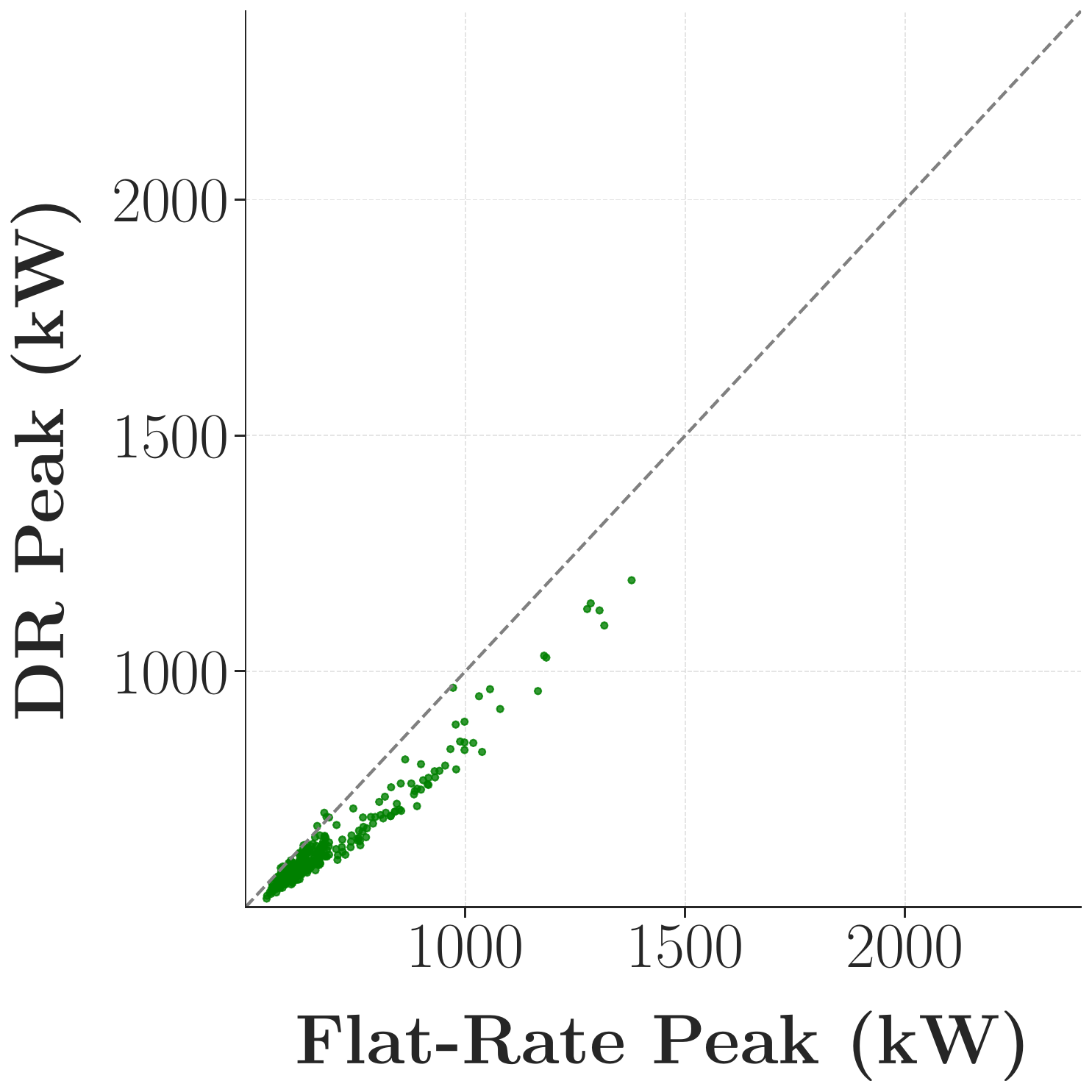}
        \caption{Atlanta}
    \end{subfigure}
    \hfill
    \begin{subfigure}{0.32\linewidth}
        \centering
        \includegraphics[width=\linewidth]{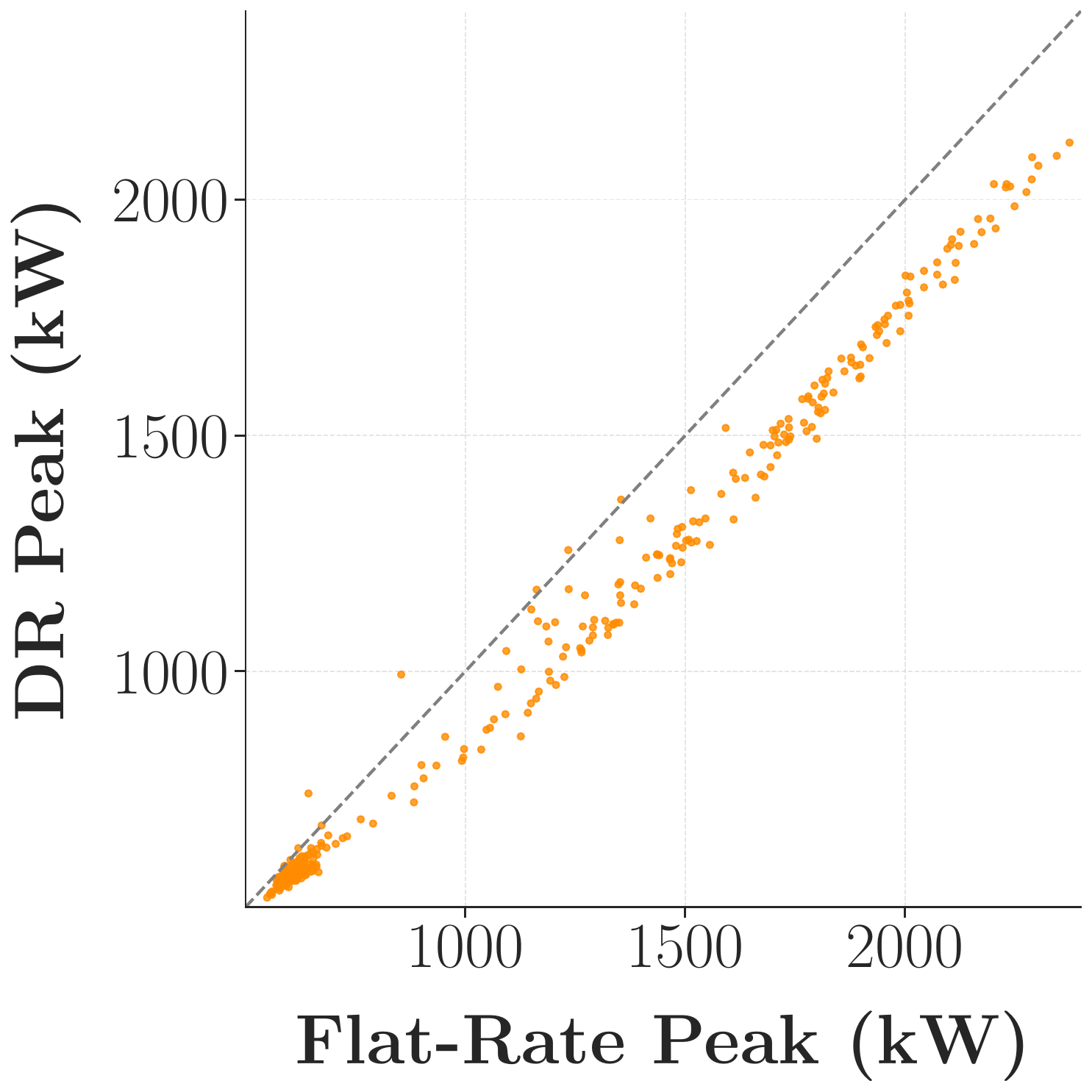}
        \caption{Phoenix}
    \end{subfigure}

    \caption{Comparison of net peak demand under flat-rate and context-aware demand response pricing across cities in 2022. The 45-degree line indicates identical net peak demand under the two pricing schemes. Points below the line correspond to days on which the proposed learning framework achieves lower net peak demand than the flat-rate scheme.}
    \label{fig:scatter3cities}
\end{figure}

\section{Proof of Theorem~\ref{thm:grad}}
\label{sec:grad}
In order to prove Theorem \ref{thm:grad} we first state a result borrowed from \cite{mehrabi2024optimalmechanismsdemandresponse} for the optimal price characterization based on average consumer cost functions, and a subgradient of this objective function based on consumers consumption profiles.  

\begin{theorem}[Theorem~4 of \citet{mehrabi2024optimalmechanismsdemandresponse}]
Under the indifference-set setup, for $n$ consumers let $\{q_i(p)\}_{i\ge 1}$ denote the $\hems$ responses given in \eqref{eq:HEMS}, where the cost of household $i\in[n]$ is $\smash{h_i(p):=p^\sT q_i(p)}$. In the mean-field limit, define the average consumer cost as $H(p):=\lim_{n\to\infty}\frac{1}{n}\sum_{i=1}^n h_i(p)$, and the per-consumer renewable supply as   $q_0:=\lim_{n\to\infty}\frac{Q_0}{n}$.
Then, for a grid risk function $\rho(\cdot)$ and its associated dual set $\mathcal{P}$,\footnote{For a convex, positively homogeneous risk function $\rho(\cdot)$, the associated dual set is
$\mathcal{P}=\{z\in\mathbb{R}^d: z\cdot q\le \rho(q), \ \forall q\in\mathbb{R}^d\}$, so that $\rho$ is the support function of $\mathcal{P}$ \cite[Claim A.2.24]{artstein2015asymptotic}. When $\rho$ is a norm, with dual norm $\rho^*(\cdot)$, this set admits the dual-norm representation
$\mathcal{P}=\{z\in\mathbb{R}^d:\rho^*(z)\le 1\}$.}
the optimal price is characterized by
\begin{equation}
\label{eq:basic}
    p^* \in \arg\max_{p \in \cP} \; H(p)-q_0\cdot p .
\end{equation}
In addition, the consumer cost function $H(\cdot)$ is concave in price $p$, and the mean-field average consumption response is a supergradient of it, formally,
\[
    \lim_{n\to\infty}\frac{1}{n}\sum_{i=1}^n q_i(p)\in \partial H(p).
\]
\end{theorem}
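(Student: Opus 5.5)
The plan is to treat the two claims separately. I would first prove concavity and the supergradient property household by household and pass to the mean-field limit. I would then obtain the price characterization \eqref{eq:basic} from a minimax (Stackelberg $=$ direct-control) argument built on the support-function representation $\rho(q)=\sup_{p\in\cP}p\cdot q$.

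\textbf{Step 1 (concavity and supergradient).} By \eqref{eq:HEMS}, $h_i(p)=\min_{q\in\bR_i}p\cdot q$ is a pointwise minimum of linear functions of $p$, so it is concave. For any $p'$, feasibility of $q_i(p)$ gives
\[
h_i(p')\le p'\cdot q_i(p)=h_i(p)+(p'-p)\cdot q_i(p),
\]
so $q_i(p)\in\partial h_i(p)$. Averaging over $i\in[n]$ preserves both the inequality and concavity. Taking $n\to\infty$, the non-strict inequalities survive pointwise limits. Hence $H$ is concave and $\lim_n\frac1n\sum_i q_i(p)$, whenever it exists, satisfies $H(p')\le H(p)+(p'-p)\cdot\lim_n\frac1n\sum_iq_i(p)$; that is, it lies in $\partial H(p)$.

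\textbf{Step 2 (lower bound via direct control).} After dividing by $n$ (using positive homogeneity of $\rho$), any outcome of \eqref{eq:grid}, and more generally any direct-control choice $q_i\in\bR_i$, has per-consumer risk $\rho(\bar q-q_0)$. Here $\bar q$ lies in the averaged Minkowski set $\bar{\bR}_n=\frac1n\bigoplus_i\bR_i$. By Shapley--Folkman, in the mean-field limit $\bar{\bR}_n$ converges to a convex compact set $\bar{\bR}$ whose support function is $-H(-\cdot)$, i.e.\ $\min_{q\in\bar{\bR}}p\cdot q=H(p)$. Then
\[
\min_{q\in\bar{\bR}}\rho(q-q_0)=\min_{q\in\bar{\bR}}\max_{p\in\cP}p\cdot(q-q_0)=\max_{p\in\cP}\min_{q\in\bar{\bR}}p\cdot(q-q_0)=\max_{p\in\cP}\{H(p)-q_0\cdot p\}.
\]
The exchange of min and max is Sion's minimax theorem: the payoff is bilinear and both sets are convex and compact (I would assume $\cP$ compact, as for norms). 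So no price can beat the value $V:=\max_{p\in\cP}\{H(p)-q_0\cdot p\}$.

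\textbf{Step 3 (attainment by $p^*$).} Let $p^*$ maximize the concave objective in \eqref{eq:basic}. The first-order condition gives some $q^*\in\partial H(p^*)$ with $q^*-q_0\in N_{\cP}(p^*)$. The normal-cone condition means $p^*$ attains $\sup_{p\in\cP}p\cdot(q^*-q_0)$, so
\[
\rho(q^*-q_0)=p^*\cdot(q^*-q_0)=H(p^*)-q_0\cdot p^*=V,
\]
where the middle equality uses $p^*\cdot q^*=H(p^*)$ for supergradients of the positively homogeneous concave $H$ (Euler identity). Since $\partial H(p^*)$ is exactly the set of minimizers of $p^*\cdot q$ over $\bar{\bR}$, i.e.\ the aggregate of household best responses, the grid objective at $p^*$ matches the lower bound of Step 2. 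Hence $p^*$ solves \eqref{eq:grid} in the mean-field limit.

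\textbf{Main obstacle.} The hard part is the \emph{selection} issue in Step 3. When $\partial H(p^*)$ is not a singleton, households' HEMS tie-breaking might aggregate to a point of $\partial H(p^*)$ other than the required $q^*$. I would handle this in two ways. First, show that $\partial H(p^*)$ is a singleton (differentiability of $H$) under a mild non-atomic assumption on the distribution of $\bR_i$: ties then have measure zero, so the aggregate response is forced to equal $q^*$. Failing that, allow arbitrarily small price perturbations $p^*+\varepsilon v$ within $\cP$ that select the desired face, and pass $\varepsilon\to0$ using upper semicontinuity of $\partial H$. A secondary technical point is justifying the Shapley--Folkman limit $\bar{\bR}_n\to\bar{\bR}$, which needs uniform boundedness of the $\bR_i$.
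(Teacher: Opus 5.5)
The paper gives no proof of this statement. It imports the result from \citet{mehrabi2024optimalmechanismsdemandresponse} and uses only the supergradient claim, in the proof of Theorem~\ref{thm:grad}. Your Step 1 proves concavity and that claim correctly, so it covers everything the paper relies on. Step 2 is also fine, and it needs only weak duality rather than Sion and Shapley--Folkman. For any feasible average $\bar q_n$ and any $p\in\cP$, $\rho(\bar q_n-Q^0/n)\ge p\cdot(\bar q_n-Q^0/n)\ge\frac1n\sum_i h_i(p)-p\cdot Q^0/n$, and this inequality passes to the limit.

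The genuine gap is in Step 3, exactly at the selection issue you flag, and neither of your remedies closes it. Non-atomicity of the law of $\bR_i$ does not make $H$ differentiable, because kinks of $H$ come from face normals shared by all households, not from atoms. Take $d=2$, $\rho=\|\cdot\|_2$, and i.i.d.\ $\bR_i=\{(t,1-t):t\in[a_i,b_i]\}$ with $(a_i,b_i)$ continuously distributed, which is a total-energy constraint.
\begin{itemize}
\item Then $h_i(p)=p_2+(p_1-p_2)a_i$ for $p_1\ge p_2$ and $h_i(p)=p_2+(p_1-p_2)b_i$ for $p_1\le p_2$. Hence $H$ is kinked along $p_1=p_2$ for every such law.
\item Let $q_0=(c,1-c)$ with $\mathbb{E}a<c<\mathbb{E}b$. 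Then $H(p)-q_0\cdot p$ equals $(p_1-p_2)(\mathbb{E}a-c)$ on one side and $(p_1-p_2)(\mathbb{E}b-c)$ on the other. So the argmax is exactly the kink set $\{p_1=p_2\}\cap\cP$, $V=0$, and $q^*=q_0$ lies in the relative interior of $\partial H(p^*)=\{(t,1-t):t\in[\mathbb{E}a,\mathbb{E}b]\}$.
\item At $p^*$ every household is indifferent over its whole segment. A HEMS breaking ties toward $t=b_i$ yields risk $\sqrt2(\mathbb{E}b-c)>V$.
\item Any price with $p_1>p_2$, which is not in the argmax, yields risk $\sqrt2(c-\mathbb{E}a)$. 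When $c-\mathbb{E}a<\mathbb{E}b-c$, the argmax price $p^*$ is not even optimal among posted prices.
\end{itemize}
Your perturbation fallback fails as well. Wherever nearby responses are determined, they are gradients of $H$, and $\partial H(p^*)$ is only the convex hull of limits of such gradients. A relative-interior target like $q^*$ is therefore never reached; in the example, perturbations give only $t=\mathbb{E}a$, $t=\mathbb{E}b$, or the same tie-breaking outcome.

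So the statement as restated cannot be proved without an extra hypothesis, and your proof must state one explicitly. There are two natural choices.
\begin{itemize}
\item Assume $H$ is differentiable at $p^*$. Under an i.i.d.\ model this means the HEMS response at $p^*$ is almost surely unique. It constrains how face normals vary across households (e.g.\ continuously distributed device dynamics) and is not implied by non-atomic set sizes. Your Step 1 then forces $\lim_n\frac1n\sum_iq_i(p^*)=\nabla H(p^*)=q^*$.
\item Adopt leader-favorable (strong Stackelberg) tie-breaking. You then need to show that mean-field averages of selections from the households' argmin sets fill $\partial H(p^*)$, so that $q^*$ is realizable. Shapley--Folkman gives this, as does $\partial\,\mathbb{E}h=\mathbb{E}\,\partial h$ (Aumann expectation) for finite concave integrands.
\end{itemize}
With either hypothesis in place, the rest of your Steps 2--3 goes through.
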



We now complete the proof of Theorem~\ref{thm:grad} by applying the chain rule to the price characterization above. For the finite-sample objective, define
\[
\widehat H_n(p):=\frac{1}{n}\sum_{i=1}^n h_i(p).
\]
By the supergradient characterization stated above, for any price $p$, the average selected consumption profile
$\frac{1}{n}\sum_{i=1}^n q_i(p)$ is a supergradient of $\widehat H_n(\cdot)$ at $p$. Therefore, for the price-dependent objective $\widehat H_n(p)-q_0\cdot p$,
a corresponding supergradient with respect to $p$ is
\[
\frac{1}{n}\sum_{i=1}^n q_i(p)-q_0.
\]

Next, by the definition of $\mathsf J_\theta(x)$ in \eqref{eq:J}, the transposed Jacobian of the induced price map $Z\pi_\theta(x)$ with respect to $\theta$ is $\mathsf J_\theta(x)Z^\sT$. Applying the chain rule gives
\begin{equation}
\begin{aligned}
\nabla_{\theta}
\left[
\frac{1}{n}\sum_{i=1}^n h_i(Z\pi_\theta(x))
- q_{0}\cdot Z\pi_\theta(x)
\right]
&=
\mathsf{J}_{\theta}(x)Z^\sT
\left(
\frac{1}{n}\sum_{i=1}^n q_i(Z\pi_\theta(x))
- q_{0}
\right),
\end{aligned}
\end{equation}
which proves \eqref{eq:chain-rule}. When the selected household responses are unique, this expression is the usual gradient. Otherwise, it should be interpreted as a valid supergradient associated with the selected optimal responses.

\section{Proof of Theorem~\ref{thm:univapprox}} \label{appendix:univapprox}
We prove that any measurable function $p:\mathcal{X}\to\mathcal{P}\subseteq\mathbb{R}^d$ can be approximated by functions of the form
\begin{equation*}
    m_k(X) = \sum_{j=1}^{N_k} {Z}^{(k)}_{\cdot j} I_{A^{(k)}_j}(X),
\end{equation*}
where $\{A^{(k)}_j\}_{j=1}^{N_k}$ is a measurable partition of $\mathcal{X}$, each ${Z}^{(k)}_{\cdot j}\in\mathcal{P}$, and $I_{A}(X)$ denotes the indicator function of a set $A$. We construct such a sequence $\{m_k\}$ so that $m_k(X)\to p(X)$ for all $X \in \mathcal{X}$ as $k \to \infty$.

\begin{proof}
We first construct a sequence of vector-valued simple functions approximating $p$. Since $p:\mathcal{X} \to \mathbb{R}^d$ is measurable, each coordinate function $p_\ell:\mathcal{X} \to \mathbb{R}$, $\ell=1,\dots,d$, is measurable. By Lemma~\ref{lemma:simpleapprox}, for each $\ell$ there exists a sequence of scalar simple functions $\{s_{\ell,k}\}$ such that
\begin{equation} \label{eq:simpleapprox}
    s_{\ell,k}(X) \to p_\ell(X), \quad \text{for all } X \in \mathcal{X}\text{ as } k \to \infty.
\end{equation}

Define the vector-valued function $s_k(X) := \bigl(s_{1,k}(X), \dots, s_{d,k}(X)\bigr)^\top$, then $s_k(X) \to p(X)$ pointwise. Since each $s_k$ is a simple function, it takes finitely many values. Hence there exists a measurable partition $\{A^{(k)}_1,\dots,A^{(k)}_{N_k}\}$ of $\mathcal{X}$ and vectors $v^{(k)}_1,\dots,v^{(k)}_{N_k} \in \mathbb{R}^d$ such that
\begin{equation*}
    s_k(X) = \sum_{j=1}^{N_k} v^{(k)}_j I_{A^{(k)}_j}(X).
\end{equation*}

Next, we ensure that the approximating values lie in $\mathcal{P}$. Since $\mathcal{P}$ is compact, for each $k$ and each $j$, we can select ${Z}^{(k)}_{\cdot j} \in \mathcal{P}$ such that
\begin{equation}\label{eq:compact}
    \|{Z}^{(k)}_{\cdot j} - v^{(k)}_j\| \leq \varepsilon_k, \quad \varepsilon_k \downarrow 0.
\end{equation}

Now define $m_k(X) := \sum_{j=1}^{N_k} {Z}^{(k)}_{\cdot j} I_{A^{(k)}_j}(X) = {Z}^{(k)} \cdot I^{(k)}(X)$, then $m_k(X) \in \mathcal{P}$ for all $X$. Combining \eqref{eq:simpleapprox} and \eqref{eq:compact}, for all $X \in \mathcal{X}$,
\begin{equation*}
    \|m_k(X) - p(X)\|
    \leq \|m_k(X) - s_k(X)\| + \|s_k(X) - p(X)\|
    \leq \varepsilon_k + \|s_k(X) - p(X)\|,
\end{equation*}
where $\varepsilon_k \to 0$ and $s_k(X) \to p(X)$. Therefore, we conclude that
\begin{equation*}
    m_k(X) \to p(X), \quad \text{for all } X \in \mathcal{X}.
\end{equation*}

Further, if $p$ is continuous and $\mathcal{X}$ is compact, then $p$ is uniformly continuous. In this case, we can construct the partition $\{A^{(k)}_j\}$ to be sufficiently fine and choose ${Z}^{(k)}_{\cdot j} = p(x^{(k)}_j)$ for some $x^{(k)}_j \in A^{(k)}_j$, which yields
\begin{equation*}
    \sup_{X \in \mathcal{X}} \|m_k(X) - p(X)\| \to 0.
\end{equation*}
This completes the proof.
\end{proof}

\section{Convergence analysis of prototype learning
}\label{app:convergence}

We consider the problem of hard-constraint learning from $n$ samples $\{(x_i,y_i)\}_{i=1}^n$, where we suppose that the covariates $x_i$ are bounded and drawn i.i.d.\ from some distribution in $\mathbb{R}^p$, and the response vectors satisfy $y_i\in\mathcal{P}$
for some compact convex set $\mathcal{P}\subset\mathbb{R}^d$. We consider the response matrix $Y\in\reals^{n\times d}$ whose $i$-th row is $y_i^\sT$, and let $S\in\reals^{n\times K}$ denote the mixture-weight matrix whose $i$-th row is $s_i^\sT\in\Delta_{K-1}$, and for $K$ prototypes we let $Z\in\reals^{d\times K}$ denote the prototype matrix whose $r$-th column is $z_r\in\cP$. Throughout this section, $\Delta_{K-1}=\{s\in\reals^{K}_{\ge 0}:s^\sT \mathbf{1}_{K}=1\}$ denotes the probability simplex on $K$ outcomes, and $\mathsf{soft}:\reals^{K-1}\to\Delta_{K-1}$ denotes the normalized softmax, so that for $u\in\reals^{K-1}$ the $r$-th entry of $\mathsf{soft}(u)$ is $e^{u_r}/(1+\sum e^{u_q})$ for $r\in[K-1]$, and its $K$-th entry is $1/(1+\sum e^{u_q})$. For simplicity of notation, we consider the set of feasible values $\cS=\Delta^{n}_{K-1}$ and $\cZ=\cP^{K}$. We start by considering the following context-free clustering problem, in which we seek the $K$ prototypes that best explain the response vectors.

\begin{equation}
\label{eq:context-free}
\min_{{Z\in\cZ\,,S\in\cS}}
\Big\{ \ell_n(S,Z):=
\frac{1}{2n}\big\|SZ^\sT-Y\big\|_{\mathrm{F}}^2
+\frac{\tau}{n}\,\Tr(S^\sT\log S)
\Big\}\,.
\end{equation}
Here $\log$ is applied entrywise, and for the mixture-weight matrix $S$ whose $i$-th row is $s_i^\sT$, we have $\Tr(S^\sT\log S)=\sum_{i=1}^n s_i^\sT\log s_i$. The loss function $\ell_n(.)$ is convex in $S$ for fixed $Z$ and convex in $Z$ for fixed $S$, but it is not jointly convex in $(S,Z)$, owing to the multiplicative coupling $SZ^\sT$. The entropic term $\tfrac{\tau}{n}\Tr(S^\sT\log S)$ (with $\tau>0$) is strictly convex in $S$ and its gradient diverges at the boundary of the simplex, so it keeps the weights in the relative interior of $\Delta_{K-1}$ and makes the objective strongly convex in $S$ for each fixed $Z$. The optimal weights are then unique, which is especially useful in the overparametrized regime $K>d$, where the unregularized problem need not be identifiable.

For a fixed value of $\tau$, we let $\cC_n$ denote the set of first-order stationary points of the optimization problem~\eqref{eq:context-free}. More precisely, we define
\begin{equation}\label{eq:stationarity}
\cC_n:=\Bigl\{(\hat S,\hat Z)\in\cS\times\cZ:\ \nabla_S\ell_n(\hat S,\hat Z)\cdot(S-\hat S)+\nabla_Z\ell_n(\hat S,\hat Z)\cdot(Z-\hat Z)\ge0\ \text{ for all }S\in\cS,\ Z\in\cZ\Bigr\}.
\end{equation}
In this formulation, the set $\cC_n$ is nonempty, as $\ell_n$ is continuous on the compact set $\cS\times\cZ$ and hence attains its minimum, which is first-order stationary. 

In the hard-constraint learning problem with covariates $x_i\in \reals^p$, we focus on the class of two-layer neural networks with activation function
$\sigma:\mathbb{R}\to\mathbb{R}$. For $N$ neurons, first-layer
parameters $U\in\mathbb{R}^{p\times N}$, output-layer parameters
$\th\in\mathbb{R}^{N\times(K-1)}$, and $K$ prototypes
$Z=(z_1,\ldots,z_{K})\in\mathbb{R}^{d\times K}$, with
$z_1,\ldots,z_{K}\in\mathcal{P}$, we define the following mapping
\begin{equation}\label{eq:neural-network}
f(x;U,\th,Z)
=
Z\,\mathsf{soft}\big(\th^\sT\sigma(U^\sT x)\big)\,.
\end{equation}
Here, $\sigma(\cdot)$ is applied componentwise. We next focus on the iterative update of these network parameters, where at iteration $t$ we let $\beta_t=(U_t,\th_t,Z_t)$, and for the weight matrix $S_t=\mathsf{soft}(\sigma(XU_t)\th_t)$ with $\mathsf{soft}$ applied row-wise, mapping each row of $\sigma(XU_t)\th_t\in\reals^{n\times(K-1)}$ to the corresponding row of $S_t$, we consider the loss function $\ell_n(S_t,Z_t)$ as in \eqref{eq:context-free}. For step sizes $\eta_t$, we consider the following gradient descent updates:
\begin{align}\label{eq:iterates}
(U_{t+1},\th_{t+1})
&=(U_t,\th_t)-\eta_t\nabla_{U,\th}\ell_n\big(\mathsf{soft}(\sigma(XU_t)\th_t),Z_t\big)\,, \nonumber\\
Z_{t+1}
&=\mathsf{proj}_{\cZ}\Big(Z_t-\eta_t \nabla_{Z}\ell_n\big(\mathsf{soft}(\sigma(XU_t)\th_t),Z_t\big)\Big) \,.
\end{align}

In this section, we show that our prototype-based architecture generates a sequence of mixture weights $S_t=\mathsf{soft}(\sigma(XU_t)\th_t)$ and prototypes $Z_t$ that becomes arbitrarily close to the first-order stationary points of the context-free clustering problem \eqref{eq:context-free}. This implies that replacing the freely chosen mixture weights with our shared context-dependent neural network architecture does not prevent the learned mixture weights and prototypes from approaching first-order stationary points of the context-free clustering problem. Before the formal statement of the theorem, we mention a few assumptions.

\begin{assumption}
\label{assumption:activation}
The activation function $\sigma:\reals\to\reals$ is twice continuously differentiable and, writing $\sigma^{(j)}$ for its $j$-th derivative with $\sigma^{(0)}=\sigma$, there exists a constant $A>0$ such that
\[
\max_{j\in\{0,1,2\}}\sup_{v\in\reals}\bigl|\sigma^{(j)}(v)\bigr|\le A\,.
\]
For example, the sigmoid and hyperbolic tangent activation functions satisfy this assumption.
\end{assumption}

\begin{assumption}
\label{assumption:bounded-iterates}
There exists a constant $B<\infty$ such that
\[
\|U_t\|_{\mathrm{F}}\le B
\quad\text{and}\quad
\|\th_t\|_{\mathrm{F}}\le B,
\qquad\forall\,t\ge0.
\]
\end{assumption}
In fact, this boundedness can be enforced directly by projecting the first-layer parameters $(U_t,\th_t)$ onto the ball of radius $B$ at each step, and all subsequent results continue to hold.  We nonetheless keep it as an assumption for simplicity.

\begin{theorem}\label{thm:stationary-points}
Consider the prototype-based neural network for hard-constraint output learning given in \eqref{eq:neural-network}, with an activation function satisfying Assumptions~\ref{assumption:activation} and~\ref{assumption:bounded-iterates}. Suppose further that the first-layer features are well-conditioned, i.e., there exists a constant $c_{\min}>0$ such that
\[
\sigma(XU_t)\sigma(XU_t)^\sT
\succeq c_{\min}I_n,
\qquad \forall\,t\ge0.
\]
If the diminishing step sizes $\eta_t>0$ satisfy the standard conditions\footnote{That is, $\sum_{t\ge0}\eta_t=\infty$ and $\sum_{t\ge0}\eta_t^2<\infty$.}, then the iterates approach the set $\cC_n$ of first-order stationary points defined in \eqref{eq:stationarity}, i.e.,
\[
\lim\limits_{t\to \infty}\mathsf{dist}\bigl((S_t,Z_t),\cC_n\bigr)= 0\,.
\]
\end{theorem}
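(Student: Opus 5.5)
We treat the iteration \eqref{eq:iterates} as a (projected) gradient method on the lifted objective
\[
L(U,\th,Z):=\ell_n\big(\mathsf{soft}(\sigma(XU)\th),Z\big),\qquad Z\in\cZ,
\]
and proceed in two stages. First, we show that the lifted iterates become asymptotically stationary for $L$. Second, we transfer this stationarity back to the context-free problem \eqref{eq:context-free} using the well-conditioning of the features $\sigma(XU_t)$.

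\textbf{Stage 1: asymptotic stationarity of the lifted problem.}

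Assumptions~\ref{assumption:activation} and~\ref{assumption:bounded-iterates} give uniform bounds along the trajectory.
\begin{itemize}
\item The logits $\sigma(XU_t)\th_t$ are bounded by $A\sqrt{N}\|X\|B$.
\item Consequently, every entry of $S_t$ is bounded below by a constant $s_{\min}>0$. This keeps the entropy term $\Tr(S^\sT\log S)$ and its first two derivatives bounded on the region actually visited.
\item $\cZ$ is compact.
\end{itemize}
From these bounds, $\nabla L$ is Lipschitz with some constant $\mathsf L$ on the compact set $\mathcal K$ containing all iterates; the chain rule only involves $\sigma,\sigma',\sigma''$ and smooth softmax and entropy maps there. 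The standard descent lemma for projected gradient steps then gives
\[
L(\beta_{t+1})\le L(\beta_t)-\eta_t\|G_t\|^2+O(\eta_t^2),
\]
where $G_t$ is the gradient mapping. For the $(U,\th)$ block, $G_t$ is the gradient itself. For the $Z$ block, it is $(Z_t-\mathsf{proj}_\cZ(Z_t-\eta_t\nabla_Z L))/\eta_t$.

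Since $L$ is bounded below on $\mathcal K$ and $\sum\eta_t^2<\infty$, summing the descent inequality yields $\sum_t\eta_t\|G_t\|^2<\infty$. Together with $\sum_t\eta_t=\infty$, this gives $\liminf_t\|G_t\|=0$.

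To upgrade to $\lim_t\|G_t\|=0$, we use the classical argument of Bertsekas--Tsitsiklis type. Because $\|\beta_{t+1}-\beta_t\|\le C\eta_t$ and $G$ is Lipschitz in $\beta$, any excursion of $\|G_t\|$ above $2\epsilon$ after dipping below $\epsilon$ must last a total step-length $\sum\eta_t\gtrsim\epsilon$. Each such excursion therefore consumes a fixed amount of $\sum_t\eta_t\|G_t\|^2$. Since that sum is finite, only finitely many excursions can occur, and $\|G_t\|\to0$.

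\textbf{Stage 2: transfer to the context-free problem.}

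By the chain rule,
\[
\nabla_\th L=\sigma(XU_t)^\sT\,\Gamma_t,
\]
where the $i$-th row of $\Gamma_t\in\reals^{n\times(K-1)}$ is $\mathsf J_{\mathsf{soft}}(u_{i})^\sT\nabla_{s_i}\ell_n$, and $u_i$ denotes the $i$-th row of the logits.

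Two facts make this invertible:
\begin{itemize}
\item The condition $\sigma(XU_t)\sigma(XU_t)^\sT\succeq c_{\min}I_n$ means $\sigma(XU_t)^\sT$ has smallest singular value at least $\sqrt{c_{\min}}$. Hence $\|\Gamma_t\|_{\mathrm F}\le c_{\min}^{-1/2}\|\nabla_\th L\|\to0$.
\item The softmax Jacobian $\mathsf J_{\mathsf{soft}}(u)$ maps $\reals^{K-1}$ onto the tangent space $\{v:\mathbf 1^\sT v=0\}$ of the simplex. It has singular values bounded below in terms of $s_{\min}$.
\end{itemize}
Therefore the projection of $\nabla_{s_i}\ell_n(S_t,Z_t)$ onto $\mathbf 1^\perp$ tends to $0$ for every $i$. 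Since $S_t$ stays in the relative interior, this is exactly the variational inequality in the $S$ block. In the $Z$ block, $G_t^Z\to0$ is the projected-gradient stationarity in $Z$.

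Combining the two blocks, the residual
\[
r(S_t,Z_t):=\max_{S\in\cS,Z\in\cZ}\big\{-\nabla\ell_n(S_t,Z_t)\cdot\big((S,Z)-(S_t,Z_t)\big)\big\}
\]
tends to $0$. The function $r$ is continuous on the compact set $\{S:S\ge s_{\min}\}\times\cZ$, and its zero set is exactly $\cC_n$. A standard compactness argument therefore gives $\mathsf{dist}((S_t,Z_t),\cC_n)\to0$: otherwise a subsequence would converge to a point outside $\cC_n$ with zero residual, a contradiction.

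\textbf{Main obstacle.} We expect the hard part to be Stage 1. It requires the uniform smoothness constants, including keeping $S_t$ away from the simplex boundary so that the $\log$ term stays controlled. It also requires the $\liminf$-to-$\lim$ upgrade for the projected $Z$ block, where the gradient mapping depends on $\eta_t$. For the latter, we plan to use monotonicity of $\|G_\eta\|$ in $\eta$ for projections onto convex sets, comparing each $G_t$ against the gradient mapping with a fixed step size.
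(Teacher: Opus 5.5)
Your argument is correct, and its skeleton is the paper's. Both proofs establish smoothness of the composed loss on $\mathcal K$ (Lemma~\ref{lem:smooth}), then use descent plus summability to make $\nabla_{U,\th}\ell_n$ and a $Z$-residual vanish (Lemma~\ref{lem:vanish}), then transfer through $\nabla_\th\ell_n=\sigma(XU_t)^\sT\Gamma_t$ and the well-conditioning bound, and finish by compactness.

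The real difference is how you pass from $\sum_t\eta_t\|\cdot\|^2<\infty$ to convergence. The paper works directly with $a_t=\|\nabla_{U,\th}\ell_n(S_t,Z_t)\|_{\mathrm F}$ and the step-dependent quantity $b_t=\|Z_{t+1}-Z_t\|_{\mathrm F}/\eta_t$. With $d_t=(a_t^2+b_t^2)^{1/2}$, it proves $a_{t+1}\le a_t+L\eta_t d_t$ and $b_{t+1}\le b_t+L\eta_t d_t$. The second bound uses a projection-comparison trick: a second projected step from $Z_{t+1}$ with the same gradient has normalized length at most $b_t$. The paper then closes with a Robbins--Siegmund-type monotonicity argument, which yields the stronger conclusion $b_t\to0$.

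You instead pass to a fixed-step gradient mapping $G_{\bar\eta}$, which is Lipschitz in $\beta$. You inherit summability from $\|G_{\eta_t}\|\ge\|G_{\bar\eta}\|$ when $\eta_t\le\bar\eta$, and then run the classical upcrossing argument. Be aware that this delivers $\|G_{\bar\eta}(\beta_t)\|\to0$, not $\|G_{\eta_t}(\beta_t)\|\to0$, because the comparison only goes one way. Your Stage~1 conclusion should therefore be stated for $G_{\bar\eta}$.

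That is all Stage~2 needs. First, $\|Z_t-\mathsf{proj}_\cZ(Z_t-\bar\eta\nabla_Z\ell_n(S_t,Z_t))\|_{\mathrm F}\to0$, together with bounded gradients and compactness of $\cZ$, forces the $Z$-part of your residual $r$ to zero. Second, the $(U,\th)$-block of $G_{\bar\eta}$ is just the plain gradient.

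Your Stage~2 is also slightly cleaner than the paper's. The uniform lower bound on the softmax Jacobian's singular values (onto $\mathbf 1^\perp$) kills the tangential part of $\nabla_S\ell_n(S_t,Z_t)$ along the whole trajectory. The paper only obtains row-constancy of $\nabla_S\ell_n$ at subsequential limits, by dividing by $\hat S_{ir}>0$.

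One small slip: the logits are bounded row-wise by $A\sqrt N B$, since $|\sigma|\le A$ whatever the input. The factor $\|X\|$ in your bound is spurious.
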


The lower bound $\sigma(XU_t)\sigma(XU_t)^\sT\succeq c_{\min}I_n$ forces the $n\times N$ feature matrix $\sigma(XU_t)$ to have full row rank uniformly in $t$, which is only possible when there are at least as many neurons as samples, i.e., in the overparameterized regime $N\ge n$. 

\begin{corollary}\label{cor:isolated}
Suppose the assumptions of Theorem~\ref{thm:stationary-points} hold and, in addition, that the stationary points in $\cC_n$ are isolated, i.e., for every $(\hat S,\hat Z)\in\cC_n$ there exists $\rho>0$ such that the ball $B_\rho(\hat S,\hat Z)$ satisfies $B_\rho(\hat S,\hat Z)\cap\cC_n=\{(\hat S,\hat Z)\}$. Then the iterates of the prototype architecture converge to a single first-order stationary point. More precisely, there exists $(\hat S,\hat Z)\in\cC_n$ with $(S_t,Z_t)\to(\hat S,\hat Z)$, as $t\to\infty$.
\end{corollary}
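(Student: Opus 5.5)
The plan is to deduce Corollary~\ref{cor:isolated} from Theorem~\ref{thm:stationary-points}, combining it with a standard connectedness argument for the $\omega$-limit set of the trajectory $w_t:=(S_t,Z_t)$. Let $L\subset\cS\times\cZ$ denote the set of limit points of $(w_t)$. Since $\cS\times\cZ$ is compact, $L$ is nonempty and compact. By Theorem~\ref{thm:stationary-points}, $\mathsf{dist}(w_t,\cC_n)\to0$. The set $\cC_n$ is closed, because the variational inequality in \eqref{eq:stationarity} is preserved under limits: $\nabla\ell_n$ is continuous on the relative interior, and one checks the boundary behaviour of the entropy term separately. Hence $L\subseteq\cC_n$.

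The key step is to show that successive iterates become close, i.e.\ $\|w_{t+1}-w_t\|\to0$. The $Z$-update is a projected step. By nonexpansiveness of $\mathsf{proj}_\cZ$ and boundedness of $\nabla_Z\ell_n$ (since $S_t\in\cS$ and $Z_t,Y$ lie in compact sets), we get $\|Z_{t+1}-Z_t\|\le C\eta_t$. For $S_t=\mathsf{soft}(\sigma(XU_t)\th_t)$, the softmax and $\sigma$ are Lipschitz (Assumption~\ref{assumption:activation}), and $X,U_t,\th_t$ are bounded (Assumption~\ref{assumption:bounded-iterates}). So it suffices that $\|\nabla_{U,\th}\ell_n\|$ stays bounded along the trajectory; then $\|S_{t+1}-S_t\|\le C'\eta_t$, and both increments vanish because $\eta_t\to0$. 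The one delicate point is the entropy gradient $\log S$, which could blow up. However, with bounded logits $\sigma(XU_t)\th_t$, the entries of $S_t$ are bounded below by a positive constant depending on $A,B,\|X\|$. This keeps $\log S_t$ bounded and so closes this step.

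Next comes the standard fact that a bounded sequence with $\|w_{t+1}-w_t\|\to0$ has a connected limit set $L$. Suppose instead that $L=L_1\cup L_2$ with $L_1,L_2$ disjoint nonempty compact sets at positive distance $\delta$. Then the sequence enters the $\delta/3$-neighbourhoods of both $L_1$ and $L_2$ infinitely often. Since its steps eventually have length less than $\delta/3$, it must also visit the compact region between these neighbourhoods infinitely often, which yields a limit point outside $L_1\cup L_2$, a contradiction.

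To conclude, pick any $\hat w\in L\subseteq\cC_n$. Isolatedness gives $\rho>0$ with $B_\rho(\hat w)\cap\cC_n=\{\hat w\}$, so $L\cap B_\rho(\hat w)=\{\hat w\}$. Then $\{\hat w\}$ is relatively open in $L$ (as $L\cap B_\rho(\hat w)$) and closed, so connectedness of $L$ forces $L=\{\hat w\}$. A bounded sequence with a single limit point converges to it, so $(S_t,Z_t)\to(\hat S,\hat Z)$. I expect the main obstacle to be the second paragraph, namely controlling $\|S_{t+1}-S_t\|$ through the network parametrization, in particular the uniform lower bound on $S_t$ that keeps the entropy gradient bounded. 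Assumption~\ref{assumption:bounded-iterates} is exactly what makes this work.
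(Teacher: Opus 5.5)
Your proposal is correct and follows the same skeleton as the paper's proof: vanishing increments $\|S_{t+1}-S_t\|_{\mathrm{F}},\|Z_{t+1}-Z_t\|_{\mathrm{F}}\to0$, together with $\mathsf{dist}\bigl((S_t,Z_t),\cC_n\bigr)\to0$ and isolation of stationary points, force convergence to a single point. The differences are minor. You bound the increments by $C\eta_t$ using gradient boundedness on the compact parameter set, where the paper invokes Lemma~\ref{lem:vanish}. You also make the ``cannot hop between stationary points'' step rigorous via connectedness of the limit set, in place of the paper's informal separation argument based on finiteness of $\cC_n$.
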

\begin{proof}
We first show that $\cC_n$ is nonempty and compact. Non-emptiness follows since $\ell_n$ is continuous on the compact set $\cS\times\cZ$ and therefore attains its minimum, and any minimizer satisfies the first-order optimality condition~\eqref{eq:stationarity} and so it belongs to $\cC_n$. For compactness, we recall that the set of first-order stationary points of a continuously differentiable function over a closed convex set is itself closed, since the defining first-order optimality conditions are preserved under limits~\citep{Bertsekas1997}. In addition, as $\cC_n\subseteq\cS\times\cZ$ is a closed subset of the compact set $\cS\times\cZ$, it is bounded and hence compact. Since $\cC_n$ is compact with isolated points, it is finite. This holds mainly, as around each point of $\cC_n$ there is an open ball containing no other point of $\cC_n$. Therefore, these balls form an open cover of $\cC_n$, and by compactness a finite subcover suffices, so $\cC_n$ can contain only finitely many points. We claim that the moving parts of the iteration also go to zero. More precisely, we claim that $\|S_{t+1}-S_t\|_{\mathrm{F}}\to0$ and $\|Z_{t+1}-Z_t\|_{\mathrm{F}}\to0$. We prove this claim later, but use it for now. Recalling $\mathsf{dist}\big((S_t,Z_t),\cC_n\big)\to0$ from Theorem~\ref{thm:stationary-points}, the iterates must converge to a single stationary point. This holds as distinct points of $\cC_n$ are separated by a positive distance, so once the iterates are close to $\cC_n$ and their increments are small, they can no longer move from the vicinity of one stationary point to that of another, and must converge to one point. 

It remains to establish the claim. For the prototypes, $\|Z_{t+1}-Z_t\|_{\mathrm{F}}=\eta_t\big(\|Z_{t+1}-Z_t\|_{\mathrm{F}}/\eta_t\big)\to0$, since $\eta_t\to0$ and $\|Z_{t+1}-Z_t\|_{\mathrm{F}}/\eta_t\to0$ by Lemma~\ref{lem:vanish}. For the weights, the first-layer update gives $\|(U_{t+1},\th_{t+1})-(U_t,\th_t)\|_{\mathrm{F}}=\eta_t\|\nabla_{U,\th}\ell_n(S_t,Z_t)\|_{\mathrm{F}}\to0$ by Lemma~\ref{lem:vanish}. In addition, since $S_t=\mathsf{soft}(\sigma(XU_t)\th_t)$ is a continuously differentiable function of $(U_t,\th_t)$ (see the proof of Lemma~\ref{lem:smooth}), hence Lipschitz on the compact set $\mathcal K$, it follows that $\|S_{t+1}-S_t\|_{\mathrm{F}} \to0$ as well.
\end{proof}
We emphasize that isolation of the stationary points is a property of the data distribution, not of the learning architecture. This implies that, whenever this isolation property holds, the prototype-based iterates inherit convergence to a single stationary point.
\subsection{Proof of Theorem~\ref{thm:stationary-points}}
We first state the following two lemmas, whose proofs are deferred to Sections~\ref{subsec:proof-smooth} and~\ref{subsec:proof-vanish}. The first establishes the Lipschitzness of the loss $\ell_n$ on $\mathcal{K}$, and the second the vanishing of its gradient with respect to $(U,\th)$ and to $Z$.

\begin{lemma}\label{lem:smooth}
Under Assumptions~\ref{assumption:activation} and \ref{assumption:bounded-iterates}, the composed loss $(U,\th,Z)\mapsto\ell_n(\mathsf{soft}(\sigma(XU)\th),Z)$ is twice continuously differentiable, and on the compact convex set
\[
\mathcal{K}:=\{U:\|U\|_{\mathrm{F}}\le B\}\times\{\th:\|\th\|_{\mathrm{F}}\le B\}\times\cZ
\]
its gradient is $L$-Lipschitz, with $L$ the maximum of the Hessian norm of the composed loss over $\mathcal{K}$. Here $L$-Lipschitzness holds jointly in the parameters $\beta=(U,\th,Z)$, that is, $\|\nabla\ell_n(\beta)-\nabla\ell_n(\beta')\|_{\mathrm{F}}\le L\|\beta-\beta'\|_{\mathrm{F}}$ for all $\beta,\beta'\in\mathcal{K}$.

\end{lemma}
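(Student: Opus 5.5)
The plan is to write the composed loss as a composition of maps that are each smooth on the relevant domain, to use compactness to get uniform positivity of the softmax weights (which removes the only singularity, the entropy term), and then to deduce Lipschitzness of the gradient from a bounded Hessian on a convex set via the mean value theorem.

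\textbf{Smoothness of the composition.} I would write $\beta=(U,\th,Z)\mapsto \ell_n(\mathsf{soft}(\sigma(XU)\th),Z)$ as the composition of four maps:
\begin{itemize}
\item $(U,\th)\mapsto M:=\sigma(XU)\th\in\reals^{n\times(K-1)}$, which is $C^2$ because $\sigma\in C^2$ by Assumption~\ref{assumption:activation} and the remaining operations are polynomial;
\item the row-wise normalized softmax $M\mapsto S$, which is $C^\infty$;
\item $(S,Z)\mapsto \frac{1}{2n}\|SZ^\sT-Y\|_{\mathrm F}^2$, which is polynomial;
\item the entropy term $S\mapsto\frac{\tau}{n}\sum_{i,r}S_{ir}\log S_{ir}$, which is $C^\infty$ only on the open set $\{S_{ir}>0\}$.
\end{itemize}
The only point needing care is therefore that $S$ stays strictly inside the simplex, and this is where compactness enters.

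\textbf{Uniform positivity of the weights.} The covariates are bounded, $\|U\|_{\mathrm F}\le B$ and $\|\th\|_{\mathrm F}\le B$, and $|\sigma|\le A$, so every entry of $M$ satisfies $|M_{ir}|\le \|\sigma(XU)\|_{\mathrm F}\|\th\|_{\mathrm F}\le \sqrt{nN}\,A\,B=:R$. The normalized softmax then gives every entry of $S$ a lower bound of $e^{-R}/(1+(K-1)e^{R})=:\underline s>0$. The same bound, with a slightly larger constant, holds on an open neighborhood of $\mathcal K$. Hence on that neighborhood the image of $(U,\th)$ lies in $\{S:S_{ir}>\underline s/2\}$, where the entropy term is $C^\infty$. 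By the chain rule the composed loss is $C^2$ on a neighborhood of $\mathcal K$.

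\textbf{Lipschitz gradient.} $\mathcal K$ is compact: it is a product of two closed balls and $\cZ=\cP^K$ with $\cP$ compact. Since the Hessian $\nabla^2\ell_n$ is continuous on $\mathcal K$, its operator norm attains a finite maximum $L$ there. $\mathcal K$ is also convex, being a product of convex sets, so for $\beta,\beta'\in\mathcal K$ the segment between them lies in $\mathcal K$. Writing
\[
\nabla\ell_n(\beta)-\nabla\ell_n(\beta')=\int_0^1\nabla^2\ell_n\bigl(\beta'+s(\beta-\beta')\bigr)(\beta-\beta')\,ds
\]
gives $\|\nabla\ell_n(\beta)-\nabla\ell_n(\beta')\|_{\mathrm F}\le L\|\beta-\beta'\|_{\mathrm F}$.

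\textbf{Main obstacle.} The main step is the uniform positivity bound. Without it the Hessian of the entropy term, whose diagonal entries are $\tau/(nS_{ir})$, would be unbounded and no finite $L$ would exist. This is exactly why Assumption~\ref{assumption:bounded-iterates} and the boundedness of $\sigma$ and of the covariates are needed. Everything else is routine chain-rule bookkeeping. One could also give $L$ explicitly in terms of $A,B,n,N,K,\tau$ and $\sup_{z\in\cP}\|z\|$, but the statement only requires finiteness.
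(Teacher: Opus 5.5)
Your proof is correct and follows essentially the same route as the paper: write the composed loss as a composition of $C^2$ pieces, use strict positivity of the softmax to handle the entropy term, and take $L$ to be the maximum of the continuous Hessian over the compact set $\mathcal K$. The differences are minor. The paper uses pointwise positivity of the softmax at every finite logit, so the composed map is $C^2$ everywhere and finiteness of $L$ follows from compactness alone; your uniform lower bound is therefore a harmless quantitative refinement rather than the real obstacle you describe, and run with an arbitrary radius it also yields the global $C^2$ claim. In the other direction, you spell out the convexity and integral mean-value step that the paper leaves implicit.
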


\begin{lemma}\label{lem:vanish}
Consider the iterates $(U_t,\th_t,Z_t)$ generated by \eqref{eq:iterates}, and write $S_t=\mathsf{soft}(\sigma(XU_t)\th_t)$. Under Assumptions~\ref{assumption:activation} and \ref{assumption:bounded-iterates}, and diminishing step sizes $\eta_t>0$ satisfying $\sum_{t\ge0}\eta_t=\infty$ and $\sum_{t\ge0}\eta_t^2<\infty$, the iterates satisfy
\[
\lim\limits_{t\to \infty}\big\|\nabla_{U,\th}\ell_n(S_t,Z_t)\big\|_{\mathrm{F}}= 0\,,
\qquad
\lim\limits_{t\to \infty}\frac{\|Z_{t+1}-Z_t\|_{\mathrm{F}}}{\eta_t}= 0\,.
\]

\end{lemma}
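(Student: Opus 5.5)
The plan is the standard descent-lemma argument for projected gradient descent with diminishing steps, applied jointly to the composite parameter $\beta_t=(U_t,\th_t,Z_t)$. We write $F(\beta):=\ell_n(\mathsf{soft}(\sigma(XU)\th),Z)$.

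\textbf{Step 1: the update is a single projected gradient step.} The iteration \eqref{eq:iterates} is exactly one projected gradient step on $F$ over the closed convex set $\reals^{p\times N}\times\reals^{N\times(K-1)}\times\cZ$. The projection acts only on the $Z$-block. By Assumption~\ref{assumption:bounded-iterates}, every iterate lies in $\mathcal{K}$.

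\textbf{Step 2: sufficient decrease.} Let $G_t:=(\beta_t-\beta_{t+1})/\eta_t$ be the gradient mapping. Combining Lemma~\ref{lem:smooth} (the gradient is $L$-Lipschitz on the convex set $\mathcal{K}$) with the variational inequality of the projection, $\langle \beta_t-\eta_t\nabla F(\beta_t)-\beta_{t+1},\beta_t-\beta_{t+1}\rangle\le 0$, gives
\[
F(\beta_{t+1})\le F(\beta_t)-\eta_t\Big(1-\tfrac{L\eta_t}{2}\Big)\|G_t\|_{\mathrm F}^2 .
\]

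\textbf{Step 3: summability.} Since $\eta_t\to0$, we have $1-L\eta_t/2\ge 1/2$ for all large $t$. The function $F$ is bounded below on $\mathcal{K}$: the squared error is nonnegative, and the entropy term is at least $-\tau\log K$ per sample. Telescoping therefore yields $\sum_t\eta_t\|G_t\|^2<\infty$.

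\textbf{Step 4: from $\liminf$ to $\lim$.} Step 3 together with $\sum_t\eta_t=\infty$ gives $\liminf_t\|G_t\|=0$. To upgrade this to a full limit, the plan is the classical Bertsekas--Tsitsiklis argument. The map $\beta\mapsto G(\beta)$ is Lipschitz because projections are nonexpansive and $\nabla F$ is Lipschitz; for the projection this requires the usual comparison across different step sizes, using that $\eta\mapsto\|G_\eta\|$ is nonincreasing. Increments are $\|\beta_{t+1}-\beta_t\|=\eta_t\|G_t\|$. Suppose $\|G_t\|$ upcrosses $\epsilon$ and $2\epsilon$ infinitely often. Each excursion then requires $\sum\eta_t\gtrsim \epsilon/L'$ over stretches where $\|G_t\|\ge\epsilon$, so each costs at least $\epsilon^3/L'$ in $\sum\eta_t\|G_t\|^2$, contradicting summability. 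Hence $G_t\to0$. This proves both claims of Lemma~\ref{lem:vanish}: the $(U,\th)$-block of $G_t$ is $\nabla_{U,\th}F$, and the $Z$-block is $(Z_t-Z_{t+1})/\eta_t$.

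\textbf{Step 5: transfer stationarity from $(U,\th)$ to $S$.} The chain rule gives $\nabla_\th F=\sigma(XU_t)^\sT M_t$, where $M_t\in\reals^{n\times(K-1)}$ has rows $J_{\mathsf{soft}}(u_i)^\sT\nabla_{s_i}\ell_n$. The row-wise Jacobian of $\mathsf{soft}$ is $\mathrm{diag}(s)-ss^\sT$ restricted to the last-coordinate-normalized chart. The well-conditioning assumption gives
\[
\|\sigma(XU_t)^\sT M_t\|^2\ge c_{\min}\|M_t\|^2 ,
\]
so $M_t\to0$.

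Since $s_i$ is interior, the vanishing softmax-Jacobian image means that $\nabla_{s_i}\ell_n$ becomes asymptotically constant across coordinates. This is exactly the KKT condition on the simplex, where the multiplier for $\sum_k s_{ik}=1$ is active and no boundary multipliers are needed. The obstacle is uniformity: the Jacobian degenerates if $s_i$ approaches the boundary. To handle this, note first that $U_t,\th_t$ are bounded and $\sigma$ is bounded, so the logits are bounded. Hence $S_t$ stays in a compact subset of the open simplex, with entries bounded below by some $s_{\min}>0$. On that subset the Jacobian restricted to the tangent space $\{v:\mathbf 1^\sT v=0\}$ has smallest singular value bounded below, which gives the required uniformity.

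\textbf{Step 6: conclude $\mathsf{dist}((S_t,Z_t),\cC_n)\to0$.} Define the stationarity residual
\[
r(S,Z)=\max_{S',Z'}\big[-\nabla\ell_n\cdot((S',Z')-(S,Z))\big]
\]
on the compact set $\{S\ge s_{\min}\}\times\cZ$. The residual $r$ is continuous, and $r(S_t,Z_t)\to0$: the $S$-part follows from Step 5, and the $Z$-part from the projected-gradient-mapping characterization with $(Z_t-Z_{t+1})/\eta_t\to0$. Suppose some subsequence stays at distance at least $\epsilon$ from $\cC_n$. Compactness gives a limit point with $r=0$, which therefore lies in $\cC_n$, a contradiction.

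The main obstacle is the $\liminf$-to-$\lim$ upgrade in Step 4, which has to be carried out with the projected gradient mapping rather than a plain gradient.
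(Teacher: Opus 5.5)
Your Steps 1--3 are sound and match the first half of the paper's proof. Treating $\beta=(U,\theta,Z)$ as one projected gradient step on $\mathcal D:=\reals^{p\times N}\times\reals^{N\times(K-1)}\times\cZ$ is a bit cleaner than the paper's two-stage descent, which expands at $(S_{t+1},Z_t)$ but invokes the projection inequality for $\nabla_Z\ell_n(S_t,Z_t)$. Steps 5--6 belong to Theorem~\ref{thm:stationary-points}, not to this lemma.

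The gap is in Step 4, which is the real content of the lemma. You need an increment bound $\|G_{t+1}\|\le\|G_t\|+L'\eta_t\|G_t\|$ with $L'$ independent of $t$, and neither fact you cite gives it.

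\textbf{The Lipschitz claim.} The gradient mapping $G_\eta(\beta)=(\beta-\Pi_{\mathcal D}(\beta-\eta\nabla F(\beta)))/\eta$ is Lipschitz only with a constant of order $1/\eta$, and this order is sharp. In one dimension with feasible set $\{z\le 0\}$ and $\nabla F\equiv-1$, one has $G_\eta(z)=z/\eta$ on $[-\eta,0]$. Multiplying by $\|\beta_{t+1}-\beta_t\|=\eta_t\|G_t\|$ therefore only bounds the per-step change of $\|G\|$ by something of order $\|G_t\|$. That is useless for counting upcrossings.

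\textbf{The step-size comparison.} It points the wrong way. Since $\eta\mapsto\|G_\eta(\beta)\|$ is nonincreasing, $\eta_{t+1}<\eta_t$ gives $\|G_{\eta_{t+1}}(\beta_{t+1})\|\ge\|G_{\eta_t}(\beta_{t+1})\|$, which is a lower bound. The companion monotonicity of $\eta\|G_\eta\|$ only yields $\|G_{\eta_{t+1}}\|\le(\eta_t/\eta_{t+1})\|G_{\eta_t}\|$. The conditions $\sum\eta_t=\infty$, $\sum\eta_t^2<\infty$ do not control this ratio, since the steps may drop by a fixed factor infinitely often.

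\textbf{The missing idea.} You need a comparison that exploits the fact that $\beta_{t+1}$ is itself a projected step from $\beta_t$. Freeze $g_t:=\nabla F(\beta_t)$ and set $\hat\beta_{t+2}:=\Pi_{\mathcal D}(\beta_{t+1}-\eta_{t+1}g_t)$. Test the projection inequality for $\beta_{t+1}=\Pi_{\mathcal D}(\beta_t-\eta_t g_t)$ at $\hat\beta_{t+2}$, test the one for $\hat\beta_{t+2}$ at $\beta_{t+1}$, and add. This gives
\[
\frac{\|\beta_{t+1}-\hat\beta_{t+2}\|_{\mathrm F}}{\eta_{t+1}}\le\frac{\|\beta_t-\beta_{t+1}\|_{\mathrm F}}{\eta_t}=\|G_t\|_{\mathrm F}
\]
for every $\eta_{t+1}>0$, with no monotonicity of the steps required.

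Only then swap $g_t$ for $g_{t+1}$. By nonexpansiveness this costs $\|g_{t+1}-g_t\|\le L\eta_t\|G_t\|$, uniformly in the step size, so $\|G_{t+1}\|\le(1+L\eta_t)\|G_t\|$. This is exactly the paper's bound $b_{t+1}\le b_t+L\eta_t d_t$ for the $Z$-block. The $(U,\theta)$-block is unconstrained and needs only the Lipschitzness of $\nabla F$.

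With this inequality your upcrossing argument goes through. The paper finishes more simply: $\|G_{t+1}\|^2\le\|G_t\|^2+C\eta_t\|G_t\|^2$ makes $\|G_t\|^2+C\sum_{s\ge t}\eta_s\|G_s\|^2$ nonincreasing, so $\|G_t\|$ converges. Then $\sum_t\eta_t=\infty$ together with $\sum_t\eta_t\|G_t\|^2<\infty$ forces the limit to be $0$.
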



We now focus on proving Theorem~\ref{thm:stationary-points}. We start by defining the gradient matrix $G\in\reals^{n\times(K-1)}$ as a function of $(S,Z)$ by
\[
G_{ir}
=
S_{ir}\Big(
\bigl(\nabla_S\ell_n(S,Z)\bigr)_{ir}
-\sum_{q=1}^{K}S_{iq}\bigl(\nabla_S\ell_n(S,Z)\bigr)_{iq}
\Big),
\qquad \forall\,i\in[n],\ r\in[K-1].
\]
We let $G_t$ denote the matrix $G$ defined above, evaluated at $(S_t,Z_t)$. Chain rule gives us
\[
\nabla_\th\ell_n(S_t,Z_t)
=
\sigma(XU_t)^\sT G_t.
\]
Therefore, the well-conditioning assumption gives
\[
\left\|\nabla_\th\ell_n(S_t,Z_t)\right\|_{\mathrm{F}}^2
=\Tr\left(
G_t^\sT\sigma(XU_t)\sigma(XU_t)^\sT G_t
\right)
\ge c_{\min}\|G_t\|_{\mathrm{F}}^2\,.
\]
Finally, Lemma~\ref{lem:vanish} gives us $\|G_t\|_{\mathrm{F}}\to0$.

Now consider any convergent subsequence $(S_{t_j},Z_{t_j})\to(\hat S,\hat Z)$\footnote{Since $\cS\times\cZ$ is compact, every sequence in it admits a subsequence converging to a point of $\cS\times\cZ$.}. The boundedness of the logits implies that every entry of $\hat S$ is strictly positive. Taking limits in the definition of $G_t$, for every $i\in[n]$ and $r\in[K-1]$ we obtain
\[
\bigl(\nabla_S\ell_n(\hat S,\hat Z)\bigr)_{ir}
=
\sum_{q=1}^{K}\hat S_{iq}
\bigl(\nabla_S\ell_n(\hat S,\hat Z)\bigr)_{iq}.
\]
Since the right-hand side does not depend on $r$ and $\hat S_{iK}>0$, the same equality also holds for $r=K$. Thus, each row of $\nabla_S\ell_n(\hat S,\hat Z)$ is constant, which gives
\[
\nabla_S\ell_n(\hat S,\hat Z)\cdot(S-\hat S)=0,
\qquad \forall\,S\in\cS.
\]
In addition, the optimality condition of the $\cZ$-projection gives
\[
\Big(
\nabla_Z\ell_n(S_t,Z_t)
-\frac{Z_t-Z_{t+1}}{\eta_t}
\Big)\cdot(Z-Z_{t+1})\ge0,
\qquad \forall\,Z\in\cZ.
\]
Taking limits along the same subsequence and using
$\|Z_{t+1}-Z_t\|_{\mathrm{F}}/\eta_t\to0$, we obtain
\[
\nabla_Z\ell_n(\hat S,\hat Z)\cdot(Z-\hat Z)\ge0,
\qquad \forall\,Z\in\cZ.
\]
Combining the last two displays shows that $(\hat S,\hat Z)\in\cC_n$. 

Finally, we show that $\limsup_{t\to\infty}\mathsf{dist}\bigl((S_t,Z_t),\cC_n\bigr)=0$. Suppose this does not happen. It means that for some $\eps>0$ there exists a subsequence satisfying $\mathsf{dist}((S_{t_j},Z_{t_j}),\cC_n)\ge\eps$. Since $\cS\times\cZ$ is compact, this subsequence has a further convergent subsequence. But from the previous argument we know that this limiting point must belongs to $\cC_n$. As $\mathsf{dist}(\cdot,\cC_n)$ is continuous, the distance along it tends to $0$, contradicting the lower bound $\varepsilon$. Hence $\lim\sup_{t\to\infty}\mathsf{dist}\bigl((S_t,Z_t),\cC_n\bigr)=0$. On the other hand,  $\mathsf{dist}(\cdot,\cC_n)$ is nonnegative, this implies that
\[
0\le\lim\inf_{t\to\infty}\mathsf{dist}\bigl((S_t,Z_t),\cC_n\bigr)\le\lim\sup_{t\to\infty}\mathsf{dist}\bigl((S_t,Z_t),\cC_n\bigr)=0,
\]
so the limit exists and therefore 
$\mathsf{dist}\bigl((S_t,Z_t),\cC_n\bigr)\to 0$, as $t\to \infty$.

\subsection{Proof of Lemma~\ref{lem:smooth}}\label{subsec:proof-smooth}
We claim that the following loss function is two-times continuously differentiable everywhere:
\[
(U,\th,Z)\to \ell_n\bigl(\mathsf{soft}(\sigma(XU)\th),Z\bigr)\,.
\]
To show this, we start with the map $(U,\th)\mapsto\sigma(XU)\th$. The map $U\mapsto XU$ is linear, and $\sigma$ is twice continuously differentiable by Assumption~\ref{assumption:activation}, so the composition $\sigma(XU)$ is twice continuously differentiable in $U$. Multiplying by $\th$ is again a linear operation, so $\sigma(XU)\th$ is twice continuously differentiable jointly in $(U,\th)$. We next focus on $\mathsf{soft}$. Each coordinate of $\mathsf{soft}$ has the form $e^{v_r}/(1+\sum_{q}e^{v_q})$, and the exponential function is smooth everywhere and strictly positive at every finite point, so the denominator is always strictly positive and $\mathsf{soft}$ is twice continuously differentiable on all of $\reals^{K-1}$. Composing this with the map from the previous paragraph shows that $S=\mathsf{soft}(\sigma(XU)\th)$ is twice continuously differentiable in $(U,\th)$.

It remains to deal with the $\log S$ term appearing in $\ell_n$. Using the same expression for $\mathsf{soft}$ as a ratio of exponentials, every coordinate of $S$ is strictly positive whenever $\sigma(XU)\th$ is finite. Since $\log$ is twice continuously differentiable on the positive reals, this shows that $\log S$ is twice continuously differentiable at every finite point $(U,\th)$ as well.

The remaining term in $\ell_n$, namely $\|SZ^\sT-Y\|_{\mathrm{F}}^2$, is a polynomial in $S$ and $Z$ and is therefore smooth. Combining this with the previous two paragraphs, we conclude that the composed map $(U,\th,Z)\mapsto\ell_n\bigl(\mathsf{soft}(\sigma(XU)\th),Z\bigr)$ is two-times continuously differentiable at every finite point $(U,\th,Z)$.

In the next step, by Assumption~\ref{assumption:bounded-iterates}, the trajectory $(U_t,\th_t,Z_t)$ stays, for every $t$, in the fixed compact convex set
\[
\mathcal{K}:=\{U:\|U\|_{\mathrm{F}}\le B\}\times\{\th:\|\th\|_{\mathrm{F}}\le B\}\times\cZ.
\]
Since the composed loss is two-times continuously differentiable on all of $\mathcal{K}$, its Hessian is a continuous function of $(U,\th,Z)$, and a continuous function attains a finite maximum on a compact set. We denote this maximum by $L$. It brings us that the gradient of the $\ell_n$ loss is $L$-Lipschitz on $\mathcal{K}$.

\subsection{Proof of Lemma~\ref{lem:vanish}}\label{subsec:proof-vanish}
Throughout, $L$ and $\mathcal{K}$ denote the Lipschitz constant and compact convex set introduced in Lemma~\ref{lem:smooth}.

In the next step, since $\eta_t\to0$ by the step-size conditions, we have $\eta_t\le 1/L$ for all sufficiently large $t$. Because the gradient of the composed loss is $L$-Lipschitz on $\mathcal{K}$, for any two points $\beta,\beta'\in\mathcal{K}$, the first order expansion gives
\[
\ell_n(\beta')\le\ell_n(\beta)+\nabla\ell_n(\beta)\cdot(\beta'-\beta)+\frac{L}{2}\|\beta'-\beta\|_{\mathrm{F}}^2.
\]

We first apply this to the $(U,\th)$-update. Since $(U_{t+1},\th_{t+1})-(U_t,\th_t)=-\eta_t\nabla_{U,\th}\ell_n(\mathsf{soft}(\sigma(XU_t)\th_t),Z_t)$, plugging $\beta=(U_t,\th_t,Z_t)$ and $\beta'=(U_{t+1},\th_{t+1},Z_t)$ into the above gives
\begin{align*}
\ell_n(\mathsf{soft}(\sigma(XU_{t+1})\th_{t+1}),Z_t)
&\le
\ell_n(\mathsf{soft}(\sigma(XU_t)\th_t),Z_t)
-\eta_t\left\|\nabla_{U,\th}\ell_n(\mathsf{soft}(\sigma(XU_t)\th_t),Z_t)\right\|_{\mathrm{F}}^2\\
&\qquad+\frac{L\eta_t^2}{2}\left\|\nabla_{U,\th}\ell_n(\mathsf{soft}(\sigma(XU_t)\th_t),Z_t)\right\|_{\mathrm{F}}^2\\
&=
\ell_n(\mathsf{soft}(\sigma(XU_t)\th_t),Z_t)
-\eta_t\left(1-\frac{L\eta_t}{2}\right)\left\|\nabla_{U,\th}\ell_n(\mathsf{soft}(\sigma(XU_t)\th_t),Z_t)\right\|_{\mathrm{F}}^2\\
&\le
\ell_n(\mathsf{soft}(\sigma(XU_t)\th_t),Z_t)
-\frac{\eta_t}{2}\left\|\nabla_{U,\th}\ell_n(\mathsf{soft}(\sigma(XU_t)\th_t),Z_t)\right\|_{\mathrm{F}}^2,
\end{align*}
where the last step uses $\eta_t\le1/L$, so that $1-L\eta_t/2\ge1/2$. Here $\mathsf{soft}(\sigma(XU_{t+1})\th_{t+1})$ on the left denotes the mixture weights obtained after the $(U,\th)$-update alone, before $Z_t$ is updated to $Z_{t+1}$.

We next bring in the $Z$-update. Since $Z_{t+1}=\mathsf{proj}_\cZ(Z_t-\eta_t\nabla_Z\ell_n(\mathsf{soft}(\sigma(XU_t)\th_t),Z_t))$ and $\cZ$ is convex, the standard optimality condition of the Euclidean projection gives
\[
\nabla_Z\ell_n(\mathsf{soft}(\sigma(XU_t)\th_t),Z_t)\cdot(Z_{t+1}-Z_t)
\le
-\frac{1}{\eta_t}\|Z_{t+1}-Z_t\|_{\mathrm{F}}^2.
\]

We next use the first order expansion along with the $L$-smoothness of $\ell_n(.)$, and combining it with the above projection inequality to arrive at
\begin{align*}
\ell_n(\mathsf{soft}(\sigma(XU_{t+1})\th_{t+1}),Z_{t+1})
&\le
\ell_n(\mathsf{soft}(\sigma(XU_{t+1})\th_{t+1}),Z_t)
+\left(\frac{L}{2}-\frac{1}{\eta_t}\right)\|Z_{t+1}-Z_t\|_{\mathrm{F}}^2\\
&\le
\ell_n(\mathsf{soft}(\sigma(XU_{t+1})\th_{t+1}),Z_t)
-\frac{1}{2\eta_t}\|Z_{t+1}-Z_t\|_{\mathrm{F}}^2,
\end{align*}
where the last step again uses $\eta_t\le1/L$. Combining the two displays above, for all sufficiently large $t$ we obtain
\begin{align*}
\ell_n(\mathsf{soft}(\sigma(XU_{t+1})\th_{t+1}),Z_{t+1})
&\le
\ell_n(\mathsf{soft}(\sigma(XU_t)\th_t),Z_t)
-\frac{\eta_t}{2}
\left\|\nabla_{U,\th}
\ell_n\bigl(\mathsf{soft}(\sigma(XU_t)\th_t),Z_t\bigr)\right\|_{\mathrm{F}}^2\\
&\;-\frac{1}{2\eta_t}\|Z_{t+1}-Z_t\|_{\mathrm{F}}^2.
\end{align*}

In the next step, let $S_t=\mathsf{soft}(\sigma(XU_t)\th_t)$, and $t_0$ be such that $\eta_t\le1/L$ for every $t\ge t_0$, so that the above inequality holds for all such $t$.

Summing the above inequality over $t=t_0,\ldots,T-1$, brings us
\[
\ell_n(S_T,Z_T)
+\frac12\sum_{t=t_0}^{T-1}\eta_t\left\|\nabla_{U,\th}\ell_n(S_t,Z_t)\right\|_{\mathrm{F}}^2
+\frac12\sum_{t=t_0}^{T-1}\frac{\|Z_{t+1}-Z_t\|_{\mathrm{F}}^2}{\eta_t}
\le
\ell_n(S_{t_0},Z_{t_0}).
\]
Since each row $s_i\in\Delta_{K-1}$ is a probability vector, $\sum_{r}s_{ir}\log s_{ir}\ge-\log K$, and since $\|SZ^\sT-Y\|_{\mathrm{F}}^2\ge0$, this gives $\ell_n(S,Z)\ge-\tau\log K$ for every $(S,Z)\in\cS\times\cZ$. Applying this to $\ell_n(S_T,Z_T)$ and rearranging, the right side above is bounded by $\ell_n(S_{t_0},Z_{t_0})+\tau\log K$, a quantity independent of $T$. Letting $T\to\infty$ gives
\begin{equation}
\sum_{t\ge0}\eta_t
\left\|\nabla_{U,\th}\ell_n(S_t,Z_t)\right\|_{\mathrm{F}}^2<\infty,
\qquad
\sum_{t\ge0}\frac{\|Z_{t+1}-Z_t\|_{\mathrm{F}}^2}{\eta_t}<\infty.
\end{equation}

In the next step, we show that the gradient value $\nabla_{U,\th}\ell_n(.)$ and $\|Z_{t+1}-Z_t\|_{\mathrm{F}}/\eta_t$ converge to zero as the iterations $T$ grows to $\infty$. For simplicity, let $a_t=\left\|\nabla_{U,\th}\ell_n(S_t,Z_t)\right\|_{\mathrm{F}}$ and $b_t=\|Z_{t+1}-Z_t\|_{\mathrm{F}}/\eta_t$, we are interested in establishing $a_t,b_t$ converge to zero. For this end, let $d_t=(a_t^2+b_t^2)^{1/2}$. Using the above two bounds it is easy to observe that $\sum_{t\ge0}\eta_t d_t^2<\infty\,.$ From the definitions, it is straightforward to see that
\[
\|(U_{t+1},\th_{t+1},Z_{t+1})-(U_t,\th_t,Z_t)\|_{\mathrm{F}}=\eta_t d_t\,,
\]
We next deploy the $L$-Lipschitzness of $\ell_n(.)$ in $(U,\th, Z)$ on $\mathcal{K}$. Specifically, using the triangle inequality this gives us
\[
|a_{t+1}-a_t|
\le\bigl\|\nabla_{U,\th}\ell_n(S_{t+1},Z_{t+1})-\nabla_{U,\th}\ell_n(S_t,Z_t)\bigr\|_{\mathrm{F}}
\le L\eta_t d_t\,.
\]
we first recall this property that if $v=\mathsf{proj}_{\cZ}(u)$, then the optimality condition of $\ell_2$-norm projection gives us that for every $z\in \cZ$ we have
$\langle v-u,z-v\rangle\ge0$. 
We apply this condition for the following projection steps $\widehat Z_{t+2}:=
\mathsf{proj}_{\cZ}\!\left(
Z_{t+1}-\eta_{t+1}\nabla_Z\ell_n(S_t,Z_t)
\right)$ and  $Z_{t+1}
=
\mathsf{proj}_{\cZ}\!\left(
Z_t-\eta_t\nabla_Z\ell_n(S_t,Z_t)
\right)$. More precisely, for every $Z\in \cZ$ we obtain 
\[
\Big\langle
\nabla_Z\ell_n(S_t,Z_t)-\frac{Z_t-Z_{t+1}}{\eta_t},
Z-Z_{t+1}
\Big\rangle\ge0\,, \quad 
\Big\langle
\nabla_Z\ell_n(S_t,Z_t)
-\frac{Z_{t+1}-\widehat Z_{t+2}}{\eta_{t+1}},
Z-\widehat Z_{t+2}
\Big\rangle\ge0\,.
\]
 We next use $Z=\widehat Z_{t+2}$ in the first inequality and $Z=Z_{t+1}$ in the second. This brings us
\[
\frac{1}{\eta_{t+1}}\|
{Z_{t+1}-\widehat Z_{t+2}}
\|_{\mathrm{F}}^2
\le
\frac{1}{\eta_t}\langle
{Z_t-Z_{t+1}},
{Z_{t+1}-\widehat Z_{t+2}}
\rangle.
\]
Finally, using Cauchy's inequality we arrive at
\[
\frac{1}{\eta_{t+1}}\|
{Z_{t+1}-\widehat Z_{t+2}}
\|_{\mathrm{F}}
\le
\frac{1}{\eta_t}\|{Z_t-Z_{t+1}}\|_{\mathrm{F}}
=b_t.
\]
Substituting the definition of $\widehat Z_{t+2}$, we obtain
\[
\frac{1}{\eta_{t+1}}
\left\|Z_{t+1}
-\mathsf{proj}_{\cZ}\!\left(
Z_{t+1}-\eta_{t+1}\nabla_Z\ell_n(S_t,Z_t)
\right)\right\|_{\mathrm{F}}
\le b_t\,.
\]
We next note that we have
\begin{align*}
b_{t+1}&=\frac{1}{\eta_{t+1}}
\left\|Z_{t+1}
-\mathsf{proj}_{\cZ}\!\left(
Z_{t+1}-\eta_{t+1}\nabla_Z\ell_n(S_{t+1},Z_{t+1})
\right)\right\|_{\mathrm{F}}\\
&\le \frac{1}{\eta_{t+1}}
\left\|Z_{t+1}
-\mathsf{proj}_{\cZ}\!\left(
Z_{t+1}-\eta_{t+1}\nabla_Z\ell_n(S_{t},Z_{t})
\right)\right\|_{\mathrm{F}} + \left\|\nabla_Z\ell_n(S_{t+1},Z_{t+1})
-\nabla_Z\ell_n(S_t,Z_t)\right\|_{\mathrm{F}}\,,
\end{align*}
where the last relation follows from the $\alpha$-Lipschitzness of the mapping $g\to u-\mathsf{proj}_{\cZ}(v-\alpha g)$.
Finally, combining the above relations give us
\[
\begin{aligned}
b_{t+1}
&\le b_t+
\left\|\nabla_Z\ell_n(S_{t+1},Z_{t+1})
-\nabla_Z\ell_n(S_t,Z_t)\right\|_{\mathrm{F}}\\
&\le b_t+L\eta_t d_t\,.
\end{aligned}
\]
In summary, we got $a_{t+1}\le a_t+L\eta_t d_t$, $b_{t+1}\le b_t+L\eta_t d_t.$ We next use $a_t,b_t\le d_t$ and the boundedness of $(\eta_t)$, this yields, for some constant $C>0$,
we have $a_{t+1}^2\le a_t^2+C\eta_t d_t^2$ and 
$b_{t+1}^2\le b_t^2+C\eta_t d_t^2$.

We next focus on proving $a_t,b_t\to 0$. For this goal, let $R_t=C\sum_{j=t}^{\infty}\eta_jd_j^2$. Since $\sum_{t\ge0}\eta_td_t^2<\infty$, we have this tail sum $R_t\to0$. In addition, from the definitions, we have
\[
a_{t+1}^2+R_{t+1}\le a_t^2+C\eta_td_t^2+R_{t+1}=a_t^2+R_t\,.
\]
This implies that  $a_t^2+R_t$ is a non-increasing sequence which is also bounded below, implying that it converges. Since $R_t\to0$, therefore $a_t^2$ also converges. The same argument shows that $b_t^2$ converges. We just left to show that the converging values are zero.  Let $a_t^2\to A$ and $b_t^2\to B$, where $A,B\ge0$. If $A>0$, then $a_t^2\ge A/2$ for all sufficiently large $t$, implying $\sum_t\eta_ta_t^2=\infty$ because $\sum_t\eta_t=\infty$, which contradicts $\sum_t\eta_td_t^2<\infty$. Thus $A=0$, and the same argument gives $B=0$. This completes the proof.

\end{document}